\title[Replicability is Asymptotically Free in Multi-armed Bandits]{Replicability is Asymptotically Free in Multi-armed Bandits}
\def\eqref#1{equation~\ref{#1}}
\def\1{\bm{1}}
\def\eps{{\epsilon}}
\DeclareMathAlphabet{\mathsfit}{\encodingdefault}{\sfdefault}{m}{sl}
\SetMathAlphabet{\mathsfit}{bold}{\encodingdefault}{\sfdefault}{bx}{n}
\DeclareMathOperator*{\argmax}{arg\,max}
\newcommand{\ynote}[1]{}
\newcommand{\inote}[1]{}
\newcommand{\updated}[1]{#1}
\DeclareRobustCommand{\erase}{\bgroup\markoverwith{\textcolor{red}{\rule[.5ex]{2pt}{0.4pt}}}\ULon}
\newcommand{\add}[1]{#1}
\newcommand{\colt}[1]{{#1}}
\newtheorem{thm}{Theorem}
\newtheorem{lem}[thm]{Lemma}
\newcommand{\Confrepr}[1]{\mathrm{Conf}^{\mathrm{repr}}({#1})}
\newcommand{\Confrepra}[1]{\mathrm{Conf}^{\mathrm{repr},a}({#1})}
\newcommand{\Confrepre}[1]{\mathrm{Conf}^{\mathrm{repr},e}({#1})}
\newcommand{\Confreg}[1]{\mathrm{Conf}^{\mathrm{reg}}({#1})}
\newcommand{\Confmax}[1]{\mathrm{Conf}^{\mathrm{max}}({#1})}
\newcommand{\Confmaxa}[1]{\mathrm{Conf}^{\mathrm{max},a}({#1})}
\newcommand{\Confmaxe}[1]{\mathrm{Conf}^{\mathrm{max},e}({#1})}
\newcommand{\Comment}[1]{{\hskip3em$\rightarrow$ #1}}
\newcommand{\mA}{\mathcal{A}}
\newcommand{\mD}{\mathcal{D}}
\newcommand{\mE}{\mathcal{E}}
\newcommand{\mG}{\mathcal{G}}
\newcommand{\mH}{\mathcal{H}}
\newcommand{\mI}{\mathcal{I}}
\newcommand{\mP}{\mathcal{P}}
\newcommand{\mU}{\mathcal{U}}
\newcommand{\mX}{\mathcal{X}}
\newcommand{\bx}{\bm{x}}
\newcommand{\btheta}{\bm{\theta}}
\newcommand{\Regret}{\mathrm{Regret}}
\newcommand{\Ep}{\mathbb{E}}
\newcommand{\Prob}{\mathbb{P}}
\newcommand{\Bernoulli}{\mathrm{Bernoulli}}
\newcommand{\Real}{\mathbb{R}}
\newcommand{\Ind}{\mathbf{1}}
\newcommand{\hatmu}{\hat{\mu}}
\newcommand{\bmu}{\bm{\mu}}
\newcommand{\Unif}{\mathrm{Unif}}
\newcommand{\piapprox}{\pi^{*, \mathrm{app}}}
\newcommand{\Nlin}{N^{\mathrm{lin}}}
\newcommand{\dKL}{d_{\mathrm{KL}}}
\newcommand{\Rsubg}{\sigma}
\newcommand{\rndm}{U}
\newcommand{\It}{I_t}
\newcommand{\dec}{d}
\newcommand{\pstop}{p_{\mathrm{s}}}
\newcommand{\pstopi}[1]{p_{\mathrm{s},#1}}
\newcommand{\numrun}{300}
\newcommand{\Existing}{RASMAB }
\newcommand{\Scale}{S}
\newcommand{\Cmult}{C_{\mathrm{mul}}}
\newcommand{\mGrepr}{\mathcal{G}}
\newcommand{\mGreg}{\mathcal{G}^{\mathrm{reg}}}
\begin{document}

\maketitle

\begin{abstract}
We consider a replicable stochastic multi-armed bandit algorithm that ensures, with high probability, that the algorithm's sequence of actions is not affected by the randomness inherent in the dataset.
Replicability allows third parties to reproduce published findings and assists the original researcher in applying standard statistical tests.
We observe that existing algorithms require $O(K^2/\rho^2)$ times more regret than nonreplicable algorithms, where $K$ is the number of arms and $\rho$ is the level of nonreplication. However, we demonstrate that this additional cost is unnecessary when the time horizon $T$ is sufficiently large for a given $K, \rho$, provided that the magnitude of the confidence bounds is chosen carefully.
Therefore, for a large $T$, our algorithm only requires $K^2/\rho^2$ times smaller amount of exploration than existing algorithms.
To ensure the replicability of the proposed algorithms, we incorporate randomness into their decision-making processes.
We propose a principled approach to limiting the probability of nonreplication. This approach elucidates the steps that existing research has implicitly followed.
Furthermore, we derive the first lower bound for the two-armed replicable bandit problem, which implies the optimality of the proposed algorithms up to a $\log\log T$ factor for the two-armed case.
\end{abstract}

\begin{keywords}
  multi-armed bandits, algorithmic stability, reproducible learning, stochastic bandits
\end{keywords}

\section{Introduction}\label{sec_intro}

\colt{
For scientific findings to be considered valid and reliable, the experimental process must be repeatable and yield consistent results and conclusions across multiple repetitions. A significant issue in many scientific fields is the reproducibility crisis, highlighted by a 2016 survey published in Nature \citep{Baker2016}, which found that more than 70\% of researchers have tried and failed to replicate another researcher's experiments.

Similar issues have occasionally been noted in the field of machine learning research. In recent years, major machine learning conferences have started a series of reproducibility programs (e.g., the NeurIPS 2019 Reproducibility Program \citep{repr_nips2019}).
Nevertheless, the current trend toward rapid publications in the machine learning field amplifies concerns about reproducibility.
}

In particular, this paper considers the reproducibility in the domain of sequential learning.
We consider the \emph{multi-armed bandit (MAB)} problem \citep{Robbins1952,Lairobbins1985}. This problem is one of the most well-known instances of sequential decision-making problems in uncertain environments. The problem involves conceptual entities called \emph{arms}, of which there are a total of $K$.
At each round $t=1,2,\dots$, the forecaster selects one of the $K$ arms and receives a corresponding reward.
The forecaster's objective is to maximize the cumulative reward over these rounds.
Maximizing this cumulative reward is equivalent to minimizing \emph{regret}, which is the gap between the forecaster's cumulative reward and the reward of the best arm.
The initial investigation of this problem took place within the field of statistics~\citep{Thompson1933,Robbins1952}.

In the past two decades, the research on this problem has been driven by numerous applications, including website optimization \citep{DBLP:conf/www/LiCLS10}, A/B testing \citep{KomiyamaHN15} as well as healthcare intervention \citep{collinsMultiphaseOptimizationStrategy2007}.

Several algorithms have proven to be effective. Notably, the upper confidence bound \citep[UCB,][]{Lairobbins1985,auer2002} and Thompson sampling \citep[TS,][]{Thompson1933} are widely recognized. Research has shown that these algorithms are asymptotically optimal~\citep{klucb_aos,agrawal2012,kaufmann2012} in terms of regret, which means that these efficient algorithms exploit accumulated reward information to the fullest extent possible.

\subsection{Replicability}

One possible drawback of such efficiency is the algorithm's lack of reproducibility by other parties,
which can make replicating results challenging.
\updated{
This aspect is particularly important because, unlike most statistical learning algorithms, a sequential learning algorithm not only learns from existing data but also actively queries new data. The data-querying process is highly adaptive. For example, a UCB algorithm selects the arm with the largest UCB index, which is calculated based on all previously observed data points.
Consequently, even a small modification in the initial data points can lead to a significant change in outcomes.
}
To illustrate this, consider the following example:

\begin{example}{\rm (Crowdsourcing \citep{pmlr-v30-Abraham13,DBLP:conf/atal/Tran-ThanhHRRJ14})}\label{expl_one}
Imagine a company conducting a crowd-based A/B testing with $K$ items. In this scenario, each round $t$ corresponds to a worker visiting their website, and each reward represents the feedback provided by the worker, such as a five-star rating.
The company runs a bandit algorithm to maximize the total reward.
\end{example}
\updated{

It is often the case that the company is reluctant to disclose the dataset due to possible privacy breach \citep{NarayananS08}.
Even if a third party can set up an environment that is sufficiently similar to the original, a slight change in the reward of the first few data points can alter subsequent item allocations and significantly affect the performance of a bandit algorithm. In this sense, reproducing sequential algorithms is generally challenging.
}

It is well-known that the standard frequentist confidence interval no longer holds for the results of the multi-armed bandit problem because such an adaptive algorithm violates the assumption of statistical testing that the number of samples is fixed.
Namely, let $N_i$ be the number of samples for each arm $i \in \{1,2,\dots,K\}$. Although each run is for fixed duration $T = \sum_i N_i$, the values of $N_i$ can differ across multiple runs.
In general, mean statistics derived from the multi-armed bandit algorithm are downward biased~\citep{NIPS2013_801c14f0,NEURIPS2019_65b1e92c}, and this bias persists even for a large sample scheme~\citep{Lai1982LeastSE}, invalidating the use of standard confidence intervals even for asymptotics~\citep{DeshpandeMST18}.

\updated{The fundamental notion of reproducibility in this context is \textit{replicability}~\citep{impagliazzo2022}. In simple terms, an algorithm is replicable if a different party with a different dataset is able to replicate the same results.\footnote{\url{https://www.acm.org/publications/policies/artifact-review-and-badging-current}}
In sequential learning, ``the same'' refers to maintaining identical sample allocations.
Replication of results across different datasets enables us to conduct statistical tests.}
Suppose that we want to apply a statistical test, such as $z$-test to verify that arm $1$ is better than arm $2$ with a statistic $z = \sqrt{N_1 + N_2}(\hat{\mu}_1 - \hat{\mu}_2)$, where $N_1,N_2$ are the number of samples and $\hat{\mu}_1,\hat{\mu}_2$ are empirical means. The standard confidence bound states that if $|z| > 1.96$ then we can support the alternative hypothesis, which indicates that one arm is significantly better than the other arm with confidence level $p=0.05$. This analysis is invalid for the data generated by a multi-armed bandit algorithm, since $N_1, N_2$ are random variables that depend on the rewards. However, when we use a replicable bandit algorithm with non-replication probability at most $\rho = p/2$, then we can guarantee that $N_1, N_2$ are identical with probability at least $1-p/2$, and if $|z| > 2.25$, which corresponds to the significance level of $p/2$, we can support the alternative hypothesis with $p = p/2 + p/2$. Namely, the probability of false finding under null hypothesis $H_0$ is bounded as:
\[
\Prob[|z| > 2.25 \mid H_0] \le p,
\]
where the probability is taken over the randomness of the algorithm and data. Here, the value $2.25$ corresponds to the threshold of $z$-test with $p = 0.05/2$.

\begin{table*}[t!]
\begin{center}
\caption{Comparison of regret bounds in the $K$-armed and linear bandit problems. Means $\{\mu_i\}_{i\in[K]}$ are sorted in descending order, $\Delta_i = \mu_1 - \mu_i$ is the suboptimality gap of the arm $i$, and $\Delta = \Delta_2$. RSE has multiple bounds, which implies that it has the smallest regret among the bounds. The lower bound is derived for the case with two arms. Here, $\tilde{O}$ omits a polylog factor in $d,K,T$.}
\vspace{0.5em}
\label{tbl_comp}
\begin{tabular}{lll}
Problem & \cite{esfandiari2023replicable} & This work \\ \toprule
$K$-armed &
\begin{tabular}{@{}l@{}}
$O\left( \sum_{i=2}^K \frac{1}{\Delta_i} \frac{K^2 \log T}{\rho^2}
\right)$
\end{tabular}

&
\begin{tabular}{@{}l@{}}
$O\left( \sum_{i=2}^K \frac{\Delta_i}{\Delta^2} \left(\log T + \frac{\log(K (\log T)/\rho)}{\rho^2}\right)
\right)$
\\
\ \ \ \ \ \ (REC (Theorem \ref{thm_retc}) and RSE (Theorem \ref{thm_rse}))\\
$
O\left( \sum_{i=2}^K \frac{1}{\Delta_i} \left(\log T + \frac{K^2 \log(K(\log T)/\rho) }{\rho^2}\right)
\right)$
\\
\ \ \ \ \ \
(RSE, Theorem \ref{thm_rse})\\
$O\left(
\sqrt{K T
\left(
\log T + \frac{K^2 \log(K (\log T)/\rho)}{\rho^2}
\right)
}
\right)$
\\
\ \ \ \ \ \
(RSE, Theorem \ref{thm_rse})\\
$\Omega\left(
\colt{\frac{1}{\Delta} \left(\frac{1}{\rho^2 \log((\rho\Delta)^{-1})}\right)}
\right)$\\
\ \ \ \ \ \ (Lower bound for $K=2$, Theorem \ref{thm_lower_twoarmed})
\end{tabular}
\\\\\midrule
Linear &
\begin{tabular}{@{}l@{}}
$\tilde{O}\left( \frac{K^2 \sqrt{dT}}{\rho^2}
\right)$
\end{tabular}
&
\begin{tabular}{@{}l@{}}
$
O\left( \frac{d}{\Delta^2} \left(\log T + \frac{\log(K (\log T)/\rho)}{\rho^2}\right)
\right)
$
(RLSE, Theorem \ref{thm_rlse})
\\
$
\tilde{O}\left(
\frac{d K}{\rho} \sqrt{T}
\right)
$
(RLSE, Theorem \ref{thm_rlse})
\end{tabular}
\\
\bottomrule
\end{tabular}
\end{center}
\end{table*}
\subsection{Main results}
\updated{
The study of replicability in the literature is limited, as reviewed in Section \ref{subsec_related}. Existing research suggests that achieving replicability often comes at the cost of sample inefficiency; higher levels of replication require a larger number of samples. In the context of multi-armed bandit algorithms, a smaller non-replication probability $\rho$ is considered to result in higher regret.
}
In particular,~\cite{esfandiari2023replicable} is the sole work that provides algorithms studying replicability under the same setting.\footnote{\updated{\cite{zhang2024replicablebanditsdigitalhealth} also considered a more stateful version of bandit problem and derived asymptotic no-regret learning.}}
\add{
The regret of the algorithms therein is
\begin{equation}\label{ineq_esfandiari}
O\left( \sum_{i=2}^K \frac{1}{\Delta_i} \frac{K^2 \log T}{\rho^2}
\right),
\end{equation}
which is $K^2/\rho^2$ times larger than that of nonreplicable algorithms, such as UCB and TS. Here, $K$ is the number of arms, $T$ is the number of rounds, $\Delta_i$ is the suboptimality gap for arm $i$, and $\rho$ is the probability of nonreplication.
Although the additional factor might appear necessary for the cost of replicability, it is somewhat disappointing to pay such a cost for replication. Upon closer examination of the problem, we discovered that the cost for the replicability can be decoupled from the original regret. Namely, we obtain regret of
\begin{equation}\label{ineq_mainbound}
O\Biggl(
\hspace{-2em}
\underbrace{\sum_{i=2}^K \frac{1}{\Delta_i} \log T}_{\text{Regret for nonreplicable bandits}}
\hspace{-2em}
+
\ \ \ \underbrace{\sum_{i=2}^K \frac{1}{\Delta_i} \frac{K^2 \log(K (\log T)/\rho)}{\rho^2}}_{\text{Regret for replicability}}
\
\Biggr).
\end{equation}
This \eqref{ineq_mainbound} is encouraging in the following sense.
When we fix $K, \rho$ and consider sufficiently large $T$, then the $O(\log T)$ first term dominates the $O(\log\log T)$ second term. If we ignore the second term, we obtain a regret of $O(
\sum_{i=2}^K \frac{\log T}{\Delta_i})$, which matches the optimal rate of regret in stochastic bandits. Formally,
\[
\limsup_{T \rightarrow \infty} \frac{\text{Regret of \eqref{ineq_mainbound}}}{\text{Regret of \eqref{ineq_esfandiari}}}
\le \frac{\rho^2}{K^2},
\]
and the regret of \eqref{ineq_mainbound} is rate-optimal, meaning that it matches any (possibly non-replicable) algorithm, such as UCB and TS, up to a universal constant factor.
This improvement is attributed to how the factors $\log T$ and $K^2/\rho^2$ are decomposed.
}
To elaborate further, the algorithmic contributions of this work are outlined as follows.
First, after the problem setup (Section \ref{sec_setup}), we introduce a principled framework for bounding nonreplication probability, which is often used implicitly in the literature (Section \ref{subsec_genbound}). The framework is quite general and can be extended into a more general class of sequential learning problems, such as episodic reinforcement learning.
We then introduce the Replicable Explore-then-Commit (REC) algorithm (Section \ref{subsec_algone}), which has a regret bound of
\begin{equation}\label{ineq_recdisp}
O\left(
\sum_{i=2}^K \frac{\Delta_i}{\Delta^2} \log T + \sum_{i=2}^K \frac{\Delta_i}{\Delta^2} \frac{\log(K (\log T)/\rho)}{\rho^2}
\right).
\end{equation}
The advantage of~\eqref{ineq_recdisp} is that both terms are $\tilde{O}(K)$, which is better than \eqref{ineq_mainbound} when the suboptimality gaps for each arm are within a constant factor.
However, depending on the suboptimality gaps,~\eqref{ineq_recdisp} can be arbitrarily worse compared with the existing bound of~\eqref{ineq_esfandiari}. To deal with this issue, we introduce the Replicable Successive Elimination (RSE) algorithm (Section \ref{sec_algtwo}), whose regret bound is the minimum of \eqref{ineq_mainbound} and \eqref{ineq_recdisp}.
Furthermore, based on these bounds, we derive a distribution-independent regret bound for the replicable $K$-armed bandit problem.
Moreover, we derive the first lower bound for the replicable $K$-armed bandits (Section \ref{sec_lower}) that implies the optimality of \eqref{ineq_mainbound} for $K=2$ up to a logarithmic factor of $K, \rho, \Delta$, and $\log T$.
Finally, we consider the linear bandit problem (Section \ref{sec_linear}), in which each of the $K$ arms is associated with $d < K$ features. We show that a rather straightforward modification of RSE yields an algorithm whose regret scales linearly in $d$.
The performances of REC and RSE are supported by simulations (Section \ref{sec_sim}).
A comparison of existing algorithms and our algorithms is summarized in Table \ref{tbl_comp}.
\section{Problem Setup}\label{sec_setup}
We consider the finite-armed stochastic bandit problem with $T$ rounds. At each round $t$, the forecaster who adopts an algorithm selects one of the arms $\It \in [K] := \{1,2,3,\dots,K\}$ and receives the corresponding reward $r_t$.
Each arm $i \in [K]$
has an (unknown) mean parameter $\mu_i \in \mathbb{R}$.
Here, $\mu_i \in [a, b]$ for some $a, b \in \Real$ and we let $\Scale = b-a$. We assume $\Scale$ to be a constant.
The reward at round $t$ is
$
r_t = \mu_{\It} + \eta_t,
$
where $\eta_t$ is a $\Rsubg$-subgaussian
random variable that is independently drawn at each round.\footnote{A random
 variable $\eta$ is $\Rsubg$-subgaussian if $\Ep[\exp(\lambda \eta)] \le \exp(\sigma^2 \lambda^2/2)$ for any $\lambda$. For example, a zero-mean Gaussian random variable with variance $\sigma^2$ is $\sigma$-subgaussian.}
The subgaussian assumption is quite general and is not limited to Gaussian random variables. Any bounded random variable is subgaussian, and thus, it is capable of representing binary events (yes or no) and ordered choice (e.g., 5-star rating).
For subgaussian random variables, the following inequality holds.
\begin{lem}[Concentration inequality]\label{lem_subg_concentration}
Let $X_1,X_2,\ldots,X_N$ be $N$ independent (zero-mean) $\Rsubg$-subgaussian random variables, and $\hat{\mu}_N = (1/N)\sum_i X_i$ be the empirical mean. Then, for any $\eps>0$ we have
\begin{equation}\label{ineq_subg_concentration}
\Prob\left[|\hat{\mu}_N| \ge \eps \right]
\le
2\exp\left(\frac{-\eps^2 N}{2 \sigma^2}\right).
\end{equation}
\end{lem}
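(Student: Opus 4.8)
The plan is to run the standard Chernoff (exponential moment) argument. First I would reduce the statement about the empirical mean to one about the sum $S_N := \sum_{i=1}^N X_i$, noting that $\{|\hatmu_N| \ge \eps\}$ is exactly the event $\{|S_N| \ge N\eps\}$, so it suffices to bound $\Prob[S_N \ge N\eps]$ and $\Prob[S_N \le -N\eps]$ and add them.

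Next I would control the moment generating function of $S_N$. By independence, $\Ep[\exp(\lambda S_N)] = \prod_{i=1}^N \Ep[\exp(\lambda X_i)] \le \prod_{i=1}^N \exp(\Rsubg^2 \lambda^2/2) = \exp(N \Rsubg^2 \lambda^2 / 2)$ for every $\lambda \in \Real$, using the definition of $\Rsubg$-subgaussianity term by term. In other words, $S_N$ is itself $\sqrt{N}\,\Rsubg$-subgaussian.

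Then, for $\lambda > 0$, Markov's inequality applied to $\exp(\lambda S_N)$ gives $\Prob[S_N \ge N\eps] \le \exp(-\lambda N\eps)\,\Ep[\exp(\lambda S_N)] \le \exp\!\left(-\lambda N\eps + N\Rsubg^2\lambda^2/2\right)$. The one genuine computation is to optimize the exponent over $\lambda$: the quadratic $-\lambda N\eps + N\Rsubg^2\lambda^2/2$ is minimized at $\lambda = \eps/\Rsubg^2 > 0$, yielding the bound $\exp(-\eps^2 N/(2\Rsubg^2))$. Applying the same argument to $-S_N$ (equivalently, to the $\Rsubg$-subgaussian variables $-X_i$, or taking $\lambda < 0$) gives the identical bound for $\Prob[S_N \le -N\eps]$, and a union bound over the two tails produces the factor $2$ and the claimed inequality \eqref{ineq_subg_concentration}.

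There is no real obstacle here; the argument is entirely routine. The only point deserving care is that the subgaussian MGF bound holds for \emph{all} real $\lambda$ (so that tensorization over independent coordinates and the later choice $\lambda = \eps/\Rsubg^2$ are both legitimate), and that the two one-sided tail bounds are combined correctly to get the constant $2$ rather than a worse factor.
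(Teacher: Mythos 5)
Your proof is correct and is precisely the standard Chernoff argument that this lemma rests on; the paper states it as a known fact without proof, and your derivation (tensorizing the subgaussian MGF bound, optimizing at $\lambda = \eps/\Rsubg^2$, and union-bounding the two tails) is the canonical justification. No gaps.
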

For ease of discussion, we assume that the mean reward of each arm is distinct. In this case, we can assume $\mu_1 > \mu_2 > \dots > \mu_K$ without loss of generality. Of course, an algorithm cannot exploit this ordering.
A quantity called regret is defined as follows:
\begin{equation}\label{def_regret}
\Regret(T)
:= \sum_{t \in [T]} (\mu_1 - \mu_{\It})
= \sum_{i \in [K]} \Delta_i N_i(T),
\end{equation}
where $\Delta_i = \mu_1 - \mu_i$ and $N_i(T)$ is the number of draws on arm $i$ during the $T$ rounds. We also denote $\Delta = \min_{i\ge 2} \Delta_i = \Delta_2$.
The performance of an algorithm is measured by the expected regret.
Before discussing the replicability, we formalize the notion of dataset in a sequential learning problem because the reward $r_t$ in the aforementioned procedure is drawn adaptively upon the choice of the arm $I_t$. The fact that each noise term $\eta_t$ is drawn independently enables us to reformulate the problem as follows:
\begin{definition}{(Dataset)}\label{def_dataset}
The process of the multi-armed bandit problem is equivalent to the following:
First, draw a matrix $(r_{i,n})_{i\in[K],n\in[T]}$, where $r_{i,n} = \mu_i + \eta_{i,n}$ and $\eta_{i,n}$ is an independent $\sigma$-subgaussian random variable.
Second, run a multi-armed bandit problem. Here, $r_t$ is the $(I_t,N_{I_t}(t))$-entry of the matrix.
We call this matrix a \textit{dataset} and denote it as $\mD$. We call $(\mu_i)_{i\in[K]}$ a data-generating process or a model.
\end{definition}
Following~\citet{esfandiari2023replicable}, we consider the class of replicable algorithms that, with high probability, gives exactly the same sequence of selected arms for two independent runs.
\begin{definition}{\rm ($\rho$-replicability,~\citealt{impagliazzo2022,esfandiari2023replicable})}\label{rem_rhorepr}
For $\rho \in [0,1]$, an algorithm is $\rho$-replicable
if,
\begin{equation}
\Prob_{U, \mathcal{D}^{(1)}, \mathcal{D}^{(2)}}\left[
(I^{(1)}_1, I^{(1)}_2,\dots,I^{(1)}_T) =
(I^{(2)}_1, I^{(2)}_2,\dots,I^{(2)}_T)
\right]
\ge 1 - \rho,
\end{equation}
where $U$ represents the internal randomness, and $\mathcal{D}^{(1)}, \mathcal{D}^{(2)}$ are the two datasets that are drawn from the same data-generating process $\{\mu_i\}_{i\in[K]}$.
Here, $(I^{(1)}_1, I^{(1)}_2,\dots,I^{(1)}_T)$ (resp. $(I^{(2)}_1, I^{(2)}_2,\dots,I^{(2)}_T)$) is the sequence of draws with $U, \mathcal{D}^{(1)}$ (resp. $U, \mathcal{D}^{(2)}$).
\end{definition}
We may consider $\rndm$ as a sequence of uniform random variables defined on the interval $[0,1]$. This sequence is utilized by the algorithm to control its behavior, ensuring it selects the same sequence of arms for different datasets, denoted as $\mathcal{D}^{(1)}$ and $\mathcal{D}^{(2)}$.
The value $\rho$ corresponds to the probability of nonreplication. The smaller $\rho$ is, the more likely the sequence of actions is replicated.
By definition, any algorithm is $1$-replicable, and no nontrivial algorithm is $0$-replicable. We consider $\rho \in (0,1)$ as an exogenous parameter, and our goal is to minimize the regret subject to the $\rho$-replicability.
\section{General Bound of the Probability of Nonreplication}\label{subsec_genbound}
It is not very difficult to see that a standard bandit algorithm, such as UCB, lacks replicability. UCB, in each round, compares the UCB index of the arms, and thus, a minor change in the dataset can alter the sequence of draws $I_1,I_2,\dots,I_T$.
Thus, the design of a replicable algorithm must deviate significantly from standard bandit algorithms. This section presents a general framework for bounding nonreplicable probability in the multi-armed bandit problem. We believe that this framework can be applied to many other sequential learning problems.
First, a replicable algorithm should limit its flexibility by introducing phases.\footnote{Limiting the flexibility of the bandit algorithm has been studied in the literature on batched bandits. Section \ref{subsec_related} elaborates on the relation between batched bandits and replicable bandits.}
\begin{definition}\label{def_phases}
A set of phases is a consecutive partition of rounds $[T]$. Namely, phase $p$ is a consecutive subset of $[T]$, and the first round of phase $p+1$ follows the last round of phase $p$, and each round belongs to one of the phases. We define $P$ to be the number of phases.
\end{definition}
The sequence of draws $I_1,I_2,\dots,I_T$ can only branch at the end of each phase, as formalized in the following definitions.
\begin{definition}{\rm (Randomness)}\label{def_randomness}
The randomness $U$ consists of the randomness for each individual phase. Namely, $U = (U_1, U_2, \dots, U_P)$.
\end{definition}
\begin{definition}{\rm (Good events, decision variables, and decision points)}\label{def_decision}
We call the end of the final round of each phase a decision point, which we denote as $T_p$.
For each $p \in [P]$, we consider the history $\mH_p$ to be the set of all results up to the final round $T_p$ of phase $p$.
Namely,
\begin{equation}
\mH_p = (I_1, r_1, I_2, r_2, \dots, I_{T_p}, r_{T_p}) \cup (U_1,U_2,\dots,U_p).
\end{equation}
Each phase $p$ is associated with good event $\mG_p(\mH_p)$, which is a binary function of $\mH_p$.
Each phase $p$ is associated with a set of \textit{decision variables} $\dec_{p}$. Decision variables take discrete values and are functions of $\mH_p$.
Moreover,
the sequence of draws on the next phase $\{I_{T_p+1},I_{T_p+2},\dots,I_{T_{p+1}}\}$ is uniquely determined by the decision variables $\dec_1,\dec_2,\dots,\dec_p$.
\end{definition}
Intuitively speaking, the good events correspond to the concentration of statistics with its probability we can bound with concentration inequalities (by Lemma~\ref{lem_subg_concentration}).
The set of decision variables uniquely determines the sequence of draws.
Note that each phase can be associated with more than one decision variable.
To obtain intuition, we consider the following example.
\begin{example}{\rm (A replicable elimination algorithm, Alg 2. in \cite{esfandiari2023replicable})}
At the end of each phase, the algorithm obtains an empirical estimate of $\mu_i$ for each arm. It tries to eliminate suboptimal arm $i$, and the corresponding decision variable is
\begin{equation}
\dec_{p,i} = \Ind[\max_j \mathrm{LCB}_j(p) \ge \mathrm{UCB}_i(p)],
\end{equation}
where $\mathrm{UCB}_i(p), \mathrm{LCB}_i(p)$ are the (randomized) upper/lower confidence bounds of the arm $i$ at phase $p$. The set of decision variables at phase $p$ is $d_p = (d_{p,i})_i$.
Here, $\Ind[\mE]$ is $1$ if event $\mE$ holds or $0$ otherwise. Under good events, by randomizing the confidence bounds with $U_p$, it bounds the probability of nonreplication of each decision variable.
\end{example}
In the following, we define the nonreplication probability for each component.
\begin{definition}{\rm (Probability of bad event)}\label{def_good}
Let $\mG = \bigcap_p \mG_p$ be the global good event and
$
\rho^G = \Prob_{U, \mD}\left[\mG^c\right]
$
be the probability of the complement event $\mG^c$.
\end{definition}
\begin{definition}{\rm (Nonreplication probability of a decision variable)}\label{def_nonrepr}
Let $\dec_{p,i}$ be the $i$-th decision variable at phase $p$.
Its nonreplication probability $\rho^{(p,i)}$ is defined as
\begin{equation*}
\rho^{(p,i)} := \Prob_{U, \mD^{(1)},\mD^{(2)}}\biggl[
\dec_{p,i}^{(1)} \ne \dec_{p,i}^{(2)}
\;\Biggl|\;
\bigcap_{p'=1}^{p-1} \left\{
\dec_{p'}^{(1)}=\dec_{p'}^{(2)}, \mG_{p'}^{(1)}, \mG_{p'}^{(2)} \right\},
\mG_p^{(1)}, \mG_p^{(2)}
\biggr],
\end{equation*}
where we use superscripts $(1)$ and $(2)$ for the corresponding variables on the two datasets $\mD^{(1)},\mD^{(2)}$.
\end{definition}
\begin{thm}{\rm (Replicability of an algorithm)}\label{thm_repr_compose}
An algorithm that involves phases (Definition \ref{def_phases}) is $\rho$-replicable with
\begin{equation}
\rho \le 2 \rho^G + \sum_{p,i} \rho^{(p,i)}.
\end{equation}
\end{thm}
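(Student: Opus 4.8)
The plan is to bound the non-replication event by a union bound over all the ``failure modes'' that can cause the two runs to diverge: either some good event fails on one of the two datasets, or all good events hold on both runs but some decision variable disagrees. Recall from Definition~\ref{def_decision} that the sequence of draws on phase $p+1$ is a deterministic function of $\dec_1, \dots, \dec_p$; hence if $\dec_{p'}^{(1)} = \dec_{p'}^{(2)}$ for all $p' \in [P]$, the two runs produce identical draws $(I_1^{(1)}, \dots, I_T^{(1)}) = (I_1^{(2)}, \dots, I_T^{(2)})$. So it suffices to upper bound $\Prob_{U, \mD^{(1)}, \mD^{(2)}}[\exists p', \dec_{p'}^{(1)} \ne \dec_{p'}^{(2)}]$.

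First I would define the ``all-good-so-far and agreeing-so-far'' event $\mE_p := \bigcap_{p'=1}^{p} \{\dec_{p'}^{(1)} = \dec_{p'}^{(2)}, \mG_{p'}^{(1)}, \mG_{p'}^{(2)}\}$, with $\mE_0$ the whole space. The first non-replication, if it occurs, happens at some first phase $p$: either $\mE_{p-1}$ holds but one of $\mG_p^{(1)}, \mG_p^{(2)}$ fails, or $\mE_{p-1}$ and $\mG_p^{(1)}, \mG_p^{(2)}$ all hold but $\dec_p^{(1)} \ne \dec_p^{(2)}$. Taking a union bound over $p$ and over these two cases gives
\begin{equation*}
\rho \le \sum_p \Prob[\mE_{p-1} \cap ((\mG_p^{(1)})^c \cup (\mG_p^{(2)})^c)] + \sum_p \Prob\Bigl[\mE_{p-1} \cap \mG_p^{(1)} \cap \mG_p^{(2)} \cap \{\dec_p^{(1)} \ne \dec_p^{(2)}\}\Bigr].
\end{equation*}
For the first sum, I would drop the conditioning event $\mE_{p-1}$ (monotonicity: $\Prob[A \cap B] \le \Prob[B]$) and then use a union bound $\Prob[(\mG_p^{(1)})^c \cup (\mG_p^{(2)})^c] \le \Prob[(\mG_p^{(1)})^c] + \Prob[(\mG_p^{(2)})^c] = 2\rho^G_p$, using that $\mD^{(1)}$ and $\mD^{(2)}$ are identically distributed and that $\rho^G_p$ as in Definition~\ref{def_good} is the marginal bad-event probability. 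For the second sum, I would further split the disagreement event $\{\dec_p^{(1)} \ne \dec_p^{(2)}\}$ into $\bigcup_i \{\dec_{p,i}^{(1)} \ne \dec_{p,i}^{(2)}\}$ over the individual decision variables, union bound over $i$, and recognize that
\begin{equation*}
\Prob[\mE_{p-1} \cap \mG_p^{(1)} \cap \mG_p^{(2)} \cap \{\dec_{p,i}^{(1)} \ne \dec_{p,i}^{(2)}\}] \le \Prob[\dec_{p,i}^{(1)} \ne \dec_{p,i}^{(2)} \mid \mE_{p-1}, \mG_p^{(1)}, \mG_p^{(2)}] = \rho^{(p,i)}
\end{equation*}
exactly matches Definition~\ref{def_nonrepr}. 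Summing the two contributions yields $\rho \le 2\sum_p \rho^G_p + \sum_{p,i} \rho^{(p,i)}$.

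The main subtlety — not an obstacle so much as a point requiring care — is making the ``first phase at which something goes wrong'' decomposition rigorous: one must verify that the events assigned to distinct phases $p$ in the union bound are genuinely exhaustive of the complement of full replicability, i.e.\ that if every $\mG_{p'}^{(1)}, \mG_{p'}^{(2)}$ holds and every $\dec_{p'}$ agrees then the draw sequences coincide, which is precisely the determinism clause of Definition~\ref{def_decision}. A second point is the direction of the conditioning inequality: $\Prob[A \cap C] \le \Prob[A \mid C]$ holds trivially since $\Prob[C] \le 1$, so replacing the joint probability with the conditional one (as in the definitions of $\rho^G_p$ and $\rho^{(p,i)}$) only loosens the bound, which is all we need. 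Everything else is a routine union bound, so I do not anticipate real difficulty.
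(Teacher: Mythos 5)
Your proposal is correct and follows essentially the same route as the paper: a union bound over the two datasets' bad events giving $2\sum_p \rho^G_p$, a first-phase-of-disagreement decomposition for the decision variables, the inequality $\Prob[A\cap C]\le \Prob[A\mid C]$ to pass to the conditional quantities of Definition~\ref{def_nonrepr}, and a union bound over $i$ within each phase. The only (immaterial) difference is organizational: the paper peels off all bad events $\cup_p \mG_p^c$ globally at the start, whereas you fold the good-event failures into the per-phase first-failure decomposition; both yield the identical bound.
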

In summary, Theorem \ref{thm_repr_compose} enables us to decompose the nonreplication probability $\rho$ into the sum of the nonreplication probability due to the global bad event $\mG^c$ and decision variables $(\rho^{(p,i)})_{p,i}$.
\section{An $O(K)$-regret Algorithm for $K$-armed Bandit Problem}\label{subsec_algone}
This section introduces the Replicable Explore-then-Commit (REC, Algorithm \ref{alg_retc}), an $O(K)$-regret algorithm for the $K$-armed bandit problem.
This algorithm consists of multiple exploration phases and an exploitation period.
At phase $p$, if the algorithm is in the exploration period, we draw each arm up to
\[
N_p := \left\lceil 8 a^{2p} \sigma^2 \right\rceil
\]
times, where $a > 1$ is an algorithmic parameter.
The last round of each phase is a decision point, where the algorithm decides whether it terminates the exploration period or not.
For this aim, it utilizes the minimum suboptimality gap estimator
$
\hat{\Delta}(p) = \max_i \hatmu_i(p) - \max_i^{(2)} \hatmu_i(p),
$
where $\max_i^{(2)}$ denotes the second largest element.
This $\hat{\Delta}(p)$ is compared with the confidence bound. There are two keys in the confidence bound $(2 + U_p) \Confmax{p}$. First, it involves a random variable $U_p \sim \Unif(0, 1)$. Second, it is defined as the maximum of $\Confreg{p}$ and $\Confrepr{p}$. Intuitively, $\Confrepr{p}$ ensures the good event for replicability (c.f., Section \ref{subsec_genbound}), while $\Confreg{p}$ is for small regret.
Here, $\eps_p := \frac{1}{a^p}$,
\begin{align}
\Confrepr{p} &:= \eps_p \sqrt{ \frac{\log(18 K^2P/\rho)}{\rho^2} },\\
\Confreg{p} &:= \eps_p \sqrt{ \log(KTP) },
\end{align}
and $\Confmax{p} := \max(\Cmult \Confrepr{p}, \Confreg{p})$.
Here, $\Cmult \in \Real^+$ is an algorithmic parameter and $P = \min_p \{p: N_p \ge T\} = O(\log T)$ is the maximum number of phases.
\begin{algorithm}[t]
 \For{$p=1,2,\ldots,P$}{
  Draw shared random variable $U_p \sim \Unif(0, 1)$.\\
  Draw each arm for $N_p$ times.\\
  \If{$\hat{\Delta}(p) \ge \add{(2 + U_p) \Confmax{p}}$}{
    fix the estimated best arm $\hat{i}^* \in \argmax_i \hat{\mu}_i(p)$ and break the loop.
   }
 }
Draw arm $\hat{i}^*$ for the rest of the rounds.
 \caption{Replicable Explore-then-Commit (REC)}\label{alg_retc}
\end{algorithm}
The following theorem guarantees the replicability and performance of Algorithm \ref{alg_retc}.
\begin{thm}\label{thm_retc}
Let $\Cmult \ge 9/4$ and $a \ge 2$.
Assume that $\rho \le 1/2$.
Then, Algorithm \ref{alg_retc} is $\rho$-replicable and the following regret bound holds:
\begin{equation}\label{ineq_reg_retc}
\add{
\Ep[\Regret(T)]
= O\left( \sum_{i=2}^K \frac{\Delta_i}{\Delta^2} \left(\log T + \frac{\log(K(\log T)/\rho)}{\rho^2}\right)
\right).
}
\end{equation}
\end{thm}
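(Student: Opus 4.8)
The plan is to prove the two claims of Theorem~\ref{thm_retc} — $\rho$-replicability and the regret bound — separately, using the framework of Section~\ref{subsec_genbound} for the first and a standard explore-then-commit analysis for the second, with the twist that the exploration length is controlled by $\Confmax{\cdot}$ rather than $\Confrepr{\cdot}$.

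\textbf{Replicability.} I would instantiate Theorem~\ref{thm_repr_compose}. The phases are the exploration phases $p=1,\dots,P$ (the final commit period contributes no new branching). For each phase $p$ the single decision variable is $\dec_p = \Ind[\hat\Delta(p) \ge (2+U_p)\Confmax{p}]$, so the sequence of draws is determined by $\dec_1,\dots,\dec_p$. Define the good event $\mG_p$ to be the event that every arm's empirical mean after $N_p$ pulls is within $\Confreg{p}$ of its true mean, i.e.\ $|\hatmu_i(p)-\mu_i|\le\Confreg{p}$ for all $i$; by Lemma~\ref{lem_subg_concentration} with $N = N_p \ge 8a^{2p}\sigma^2$ and $\eps = \Confreg{p} = \eps_p\sqrt{\log(KTP)}$, each arm fails with probability at most $2\exp(-\eps^2 N_p/(2\sigma^2)) \le 2/(KTP)^{4}$ or so, and a union bound over $K$ arms gives $\rho^G_p = O(1/(T P)^{\Omega(1)})$, hence $2\sum_p \rho^G_p$ is negligible — well below $\rho/2$. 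For the decision variables I would use the standard ``randomized threshold'' argument: conditioned on $\mG_p^{(1)}\cap\mG_p^{(2)}$ and on agreement of all earlier decisions, $\hat\Delta^{(1)}(p)$ and $\hat\Delta^{(2)}(p)$ differ by at most $4\Confreg{p}\le 4\Confmax{p}$ (four empirical means, two per dataset, each within $\Confreg{p}$ of the truth), while $U_p$ is shared; so the indicator $\Ind[\hat\Delta(p)\ge(2+U_p)\Confmax{p}]$ disagrees only if $(2+U_p)\Confmax{p}$ lands in an interval of length $4\Confreg{p} \le 4\Confmax{p}/\Cmult$... here I need to be a little careful: the randomization multiplier $U_p$ ranges over an interval of length $\Confmax{p}$, so the disagreement probability is at most $(4\Confreg{p})/\Confmax{p}$, and since $\Confmax{p}\ge\Cmult\Confrepr{p}$ and (wanting this to be $\le$ something like $\rho/(\text{few}\cdot P)$) I invoke $\Confrepr{p}=\eps_p\sqrt{\log(18K^2P/\rho)/\rho^2}$ and $\Confreg{p}=\eps_p\sqrt{\log(KTP)}$ — so I must bound $\Confreg{p}/\Confrepr{p}$, which requires $\log(KTP) \le (\text{const})\cdot \log(18K^2P/\rho)/\rho^2$; this holds once $T$ is polynomially bounded in the other quantities, but for general $T$ we instead simply note $\Confmax{p}$ is defined as the \emph{max}, so $\Confreg{p}/\Confmax{p}\le 1$ and $\Cmult\Confrepr{p}/\Confmax{p}\le 1$, and the disagreement probability is at most $\min(1, 4\Confreg{p}/\Confmax{p}) \le 4\Confrepr{p}\cdot(\Cmult/\Cmult)/\Confrepr{p}$... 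The clean statement I will aim for: $\rho^{(p,i)} \le 4\Confrepr{p}/\Confmax{p} \le (4/\Cmult)$ is too weak, so actually the right bound is $\rho^{(p,i)}\le 4\min(\Confreg{p},\Cmult\Confrepr{p})/\Confmax{p}$ combined with $\Confmax{p}\ge\Cmult\Confrepr{p}$ to give $\le (4/\Cmult)\cdot(\Confrepr{p}/\Confrepr{p})$; I expect the intended argument is that the relevant slack is $4\Confrepr{p}$ against the full multiplier range, giving $\rho^{(p,i)}\le 4\Confrepr{p}/\Confrepr{p}\cdot\rho/(\text{stuff})$. Concretely: $\sum_{p} \rho^{(p)} \le \sum_p 4\Confrepr{p}/\Confmax{p}\cdot(\text{something})$; using $\Confrepr{p}/\Confmax{p}\le 1/\Cmult$ and $\Cmult\ge 9/4$ together with the explicit $\rho$ inside the log of $\Confrepr{p}$, one gets $\sum_p\rho^{(p)}\le\rho/2$. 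I will carry out this bookkeeping so that $2\sum_p\rho^G_p+\sum_p\rho^{(p)}\le\rho$.

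\textbf{Regret.} Let $p^\star$ be the first phase with $\eps_{p^\star}\le c\,\Delta$ for a suitable constant $c$, i.e.\ roughly $a^{p^\star}\asymp 1/\Delta$; note $\Confmax{p}\to 0$ as $p$ grows, so $\Confmax{p} = \Theta(\eps_p\max(\sqrt{\log(KTP)},\,\Cmult\sqrt{\log(K^2P/\rho)}/\rho))$. On the good events (which hold with overwhelming probability, contributing $O(1)$ to expected regret), I claim: (i) the algorithm never commits to a wrong arm — if $\hat\Delta(p)\ge(2+U_p)\Confmax{p}\ge 2\Confmax{p}\ge 2\Confreg{p}$ and all empirical means are within $\Confreg{p}$, then the empirical ranking of the top two arms is the true ranking, so $\hat i^\star = 1$; (ii) the exploration terminates by phase $p^{\star}+O(1)$, because once $\eps_p$ is small enough that $(2+U_p)\Confmax{p}\le\Delta/2$, the true gap forces $\hat\Delta(p)\ge\Delta - 2\Confreg{p}\ge(2+U_p)\Confmax{p}$. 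Therefore $N_i(T)\le N_{p^\star+O(1)} = O(a^{2p^\star}\sigma^2) = O(\sigma^2/\Delta^2)\cdot\big(\max(\sqrt{\log(KTP)}, \Cmult\sqrt{\log(K^2P/\rho)}/\rho)\big)^2 = O\big(\frac{1}{\Delta^2}(\log(KTP) + \log(K^2P/\rho)/\rho^2)\big)$ pulls for every arm. Multiplying by $\Delta_i$ and summing over $i$, and using $P = O(\log T)$ so that $\log(KTP) = O(\log T)$ and $\log(K^2 P/\rho) = O(\log(K(\log T)/\rho))$, gives exactly $\Ep[\Regret(T)] = O\big(\sum_{i=2}^K \frac{\Delta_i}{\Delta^2}(\log T + \frac{\log(K(\log T)/\rho)}{\rho^2})\big)$.

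\textbf{Main obstacle.} The routine part is the regret bound; the delicate part is calibrating constants in the replicability argument so that the decision-variable disagreement probabilities sum to at most $\rho/2$ \emph{uniformly in $T$}, given that $\Confreg{p}$ (which grows with $\log T$) can dominate $\Confrepr{p}$ inside $\Confmax{p}$. The key realization, which I would make precise, is that the disagreement event for $\dec_p$ requires the shared threshold $(2+U_p)\Confmax{p}$ to fall in a window of width $\le 4\Confreg{p}$, and because $U_p$ is drawn uniformly over a range of width $\Confmax{p}\ge\Confreg{p}$, the conditional disagreement probability is at most $4\Confreg{p}/\Confmax{p}$; then $\Confmax{p}\ge\Cmult\Confrepr{p}$ plus the explicit $\log(1/\rho)$ and $1/\rho$ inside $\Confrepr{p}$ must be shown to push $4\Confreg{p}/\Confmax{p}$ — wait, this is still $\ge 4\Confreg{p}/\Confmax{p}$ which need not be small when $\Confreg{p}$ is the max. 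I therefore expect the actual intended bound is $\rho^{(p,i)} \le 4\Confrepr{p}/\Confmax{p} \le 4/\Cmult$ is wrong too; the correct reading must be that the relevant perturbation of $\hat\Delta$ that matters for \emph{non}-replication, conditioned on good events defined via $\Confreg{p}$, is actually $\le 4\Confrepr{p}$ because the good event should be defined with radius $\Confrepr{p}$, not $\Confreg{p}$ — but then $\rho^G_p$ is only $\exp(-\Theta(\log(K^2P/\rho)/\rho^2))$, still tiny. Resolving exactly which concentration radius defines $\mG_p$ so that both $\sum\rho^G_p$ and $\sum\rho^{(p,i)}$ come out below $\rho/2$, while $\Confmax{p}$ remains small enough for the regret bound, is the crux, and I would settle it by taking $\mG_p = \{|\hatmu_i(p)-\mu_i|\le\Confrepr{p}\ \forall i\}$ (valid since $N_p\ge 8a^{2p}\sigma^2$ makes even this small radius overwhelmingly likely), giving $\rho^G_p = O((\rho/K^2P)^{c/\rho^2})$ and $\rho^{(p,i)}\le 4\Confrepr{p}/\Confmax{p}\le 4\Confrepr{p}/(\Cmult\Confrepr{p}) = 4/\Cmult \le 16/9 < 2$ — still not $\le\rho$ per phase, so the sum over $P$ phases needs the $\Confrepr{p}$ geometric decay together with $\Confmax{p}$ possibly $=\Confreg{p}$; I will track this carefully.
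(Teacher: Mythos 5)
Your regret argument is essentially the paper's: define the characterizing phase $\pstop$ at which $\Confmax{p}$ drops below a constant multiple of $\Delta$, show under the $\Confreg{p}$-good event that the break occurs by $\pstop+2$ and that the committed arm is the true best arm, and bound $N_{\pstop+2}=O(\Delta^{-2}(\log(KTP)+\log(KP/\rho)/\rho^2))$. That half is fine.

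The replicability half has a genuine gap, which you yourself flag but do not close: every candidate bound you write for the decision-variable disagreement probability ($4\Confreg{p}/\Confmax{p}$, or $4\Confrepr{p}/\Confmax{p}\le 4/\Cmult\le 16/9$) is $\Omega(1)$ per phase, and no choice of constants or geometric decay rescues a sum of $P$ such terms. The paper closes this with two ideas you are missing. First, the good event is defined with radius $\rho\,\Confrepr{p}$ --- deflated by an extra factor of $\rho$ --- not with radius $\Confrepr{p}$ or $\Confreg{p}$. This is still overwhelmingly likely because $\rho\,\Confrepr{p}=\eps_p\sqrt{\log(18K^2P/\rho)}$ (the $1/\rho^2$ inside the square root cancels the outer $\rho$), so the bad-event probability is $\rho/(18K^2P)$ per pair per phase, summing to $\rho/18$. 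Under this event the two runs' statistics differ by at most $2\rho\,\Confrepr{p}$, so the disagreement probability against the uniform threshold of width $\Confmax{p}\ge\Cmult\Confrepr{p}$ is at most $2\rho/\Cmult\le 8\rho/9$ --- now proportional to $\rho$. Second (Lemma~\ref{lem_decision_etc}), under the good event the decision variable is \emph{deterministic} at every phase except the single phase $\pstop$: for $p<\pstop$ the break provably cannot fire, and at $p=\pstop+1$ it provably must, so only one phase contributes a nonzero $\rho^{(p,0)}$ and there is no sum over $P$ at all. The total is $2\cdot\rho/18+8\rho/9=\rho$. Without both the $\rho$-deflated good-event radius and the single-live-phase observation, the bookkeeping you promise to "track carefully" cannot come out to $\rho$.
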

\section{A Generalized Algorithm for $K$-armed Bandit Problem}\label{sec_algtwo}
\begin{algorithm}[t]
 Initialize the candidate set $\mA_1 = [K]$.\\
 \For{$p=1,2,\dots,P$}{
  Draw shared random variables $U_{p,i} \sim \Unif(0, 1)$ for $i = 0,1,2,\dots,K$. \\
  Draw each arm in $\mA_p$ up to $N_p$ times.\\
  $\mA_{p+1} \gets \mA_p$.\\
  \If{$\hat{\Delta}(p) \ge \add{(2 + U_{p,0}) \Confmaxa{p}}$}{ \label{line_allelim}
    $\mA_{p+1} = \{\argmax_i \hatmu_i(p)\}$. \Comment{Eliminate all arms except for one.}
  }
  \For{$i \in \mA_{p+1}$}{
      \If{$\hat{\Delta}_i(p) \ge \add{(2 + U_{p,i}) \Confmaxe{p}}$}{ \label{line_eachelim}
        $\mA_{p+1} \gets \mA_{p+1} \setminus \{i\}$.
          \Comment{Eliminate arm $i$.}
      }
   }
 }
 \caption{Replicable Successive Elimination (RSE)}
\label{alg_rse}
\end{algorithm}
Although the regret of REC (Algorithm \ref{alg_retc}) is $O(K)$, which improves upon the existing $O(K^3)$ regret bound in terms of its dependency on $K$, it is not an outright improvement. Considering $K, T, \rho$ as constants, the existing bound given in \eqref{ineq_esfandiari} is $O(\sum_i 1/\Delta_i)$. In contrast, REC has a bound of $O(\sum_i \Delta_i/\Delta^2)$. This can be disadvantageous compared to \eqref{ineq_esfandiari} in scenarios where the ratio $\Delta_i/\Delta$ is large.
To address this issue, we introduce Replicable Successive Elimination (RSE, Algorithm~\ref{alg_rse}).
Unlike Algorithm \ref{alg_retc}, it keeps the list of remaining arms $\mA_p$ that it draws.
At the end of each phase, it attempts to eliminate all but one arm. If that fails, it attempts to eliminate each arm $i$ individually. Here, let $\hat{\Delta}_i(p) = \max_j \hat{\mu}_j(p) - \hat{\mu}_i(p)$ be the estimated suboptimality gap.
\add{
For these two elimination mechanisms, we adopt different confidence bounds, which are parameterized by two values $\rho_a, \rho_e > 0$. Let
\begin{align*}
\Confrepra{p} &:= \eps_p \sqrt{ \frac{\log(18 K^2P/\rho_a)}{\rho_a^2} },\\
\Confmaxa{p} &:= \max(\Cmult \Confrepra{p}, \Confreg{p}),\\
\Confrepre{p} &:= \eps_p \sqrt{ \frac{\log(18 K^2P/\rho_e)}{\rho_e^2} },\\
\Confmaxe{p} &:= \max(\Cmult \Confrepre{p}, \Confreg{p}),
\end{align*}
where $\Confreg{p}$ is identical to that of Section \ref{subsec_algone}.
}
One can confirm that Algorithm \ref{alg_retc} is a specialized version of Algorithm \ref{alg_rse} where $(\rho_a, \rho_e)=(\rho,0)$, where $\rho_e = 0$ implies that the single-arm elimination never occurs. Here, eliminating all but one arm is equivalent to switching to the exploitation period.
However, when $\rho_e > 0$, it attempts to eliminate each arm as well.
The following theorem guarantees the replicability and the regret of Algorithm \ref{alg_rse}.
\begin{thm}\label{thm_rse}
Let $\rho_a = \rho/2$, and $\rho_e = \rho/(2(K-1))$.
\add{
Let $\Cmult \ge 9/4$ and $a \ge 2$.
Assume that $\rho \le 1/2$.
}
Then, Algorithm \ref{alg_rse} is $\rho$-replicable.
Moreover, the following three regret bounds hold:
\add{\begin{align}
\Ep[\Regret(T)]
&=
O\left( \sum_{i=2}^K \frac{\Delta_i}{\Delta^2} \left(\log T + \frac{\log(K(\log T)/\rho)}{\rho^2}\right)
\right),\nonumber\\
&\text{\ \ \ \ (same as \eqref{ineq_reg_retc})}
\\
\Ep[\Regret(T)]
&=
O\left( \sum_{i=2}^K \frac{1}{\Delta_i} \left(\log T + \frac{K^2 \log(K(\log T)/\rho)}{\rho^2}\right)
\right),\label{ineq:asympopt}\\
\Ep[\Regret(T)]
&=
O\left(
\sqrt{K T \left(\log T + \frac{ K^2 \log(K(\log T)/\rho)}{\rho^2}\right) }
\right).\nonumber\\
&\text{\ \ (distribution-independent)}
\end{align}}
\end{thm}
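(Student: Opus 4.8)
The plan is to prove the three claims in turn, with the replicability claim resting on Theorem \ref{thm_repr_compose} and the regret claims on a careful case analysis of how many phases each suboptimal arm survives. First I would set up the good events. For each phase $p$ and each arm $i \in \mA_p$, let $\mG_{p,i}$ be the event that $|\hatmu_i(p) - \mu_i| \le \Confreg{p}$; the good event $\mG_p$ is the intersection over $i \in \mA_p$. By Lemma \ref{lem_subg_concentration} with $N = N_p \ge 8 a^{2p}\sigma^2$ and $\eps = \Confreg{p} = \eps_p\sqrt{\log(KTP)} = a^{-p}\sqrt{\log(KTP)}$, each $\Prob[\mG_{p,i}^c] \le 2\exp(-\eps^2 N_p/(2\sigma^2)) \le 2\exp(-4\log(KTP)) \le 2/(KTP)^4$, so $\rho^G_p \le 2K/(KTP)^4$ and $2\sum_p \rho^G_p$ is negligible (far below $\rho/3$, say). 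This part is routine.

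Next, the decision-variable nonreplication bounds. The decision variables are $d_{p,0} = \Ind[\hat\Delta(p) \ge (2+U_{p,0})\Confmaxa{p}]$ and, for each $i$, $d_{p,i} = \Ind[\hat\Delta_i(p) \ge (2+U_{p,i})\Confmaxe{p}]$. The key randomization lemma (which I expect the paper to have established in Section \ref{subsec_genbound}, or which I would state and prove here) is: if two runs have statistics within $2c\,\Confmaxe{p}$ of each other under the good events and the threshold is $(2+U)\Confmaxe{p}$ with $U\sim\Unif(0,1)$, then the probability the two indicators disagree is at most (gap between the statistics)$/\Confmaxe{p}$, which under the good events is at most $2\cdot 2\Confreg{p}/\Confmaxe{p}$. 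Since $\Confmaxe{p} \ge \Cmult\Confrepre{p}$ and $\Cmult \ge 9/4$, one gets $\rho^{(p,i)} \le \tfrac{4}{\Cmult}\cdot\tfrac{\Confreg{p}}{\Confrepre{p}}$; but actually the cleaner route is to bound directly against $\Confrepre{p}$: the disagreement probability is at most $(\text{statistic gap})/\Confmaxe{p} \le 4\Confreg{p}/(\Cmult\Confrepre{p})$ when $\Confreg{p}$ dominates is wrong — rather, when $\Confmaxe{p} = \Confreg{p}$ the statistics differ by at most $4\Confreg{p}$ and the randomization window has width $\Confreg{p}$, giving disagreement probability... hmm, this needs the window to be wide enough, so the actual argument must bound the statistic gap by $\Confmaxe{p}$ itself (via a union-type argument over which run is larger) so that disagreement probability $\le \Confrepre{p}/\Confmaxe{p}\cdot(\text{const}) \le$ const$/\Cmult \le \rho_e/(\text{something})$. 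I would reconstruct this so that $\sum_{p,i}\rho^{(p,i)} \le P\cdot[\rho_a \cdot O(1/\log(\cdot/\rho_a)) + (K-1)\rho_e\cdot O(1/\log(\cdot/\rho_e))]$ absorbs into $\rho/3$ using the $\log(18K^2P/\rho_a)$ and $\log(18K^2P/\rho_e)$ factors inside $\Confrepra{p},\Confrepre{p}$ that were engineered exactly to kill the factor $P$ and the union over $i$. Then Theorem \ref{thm_repr_compose} gives $\rho$-replicability with $\rho_a = \rho/2$, $\rho_e = \rho/(2(K-1))$. \textbf{This is the main obstacle}: getting the constants in the randomization lemma to line up so the engineered logarithms exactly pay for the union bounds.

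For the regret, condition on the good events (the bad-event contribution to regret is $O(1)$ since $\sum_p\rho^G_p\cdot T \cdot S$ is tiny by the quartic decay). Under the good events, a standard successive-elimination argument shows arm $i$ is eliminated (either by the all-but-one rule or the single-arm rule) once $\eps_p$ drops below a constant multiple of $\min(\Delta_i, \Delta)$ — more precisely, once $(2+U)\Confmaxe{p} \le \Delta_i - 2\Confreg{p}$, which holds once $\eps_p \lesssim \Delta_i / \sqrt{\log T + \log(K(\log T)/\rho)/\rho^2}$. Since $N_p$ doubles-or-more each phase and $\eps_p = a^{-p}$, arm $i$ contributes at most $N_{p_i}$ pulls where $p_i$ is that stopping phase, and $N_{p_i} = O(\eps_{p_i}^{-2}\sigma^2) = O\big(\tfrac{1}{\Delta_i^2}(\log T + \tfrac{K^2\log(K(\log T)/\rho)}{\rho^2})\big)$ times $\Delta_i$ in regret gives the $\sum 1/\Delta_i(\cdots)$ bound \eqref{ineq:asympopt}; using instead that no arm is pulled past the all-but-one elimination, which triggers once $\eps_p \lesssim \Delta = \Delta_2$, gives $N_{p_i} = O(\tfrac{1}{\Delta^2}(\cdots))$ pulls, i.e. $\Delta_i/\Delta^2(\cdots)$ regret, the first bound. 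For the distribution-independent bound, split arms by whether $\Delta_i \le \sqrt{K(\log T + \cdots)/(\rho^2 T)}\cdot(\text{scale})$ or not: small-gap arms contribute $\le \Delta_i T$ summed, large-gap arms contribute via \eqref{ineq:asympopt}, and optimizing the threshold yields the $\sqrt{KT(\log T + K^2\log(\cdot)/\rho^2)}$ form. The regret steps are essentially bookkeeping once the phase-length schedule $N_p = \lceil 8a^{2p}\sigma^2\rceil$ and the confidence radii are in hand; the only care needed is checking that the $\max$ defining $\Confmaxa{p},\Confmaxe{p}$ switches from the $\Confrepr{}$-regime to the $\Confreg{}$-regime at the right phase so that the $\log T$ and $1/\rho^2$ terms add rather than multiply.
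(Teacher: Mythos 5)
Your regret analysis is essentially the paper's (the paper closes the distribution-independent bound with Cauchy--Schwarz on $\sum_i \Delta_i\sqrt{N_{i,\mathrm{max}}}\sqrt{N_i(T)}$ rather than your gap-threshold split, but both routes are standard and work), so the issue is the replicability part, where there is a genuine gap --- one you half-noticed yourself at the ``hmm'' point. Two ideas are missing. First, the good event used for replicability is not the event $|\hatmu_i(p)-\mu_i|\le\Confreg{p}$ that you define; the paper's event $\mGrepr$ requires the estimation error to be at most $\rho_a\Confrepra{p}$ (resp.\ $\rho_e\Confrepre{p}$), i.e.\ a factor $\rho_a$ \emph{smaller} than the confidence width, which is exactly why $\Confrepra{p}$ carries a $1/\rho_a^2$ inside the square root. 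Under that event the statistics of the two runs differ by at most $2\rho_a\Confrepra{p}$, while the randomized threshold $(2+U_{p,0})\Confmaxa{p}$ sweeps a window of width $\Confmaxa{p}\ge\Cmult\Confrepra{p}$, so the disagreement probability at a single decision point is at most $2\rho_a\Confrepra{p}/\Confmaxa{p}\le 8\rho_a/9$. With your good event the two runs' statistics can differ by about $2\Confreg{p}$, which is comparable to (or larger than) the window width whenever $\Confreg{p}$ achieves the max in $\Confmaxa{p}$, and the disagreement probability is then $\Omega(1)$, not $O(\rho)$.

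Second, the sum over phases is not ``paid for'' by the $\log(18K^2P/\rho_a)$ factors --- those sit inside square roots and serve only to make the union bound for the good event over the $K^2$ pairs and $P$ phases come out to $\rho/18$. If each of the $P$ phases contributed $8\rho_a/9$ you would get $8P\rho_a/9$, which is useless. The paper's mechanism (Lemmas \ref{lem_decision_rse} and \ref{lem_decision_rse_two}) is structural: under $\mGrepr$, for each decision variable index $i$ there is a single critical phase $\pstopi{i}$ (defined by $\Confmaxe{p}\le 10\Delta_i/17$); for earlier phases the indicator is deterministically $0$ on both runs and shortly after that phase it is deterministically $1$, so $\rho^{(p,i)}=0$ for all $p\ne\pstopi{i}$ and the sum over $p$ collapses to one term per $i$. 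That single term, summed as $(\rho_a+(K-1)\rho_e)\cdot\bigl(\tfrac{2}{18}+\tfrac{8}{9}\bigr)$, is what produces exactly $\rho$. You would need to supply both of these ingredients; neither the $1/\log$ absorption you sketch nor the ``union-type argument over which run is larger'' will close the argument.
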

\add{In particular, the first term of \eqref{ineq:asympopt} matches the optimal regret bound for nonreplicable bandit problem up to a constant factor. Since the second term is $O(\log\log T)$ as a function of $T$, this implies that for any fixed $K, \rho, \{\Delta_i\}$, replicability incurs asymptotically no cost as $T \rightarrow \infty$.}
\section{Regret Lower Bound for Replicable Algorithms}
\label{sec_lower}
The following theorem limits the performance of any replicable bandit algorithm.
\begin{thm}\label{thm_lower_twoarmed}
Consider a two-armed bandit problem where rewards are drawn from $\Bernoulli(\mu_i)$ for each arm $i=1,2$ with mean parameters $\mu_1, \mu_2$.
Consider a $\rho$-replicable algorithm.
Then, for any $\Delta > 0$, there exists an instance $(\mu_1, \mu_2)$ with $\Delta = |\mu_1 - \mu_2|$ such that the regret of any $\rho$-replicable bandit algorithm is lower-bounded as
\begin{equation}\label{ineq_lower_twoarmed}
\Ep[\Regret(T)]
=
\Omega\left(
\colt{
\min\left(
\frac{1}{\rho^2 \Delta \log((\rho \Delta)^{-1})}, T \Delta
\right)
}
\right).
\end{equation}
\end{thm}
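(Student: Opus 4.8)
The plan is to establish the lower bound by a change-of-measure argument tailored to replicable algorithms, following the template of the standard bandit lower bound but exploiting the constraint imposed by $\rho$-replicability. Fix a small $\Delta>0$ and consider two candidate instances on $\{1,2\}$: the instance $\mu = (1/2+\Delta/2,\, 1/2-\Delta/2)$ and the swapped instance $\mu' = (1/2-\Delta/2,\, 1/2+\Delta/2)$, where arm $1$ is optimal under $\mu$ but arm $2$ is optimal under $\mu'$. If the regret under $\mu$ is $o(1/(\rho^2\Delta\log((\rho\Delta)^{-1})))$, then by Markov's inequality on $N_2(T)$ the algorithm draws the suboptimal arm $2$ only $O(1/(\rho^2\Delta^2\log((\rho\Delta)^{-1})))$ times in expectation, so with $\mu$-probability at least (say) $3/4$ we have $N_2(T) \le n^* := c/(\rho^2\Delta^2\log((\rho\Delta)^{-1}))$ for a suitable constant $c$; symmetrically, under $\mu'$ the algorithm must play arm $1$ few times. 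The goal is to show these two events cannot simultaneously have large probability while the algorithm is $\rho$-replicable, yielding a contradiction.

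The key new ingredient is that replicability lets us amplify the statistical distinguishability beyond what a single run allows. First I would use $\rho$-replicability to argue that on a single dataset $\mathcal D$ drawn from $\mu$, the internal randomness $U$ concentrates: there is a ``canonical'' action sequence (equivalently, a canonical trajectory of the decision variables / played arms) such that, conditioned on $\mathcal D$, the algorithm outputs this canonical sequence with probability $\ge 1-2\rho$ over $U$ — otherwise two independent runs on $\mathcal D$ alone would already disagree with probability exceeding $\rho$ by a birthday-type argument, contradicting Definition~\ref{rem_rhorepr} (since two independent draws $\mathcal D^{(1)},\mathcal D^{(2)}$ can both land on atypical behavior). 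Consequently the map $\mathcal D \mapsto (\text{canonical action sequence})$ is, up to probability $O(\rho)$, a deterministic function of the dataset, and $\rho$-replicability forces this function to take the same value on $\mathcal D^{(1)}$ and $\mathcal D^{(2)}$ with probability $\ge 1-O(\rho)$. Now apply a standard KL / Pinsker change of measure between $\mu$ and $\mu'$ to the random variable $Z := \Ind[\text{canonical sequence plays arm }2 \text{ at most } n^* \text{ times}]$: under $\mu$, $\Prob_\mu[Z=1]\ge 3/4 - O(\rho)$, while under $\mu'$ good regret forces the canonical sequence to play arm $1$ rarely, hence play arm $2$ often, hence $\Prob_{\mu'}[Z=1] \le 1/4 + O(\rho)$. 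The divergence between these is $\Omega(1)$, so $\dKL(\mu\text{-law of trajectory}, \mu'\text{-law}) = \Omega(1)$; since the two instances differ only in arm means by $\Delta$ and at most the samples of the arm with fewer pulls carry information, the per-sample KL is $O(\Delta^2)$ and the number of informative samples is $O(n^*) = O(1/(\rho^2\Delta^2\log((\rho\Delta)^{-1})))$. This gives $O(\Delta^2)\cdot O(n^*) = O(1/(\rho^2\log((\rho\Delta)^{-1})))$, which is $o(1)$ once $\rho$ is small — a contradiction with the $\Omega(1)$ lower bound on the divergence, unless in fact the regret was $\Omega(1/(\rho^2\Delta\log((\rho\Delta)^{-1})))$ to begin with. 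The $\log((\rho\Delta)^{-1})$ factor enters precisely because the ``$1-O(\rho)$'' replicability slack must be charged carefully: to make the canonical-sequence argument give an $\Omega(1)$ separation one needs the failure probabilities to be driven down to $\mathrm{poly}(\rho)$ rather than constant, which costs a logarithmic factor in the sample budget via the sub-Gaussian tail in Lemma~\ref{lem_subg_concentration}. The $\min(\cdot, T\Delta)$ is automatic since $\Regret(T)\le T\Delta$ trivially when there are two arms with gap $\Delta$ (more precisely $T\max_i\Delta_i$), so the bound is only meaningful in the regime $T \gtrsim 1/(\rho^2\Delta^2\log((\rho\Delta)^{-1}))$.

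I expect the main obstacle to be making the ``canonical action sequence'' reduction rigorous and quantitatively tight — i.e., correctly converting the two-independent-runs replicability guarantee into a statement about a single run's conditional-on-$\mathcal D$ distribution over $U$, and tracking how the $O(\rho)$ slacks propagate through the change of measure so that the final bound carries exactly the $1/\log((\rho\Delta)^{-1})$ and not, say, $1/\log^2$ or no log at all. A secondary subtlety is that the information argument must account for the fact that the algorithm adaptively chooses how many samples of each arm to observe; this is handled by the usual Wald-type identity $\dKL(\Prob_\mu^{\mathcal H_T}\,\|\,\Prob_{\mu'}^{\mathcal H_T}) = \Ep_\mu[N_1(T)]\dKL(\Bernoulli(\mu_1)\|\Bernoulli(\mu_1')) + \Ep_\mu[N_2(T)]\dKL(\Bernoulli(\mu_2)\|\Bernoulli(\mu_2'))$, combined with the data-processing inequality applied to the (randomized) map from the history to the trajectory indicator $Z$. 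One must also be careful to pick the instance adversarially (``there exists an instance'') — it suffices to show that the pair $\{\mu,\mu'\}$ cannot both admit small regret, so at least one of them witnesses the claimed bound.
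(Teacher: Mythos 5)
There is a genuine gap, and it is not just a matter of tightening constants: the two-point change-of-measure strategy you propose cannot produce the $1/\rho^2$ factor. First, your closing arithmetic is inverted. With $n^* = c/(\rho^2\Delta^2\log((\rho\Delta)^{-1}))$ and per-sample KL $O(\Delta^2)$, the trajectory KL is $O(1/(\rho^2\log((\rho\Delta)^{-1})))$, which \emph{diverges} as $\rho\to 0$; it is not $o(1)$, so no contradiction with an $\Omega(1)$ divergence ever arises. Second, even if the canonical-sequence reduction is repaired, it only yields an exponentially weaker bound. (The repair itself matters: Definition~\ref{rem_rhorepr} shares $U$ and varies the dataset, so the correct consequence is that for most $U$ the output sequence is nearly independent of $\mD$ — not, as you write, that for a fixed $\mD$ the output concentrates over $U$; a mixture of two data-independent strategies is $0$-replicable yet maximally spread over $U$.) With that fixed, requiring the near-deterministic outputs under $\mu$ and $\mu'$ to differ forces only $\mathrm{TV}(\Prob_\mu,\Prob_{\mu'}) \ge 1-O(\sqrt{\rho})$, hence $\dKL = \Omega(\log(1/\rho))$ by Bretagnolle--Huber, hence regret $\Omega(\Delta^{-1}\log(1/\rho))$ — far short of $\Omega(1/(\rho^2\Delta\log((\rho\Delta)^{-1})))$. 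A further local error: your two instances differ in \emph{both} coordinates, so both arms' samples carry information and the KL scales with $T$, not with $N_2(T)$.

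The missing idea is a \emph{continuum} of alternatives at resolution $\rho\Delta$, which is how the paper proceeds. It fixes $\mu_1=1/2$ and varies only $\mu_2 = 1/2+\nu$ over $\nu\in[-\Delta,\Delta]$ (so only arm-2 samples are informative). After disposing of the case where regret is already $\Omega(T\Delta)$, it fixes the shared randomness $\rndm$, finds a threshold $N_C(\rndm)$ and, by continuity in $\nu$, a crossing model where $\Prob_{\nu,\rndm}[N_2(T)\ge N_C]=1/2$. A likelihood-ratio lemma (Lemma~\ref{lem_ll}, a change of measure valid for the random stopping count $N_2(T)$) shows this probability stays in $(1/3,2/3)$ on a window of width $\Theta(1/\sqrt{N_C\log N_C})$ around the crossing, where nonreplication occurs with probability at least $4/9$. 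Averaging the model over $[-\Delta,\Delta]$ and the seed over $\rndm$ (with a Jensen step), $\rho$-replicability forces this window to be an $O(\rho)$ fraction of $\Delta$, i.e.\ $N_C = \Omega(1/((\rho\Delta)^2\log((\rho\Delta)^{-1})))$, and the regret bound follows from the definition of $N_C$. It is precisely this ``sharp transition versus $\rho$-budget'' mechanism — invisible to any argument using only two models at separation $\Delta \gg \rho\Delta$ — that generates the $1/\rho^2$.
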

\begin{remark}\rm {(Comparison with upper bound)}
\colt{
Theorems \ref{thm_retc} and \ref{thm_rse} state that regret of RSE for the two-armed bandit problem is upper bounded as
\begin{equation}\label{ineq:asympopt_two}
O\left( \frac{\log T}{\Delta} + \frac{\log((\log T)/\rho)}{\rho^2 \Delta}
\right).
\end{equation}
The first term corresponds to the regret bound for any uniformly good algorithm.\footnote{An algorithm is uniformly good, if for any $a>0$ and for any bandit model, it holds that $\Ep[\Regret(T)] = o(T^a)$ when we view $\Delta$ as a constant.} It is well-known that,\footnote{Theorem 1 in \cite{Lairobbins1985}.} for a uniformly good algorithm, regret is lower-bounded as
$\Ep[\Regret(T)]
=
\Omega\left(
\frac{\log T}{\Delta}
\right),$ which is exactly the first term of \eqref{ineq:asympopt_two}. The second term of \eqref{ineq:asympopt_two} matches our lower bound ($\Omega(1/\left(\rho^2 \Delta \log((\rho \Delta)^{-1})\right))$, \eqref{ineq_lower_twoarmed}) up to a logarithmic factor of $\rho^{-1}, \Delta^{-1}, \log T$. In summary, for $K=2$, REC and RSE are optimal up to a polylogarithmic factor of $\rho, \Delta,$ and $\log T$. In particular, dependence on $T$ is optimal up to a $\log\log T$ factor.
}
\end{remark}
\section{An Algorithm for Linear Bandit Problem}\label{sec_linear}
Next, we consider the linear bandit problem.
In this problem, each arm $i \in [K]$ is associated with a $d$-dimensional feature vector $\bx_i \in \Real^d$ and the reward $r_t$ of choosing an arm $\It$ is
$
\bx_{\It}^\top \bm{\theta} + \eta_t,
$
where $\bm{\theta}$ is an (unknown) shared parameter vector, and $\eta_t$ is a $\Rsubg$-subgaussian random variable. Namely, the mean $\mu_i = \bx_i^\top \bm{\theta}$ can be estimated via known feature $\bx_i$ and unknown shared coefficients $\bm{\theta}$.
Without loss of generality, we assume $\mathrm{span}(\{\bx_i \}_{i=1}^K) = \Real^d$.
We introduce the replicable linear successive elimination (RLSE). Similarly to RSE (Algorithm~\ref{alg_rse}), this algorithm is elimination-based. The main innovation here is to use the G-optimal design that explores all dimensions in an efficient way. Namely,
\begin{definition}{\rm (G-optimal design)}
For $\mA_p \subseteq [K]$,
let $\pi$ be a distribution over $\mA_p$.
Let
\begin{equation}
\bm{V}(\pi) = \sum_{ i \in \mA_p} \pi(\bx_i) \bx_i \bx_i^\top,\ \ \ \
g(\pi) = \max_{i \in \mA_p} ||\bx_i||^2_{\bm{V}(\pi)^{-1}}.
\end{equation}
A distribution $\pi^*$ is called a G-optimal design if it minimizes $g(\pi)$.
\end{definition}
We use the following well-known result (See, e.g., Section 21 of~\cite{lattimore_book}).
\begin{lem}[Kiefer-Wolfowitz]
A G-optimal design $\pi^*$ satisfies $g(\pi^*) = d$.
Moreover, there exists a constant approximation of G-optimal design such that $\piapprox = \piapprox(\mA_p)$ with
$g(\piapprox) \leq 2 d$ and \add{its support $\mathrm{Supp}(\piapprox) = O(d \log\log d)$}.
\end{lem}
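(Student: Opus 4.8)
The plan is to prove both assertions by first establishing the Kiefer-Wolfowitz equivalence between G- and D-optimal designs, which recasts G-optimality as the concave maximization $\max_{\pi}f(\pi)$ with $f(\pi):=\log\det \bm{V}(\pi)$ over the probability simplex on $\mA_p$. Since $\mathrm{span}(\{\bx_i\}_{i\in\mA_p})=\Real^d$, the uniform $\pi$ has $\bm{V}(\pi)\succ 0$ and hence $f>-\infty$ there; on the superlevel set $\{\pi:f(\pi)\ge f(\text{uniform})\}$, which is compact, $\bm{V}(\pi)$ is uniformly positive definite and $f$ is continuous, so a maximizer $\pi^{*}$ exists, and by concavity of $f$ first-order conditions are necessary and sufficient for optimality.

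First I would compute the directional derivative of $f$ toward a vertex $\delta_{\bx_i}$: for $\pi_{\lambda}:=(1-\lambda)\pi+\lambda\delta_{\bx_i}$, differentiating $\log\det\!\big(\bm{V}(\pi)+\lambda(\bx_i\bx_i^{\top}-\bm{V}(\pi))\big)$ at $\lambda=0$ gives $\mathrm{tr}\!\big(\bm{V}(\pi)^{-1}(\bx_i\bx_i^{\top}-\bm{V}(\pi))\big)=\|\bx_i\|^{2}_{\bm{V}(\pi)^{-1}}-d$. Hence $\pi$ maximizes $f$ if and only if $\|\bx_i\|^{2}_{\bm{V}(\pi)^{-1}}\le d$ for every $i\in\mA_p$, with equality on $\mathrm{supp}(\pi)$; in particular G-optimal and D-optimal designs coincide. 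Applying this to $\pi^{*}$ gives $g(\pi^{*})=\max_i\|\bx_i\|^{2}_{\bm{V}(\pi^{*})^{-1}}\le d$, whereas $\sum_i\pi^{*}(\bx_i)\|\bx_i\|^{2}_{\bm{V}(\pi^{*})^{-1}}=\mathrm{tr}\!\big(\bm{V}(\pi^{*})^{-1}\bm{V}(\pi^{*})\big)=d$ forces the maximum to be at least the weighted average $d$; the two bounds together give $g(\pi^{*})=d$, proving the first claim.

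For the sparse design I would run the vertex-direction (Frank-Wolfe) method for $f$: initialize with a design supported on a well-chosen basis among $\{\bx_i\}_{i\in\mA_p}$ (so $\bm{V}$ is nonsingular from the start), and at step $t$ move to $\pi_{t+1}=(1-\lambda_t)\pi_t+\lambda_t\delta_{\bx_{i_t}}$ with $i_t\in\argmax_i\|\bx_i\|^{2}_{\bm{V}(\pi_t)^{-1}}$ and $\lambda_t$ chosen by exact line search (available in closed form along a rank-one update). Each step enlarges the support by at most one arm, so it is enough to bound the number of steps needed to reach $g(\pi_t)\le 2d$. The classical analysis---controlling the $\log\det$-gap to the optimum through the exact-line-search gain, which is a known increasing function of the current normalized gap $g(\pi_t)/d-1$ that behaves quadratically when that gap is small---shows that $O(d\log\log d)$ steps suffice; one then lets $\piapprox=\piapprox(\mA_p)$ be the resulting design, which satisfies $g(\piapprox)\le 2d$ and has support of size $O(d\log\log d)$. (An alternative route first reduces an exact optimal design to support $d(d+1)/2$ by Carath\'{e}odory's theorem and then sparsifies; one may follow whichever variant the cited reference uses.)

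The step I expect to be the main obstacle is exactly this last accounting: bringing the iteration count down to $O(d\log\log d)$---rather than the more immediate $O(d\log d)$ or $O(d/\eps)$ at $\eps=\Theta(1)$---which needs the sharp, regime-dependent estimate on the exact-line-search gain (a geometric decrease of the normalized gap while it is large, a quadratic self-improvement once it is small) together with an initialization whose $\log\det$-gap to the optimum is controlled well enough for the two regimes to compose into $O(d\log\log d)$. The equivalence theorem and the computation $g(\pi^{*})=d$ are otherwise routine convexity and linear algebra; both are precisely Theorem~21.1 and the remark following it in \cite{lattimore_book}.
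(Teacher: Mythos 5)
The paper does not prove this lemma at all: it is imported verbatim as a known result, with a pointer to Section 21 of \cite{lattimore_book} for the statement and to Section 21.2 for the explicit construction of $\piapprox$. So there is no internal proof to compare against, and the right benchmark is the cited reference. Your first half --- existence of a maximizer of $\log\det \bm{V}(\pi)$, the directional-derivative computation $\mathrm{tr}(\bm{V}(\pi)^{-1}(\bx_i\bx_i^\top - \bm{V}(\pi))) = \|\bx_i\|^2_{\bm{V}(\pi)^{-1}} - d$, the resulting first-order conditions, and the trace identity $\sum_i \pi^*(\bx_i)\|\bx_i\|^2_{\bm{V}(\pi^*)^{-1}} = d$ squeezing $g(\pi^*)$ to exactly $d$ --- is a correct and complete proof of the first claim, and it is precisely the equivalence-theorem argument of Theorem 21.1 in the reference. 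Your second half takes the same Frank--Wolfe route the reference uses, but the one step that is not routine, namely that the iteration count (and hence the support size) is $O(d\log\log d)$ rather than the naive $O(d\log d)$, is asserted via ``the classical analysis shows'' rather than derived; the $\log\log d$ genuinely depends on a careful initialization (so that the initial $\log\det$-gap is $O(d\log\log d)$) combined with the two-regime estimate on the exact-line-search gain, and without quantifying both regimes and composing them the bound does not follow. You flag this yourself, and since the paper likewise outsources exactly this step to the literature, your proposal is at the same level of completeness as the paper's treatment; just be aware that if this lemma were to be proved from scratch, the support bound is the only part that requires real work.
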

An explicit construction of such an approximated G-optimal design is found in the literature\footnote{For example, Section 21.2 of \citealt{lattimore_book}.}.
Given an oracle for an approximated G-optimal design, we define the allocation at phase $p$ to be
\[
\Nlin_i(p) = \left\lceil
\Nlin(p) \piapprox_i
\right\rceil,
\]
where
\begin{equation}
\label{ineq_npf_bound_linear}
\Nlin(p) :=
16 a^{2p} \sigma^2 d.
\end{equation}
Note that $\sum_i \Nlin_i(p) \le \Nlin(p) + K$.
We use the following lemma for the confidence bound (see e.g., Section 21.1 of \citealt{lattimore_book}):
\begin{lem}[Fixed-sample bound]\label{lem_fixed_bound}
Consider the estimator $\hat{\bm{\theta}}_p$ at the end of phase $p$. Then, with probability at least $1 - \delta$,
the following bound holds uniformly for any $i \in \mA_p$:
\[
|\bx_i^\top (\btheta - \hat{\btheta}_p)| \le \frac{\eps_p \sqrt{\log(\delta^{-1})}}{2}.
\]
Furthermore, letting $\hat{\mu}_i = \bx_i^\top \hat{\btheta}_p$ and $\hat{\Delta}_{ij} = |\hat{\mu}_i - \hat{\mu}_j|$, we have
\begin{equation}\label{ineq_delerror_lin}
|\Delta_{ij} - \hat{\Delta}_{ij}| \le \eps_p \sqrt{\log(\delta^{-1})}.
\end{equation}
\end{lem}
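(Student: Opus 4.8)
The plan is to reduce Lemma~\ref{lem_fixed_bound} to the standard fixed-sample analysis of least squares under a near-G-optimal design, applied to the $\sum_{i\in\mA_p}\Nlin_i(p)$ samples collected during phase $p$. Write $V_p=\sum_{i\in\mA_p}\Nlin_i(p)\,\bx_i\bx_i^\top$ for the phase-$p$ design matrix and $\hat{\btheta}_p=V_p^{-1}\sum_{i\in\mA_p}\sum_{n=1}^{\Nlin_i(p)}\bx_i\,r_{i,n}$, so that the error decomposes as $\hat{\btheta}_p-\btheta=V_p^{-1}\sum_{i,n}\bx_i\,\eta_{i,n}$. The one bookkeeping point that needs care — and essentially the only one — is that $\Nlin_i(p)$, hence $V_p$, depends on the data through the surviving set $\mA_p$; so I would run the concentration argument conditionally on the history $\mH_{p-1}$ of the first $p-1$ phases. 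By the dataset reformulation (Definition~\ref{def_dataset}), the rewards collected in phase $p$ use entries of $\mD$ not touched in earlier phases, so conditionally on $\mH_{p-1}$ the matrix $V_p$ is deterministic and the $\eta_{i,n}$ above are still independent $\sigma$-subgaussian.

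Next, fix $j\in\mA_p$. Then $\bx_j^\top(\hat{\btheta}_p-\btheta)=\sum_{i,n}\big(\bx_j^\top V_p^{-1}\bx_i\big)\eta_{i,n}$ is a weighted sum of independent $\sigma$-subgaussians, hence subgaussian with squared parameter $\sigma^2\sum_i\Nlin_i(p)\big(\bx_j^\top V_p^{-1}\bx_i\big)^2=\sigma^2\,\bx_j^\top V_p^{-1}\big(\sum_i\Nlin_i(p)\bx_i\bx_i^\top\big)V_p^{-1}\bx_j=\sigma^2\|\bx_j\|^2_{V_p^{-1}}$. The only estimate I need is $\|\bx_j\|^2_{V_p^{-1}}\le 2d/\Nlin(p)$: since $\Nlin_i(p)=\lceil\Nlin(p)\,\piapprox_i\rceil\ge\Nlin(p)\,\piapprox_i$ we have $V_p\succeq\Nlin(p)\,\bm{V}(\piapprox)$, hence $\|\bx_j\|^2_{V_p^{-1}}\le\Nlin(p)^{-1}\|\bx_j\|^2_{\bm{V}(\piapprox)^{-1}}\le g(\piapprox)/\Nlin(p)\le 2d/\Nlin(p)$ by the Kiefer--Wolfowitz bound. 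Substituting $\Nlin(p)=16a^{2p}\sigma^2 d$ and $\eps_p=a^{-p}$ gives a subgaussian parameter at most $\eps_p/(2\sqrt2)$, so Lemma~\ref{lem_subg_concentration} yields $\Prob[\,|\bx_j^\top(\btheta-\hat{\btheta}_p)|\ge t\,]\le 2\exp(-4t^2/\eps_p^2)$. Taking $t=\tfrac12\eps_p\sqrt{\log\delta^{-1}}$ and union-bounding over the at most $K$ arms in $\mA_p$ (absorbing the resulting $\log|\mA_p|$ factor into the free parameter $\delta$) gives the first claim: with probability at least $1-\delta$, $|\bx_i^\top(\btheta-\hat{\btheta}_p)|\le\tfrac12\eps_p\sqrt{\log\delta^{-1}}$ simultaneously for all $i\in\mA_p$.

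Finally, \eqref{ineq_delerror_lin} follows from the first bound by the reverse triangle inequality. On that good event, $|\mu_i-\hat{\mu}_i|=|\bx_i^\top(\btheta-\hat{\btheta}_p)|\le\tfrac12\eps_p\sqrt{\log\delta^{-1}}$ for every $i\in\mA_p$, so
\[
|\Delta_{ij}-\hat{\Delta}_{ij}|=\big|\,|\mu_i-\mu_j|-|\hat{\mu}_i-\hat{\mu}_j|\,\big|\le\big|(\mu_i-\hat{\mu}_i)-(\mu_j-\hat{\mu}_j)\big|\le|\mu_i-\hat{\mu}_i|+|\mu_j-\hat{\mu}_j|\le\eps_p\sqrt{\log\delta^{-1}}.
\]
I do not expect a genuine obstacle here: apart from the conditioning remark above, every step is a routine least-squares / Kiefer--Wolfowitz computation, and an essentially identical statement with explicit constants appears in Section~21.1 of \cite{lattimore_book}; the only real ``work'' is matching constants to the specific choices $\Nlin(p)=16a^{2p}\sigma^2 d$ and $g(\piapprox)\le 2d$.
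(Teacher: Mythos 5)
The paper does not actually prove this lemma; it is imported verbatim from Section~21.1 of \cite{lattimore_book}, so there is no in-paper argument to compare yours against. Your derivation is the standard fixed-design least-squares computation and it is correct: the two points that genuinely need checking both appear in your write-up, namely (i) conditioning on the history so that $\mA_p$, and hence the design matrix $V_p$, is deterministic while the phase-$p$ noise terms remain independent $\sigma$-subgaussian (this is exactly why phased elimination recomputes $\hat{\btheta}_p$ from the current phase's data only), and (ii) the chain $\|\bx_j\|^2_{V_p^{-1}} \le g(\piapprox)/\Nlin(p) \le 2d/\Nlin(p) = \eps_p^2/(8\sigma^2)$, which with $\Nlin(p)=16a^{2p}\sigma^2 d$ gives the subgaussian parameter $\eps_p/(2\sqrt{2})$ and hence the tail $2\exp(-4t^2/\eps_p^2)$.

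One small inaccuracy in your bookkeeping: the price of uniformity over $i \in \mA_p$ is a multiplicative $2|\mA_p|$ on the failure probability, i.e.\ an additive $\log(2|\mA_p|)$ inside the square root (so strictly one gets $\tfrac12\eps_p\sqrt{\log(2K/\delta)}$, not $\tfrac12\eps_p\sqrt{\log(\delta^{-1})}$), rather than a ``$\log|\mA_p|$ factor'' that can be absorbed into $\delta$ without changing the form of the bound. This is harmless here because everywhere the lemma is invoked the paper already instantiates $\delta$ at scale $\rho/(18K^2P)$ or $1/(KTP)$, which dominates the extra $\log K$ term, but as a standalone statement the lemma is off by exactly that additive logarithm. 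The second claim via the (reverse) triangle inequality is fine.
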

Apart from applying approximated G-optimal exploration, the algorithm closely mirrors the steps of RSE. A comprehensive description of RLSE can be found in Appendix \ref{sec_rlsr}.
\begin{thm}\label{thm_rlse}
Let $\rho_a = \rho/2$, and $\rho_e = \rho/(2(K-1))$.
\add{
Let $\Cmult \ge 9/4$ and $a \ge 2$.
Assume that $\rho \le 1/2$.
}
Then, RLSE is $\rho$-replicable.
Moreover,
the following two regret bounds hold:
\begin{align}
\Ep[\Regret(T)]
&=
O\left( \frac{d}{\Delta^2} \left(\log T + \frac{\log(K (\log T)/\rho)}{\rho^2}\right)
\right),\nonumber\\
&\text{\ \ (an $O(d)$ distribution-dependent bound)}
\\
\Ep[\Regret(T)]
&=
O\left( \hspace{-0.25em}
d \sqrt{T (\log T \log\log d) \hspace{-0.25em}
 \left(\log T \hspace{-0.25em} + \hspace{-0.25em}\frac{K^2 \log(K (\log T)/\rho)}{\rho^2}\right)}
\right).\nonumber\\
&\text{\ \ (distribution-independent bound)}
\end{align}
\end{thm}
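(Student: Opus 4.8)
The plan is to treat RLSE as the linear-bandit instantiation of RSE (Algorithm~\ref{alg_rse}) and to transport the proof of Theorem~\ref{thm_rse} almost verbatim. The phase structure, the two randomized elimination rules (all-but-one, governed by $U_{p,0}$ and the radius $\Confmaxa{p}$; per-arm, governed by $U_{p,i}$ and $\Confmaxe{p}$), and the budget split $\rho_a=\rho/2$, $\rho_e=\rho/(2(K-1))$ are unchanged; the only differences are that within a phase RLSE pulls arms according to the approximated $G$-optimal design $\piapprox(\mA_p)$ with total budget $\Nlin(p)=16a^{2p}\sigma^2 d$, and that the scalar concentration bound (Lemma~\ref{lem_subg_concentration}) is everywhere replaced by the fixed-sample linear bound (Lemma~\ref{lem_fixed_bound}). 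I would organize the argument in the same three pieces.

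\emph{Replicability.} I would invoke Theorem~\ref{thm_repr_compose} with the phases of Algorithm~\ref{alg_rse}, decision variables $\dec_{p,0}$ (the all-but-one indicator) and $\dec_{p,i}$, $i\in\mA_p$ (the individual-elimination indicators), and good event $\mG_p$ equal to the event of Lemma~\ref{lem_fixed_bound} at level $\delta_p=1/(KTP)$, so that $|\Delta_{ij}-\hat\Delta_{ij}|\le\eps_p\sqrt{\log(1/\delta_p)}=\Confreg{p}$ holds uniformly over $\mA_p$ and $2\sum_p\rho^G_p=O(1/T)$ is negligible (the stated regret bounds are vacuous on the range of $T$ where $O(1/T)>\rho$). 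The three points to check are: (i) $\mA_p$, hence the deterministic design $\piapprox(\mA_p)$ and the allocation $\Nlin_i(p)=\lceil\Nlin(p)\piapprox_i\rceil$, depends only on $\dec_1,\dots,\dec_{p-1}$, so once all earlier decisions agree the two runs use identical pull counts in phase $p$; (ii) for fixed earlier decisions and fixed data the estimate difference $\hat\mu_i^{(1)}(p)-\hat\mu_i^{(2)}(p)=\bx_i^\top(\hat\btheta^{(1)}_p-\hat\btheta^{(2)}_p)$ carries no mean term, so a second application of Lemma~\ref{lem_fixed_bound} controls $|\hat\Delta_i^{(1)}(p)-\hat\Delta_i^{(2)}(p)|$ at scale $\eps_p\sqrt{\log(18K^2P/\rho_e)}$ (resp.\ with $\rho_a$); (iii) because the threshold $(2+U_{p,i})\Confmaxe{p}$ is uniform over an interval of length $\Confmaxe{p}\ge\Cmult\Confrepre{p}$ and $U_{p,i}$ is shared, the two runs straddle it with probability at most (estimate difference)$/\Confmaxe{p}\le\rho_e$ since $\Cmult\ge9/4$ (and likewise $\le\rho_a$ for $\dec_{p,0}$). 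The genuinely delicate point—identical to the RSE proof—is that summing these over the $P=O(\log T)$ phases must not blow up: under the good events, whenever $\Delta_i$ is much larger or much smaller than $\Confmaxe{p}$ the decision $\dec_{p,i}$ is forced to the same value in both runs, so each arm contributes a disagreement probability only from the $O(1)$ ``critical'' phases where $\Confmaxe{p}$ is comparable to $\Delta_i$, and $a\ge2$ keeps that window narrow; collecting, $\sum_{p,i}\rho^{(p,i)}\le\rho_a+(K-1)\rho_e=\rho$ with room to spare, and $\rho\le2\sum_p\rho^G_p+\sum_{p,i}\rho^{(p,i)}\le\rho$.

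\emph{Regret.} I would condition on the single-run good events (failure probability $O(1/T)$, contributing $O(\Scale)=O(1)$ to the expected regret). Lemma~\ref{lem_fixed_bound} then ensures the optimal arm is never eliminated and that a suboptimal arm $i$ is eliminated—individually once $\Confmaxe{p}\lesssim\Delta_i$, or by the all-but-one rule once $\Confmaxa{p}\lesssim\Delta$—by phase $p_i:=\min(p_i^e,p^*)$, where $a^{2p_i^e}=O\!\left((\log T+K^2\log(K(\log T)/\rho)/\rho^2)/\Delta_i^2\right)$ and $a^{2p^*}=O\!\left((\log T+\log(K(\log T)/\rho)/\rho^2)/\Delta^2\right)$, the extra $K^2$ in the former coming from $\rho_e=\rho/(2(K-1))$. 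Arm $i$ is then pulled at most $\sum_{p\le p_i}(\Nlin(p)\piapprox_i+1)=O(a^{2p_i}\sigma^2 d\,\piapprox_i)$ plus lower-order $\lceil\cdot\rceil$ overhead over the $O(d\log\log d)$-size support (geometric sum dominated by the last phase, using $a\ge2$), so $\Ep[\Regret(T)]=\sum_i\Delta_iN_i(T)$ is, using $\sum_i\piapprox_i=1$ and $\Scale=O(1)$, at most the minimum over the two choices of $p_i$: the $p^*$ branch gives the $O(d)$ distribution-dependent bound and the $p_i^e$ branch the other one. The distribution-independent bound then follows from the usual gap-thresholding: split the arms at a threshold $\epsilon$, bound large-gap arms by the gap-dependent expression and small-gap arms by $\epsilon T$, where each arm's pull count is its share $\piapprox_i$ of the per-phase budget over $O(\log T)$ phases—this is where $\log T$ and the $O(d\log\log d)$ support size of $\piapprox$ enter—and optimize $\epsilon$.

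\emph{Main obstacle.} The step requiring real care, and not a verbatim copy of a one-dimensional computation, is the replicability bookkeeping: showing that the per-decision nonreplication probabilities, summed over all $O(\log T)$ phases and all $\le K$ arms, stay below $\rho$. This relies on localizing each arm's possible disagreement to $O(1)$ critical phases together with the precise constants ($\Cmult\ge9/4$, $a\ge2$), and on replacing per-arm Hoeffding by the $G$-optimal fixed-sample bound (Lemma~\ref{lem_fixed_bound}) so that the estimation error—now the linear functional $\bx_i^\top(\hat\btheta_p-\btheta)$—remains controlled uniformly over the surviving set $\mA_p$ with only a factor $d$ (plus a $\log\log d$ from the support of the approximate design) of overhead relative to the scalar case.
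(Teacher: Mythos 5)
Your proposal is essentially correct. For the replicability claim and the $O(d)$ distribution-dependent bound you follow the same route as the paper: the paper's own proof of Theorem~\ref{thm_rlse} simply defers to the RSE analysis (Lemmas~\ref{lem_good_rse}--\ref{lem_decision_rse_prob}), noting that Lemma~\ref{lem_fixed_bound} plays the role of Lemma~\ref{lem_subg_concentration}, and your points (i)--(iii), together with the localization of each decision's disagreement to the $O(1)$ critical phases around $\pstopi{0}$ and $\pstopi{i}$, are exactly that argument. One bookkeeping slip worth flagging: the good event used for \emph{replicability} cannot be the $\Confreg{p}$-level event with failure probability $O(1/T)$ as you declare at the outset; it must be the tighter event $|\Delta_{ij}-\hat\Delta_{ij}(p)|\le\rho_e\Confrepre{p}$ (resp.\ $\rho_a\Confrepra{p}$), i.e.\ Lemma~\ref{lem_fixed_bound} at level $\delta\approx\rho_e/(18K^2P)$, which fails with probability $\rho/18$ per dataset, not $O(1/T)$. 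Your step (ii) implicitly uses this tighter event, and the budget still closes because each per-decision disagreement is $8\rho_e/9$ rather than $\rho_e$, giving $2\cdot\rho/18+\tfrac{8}{9}(\rho_a+(K-1)\rho_e)=\rho$; but as written your accounting ($O(1/T)$ good-event cost plus decision cost already equal to $\rho$) does not add up. The $\Confreg{p}$-level event with $\delta_p=1/(KTP)$ is the one used only for the \emph{regret} analysis.

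Where you genuinely diverge is the distribution-independent bound. The paper applies Cauchy--Schwarz to $\sum_i\Delta_i\sqrt{N_{i,\max}}\sqrt{N_i(T)}$ and pays a factor $\sqrt{P\,d\log\log d}$ for the number of arms with $N_i(T)>0$ (each phase touches only the $O(d\log\log d)$-point support of $\piapprox(\mA_p)$), which is exactly where the $\log T\log\log d$ inside the square root comes from. Your gap-thresholding route (split at $\Delta_i\lessgtr\epsilon$, bound the large-gap arms phase-by-phase using $\sum_{i\in\mA_p}\Delta_i\piapprox_i(\mA_p)\le\max_{i\in\mA_p}\Delta_i$, bound the small-gap arms by $\epsilon T$, optimize $\epsilon$) also works -- one must only be careful that $\piapprox_i(\mA_p)$ varies with $p$ and so cannot be factored out of the sum over phases -- and in fact, carried out carefully, it yields a bound of order $\sqrt{dT}$ times the same log factor, i.e.\ it does not naturally produce the $d\sqrt{T\log T\log\log d}$ form you cite but something \emph{stronger}, which of course still establishes the stated $O(\cdot)$ claim. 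So your route is a legitimate, arguably tighter, alternative to the paper's Cauchy--Schwarz argument; the paper's version has the advantage of reusing the RSE computation verbatim with the support-size observation as the only new ingredient.
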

The first bound depends on $\Delta$ but is $\tilde{O}(1)$ to $K$. The second bound is distribution-independent and is smaller than the existing bound by at least $O(\sqrt{K}/\rho)$ factor (c.f., Table \ref{tbl_comp}). Moreover, for any $K, \rho$, for sufficiently large $T$, the second bound is $O\left(
d \sqrt{T (\log T)^2 \log\log d }
\right)$.
\section{Simulation}\label{sec_sim}
We compared our REC (Algorithm \ref{alg_retc}) and RSE (Algorithm \ref{alg_rse}) with RASMAB (Algorithm 2 of \cite{esfandiari2023replicable}, ``Replicable Algorithm for Stochastic Multi-Armed Bandits'').\footnote{
We did not include Algorithm 1 of \cite{esfandiari2023replicable} because its regret bound is always inferior to RASMAB.}
Two models of $K$-armed Gaussian bandit problems were considered. To ensure fair comparison, as \Existing relies on the Hoeffding inequality, we standardized the variance of the arms at $0.5$. The results were averaged over $\numrun$ runs.
We optimize the amount of exploration in REC and RSE, and \Existing for $\hat{\rho} = 0.3$ by using a grid search.\footnote{In particular, we multiplied RHS of each decision variable in REC for $C$ times. We multiplied RHS of each commitment and individual elimination decision variable in RSE for $C$, $C_2$ times. We divided $\beta$ of \Existing by $C$ times. The values $C, C_2$ of each algorithm were optimized. To make the algorithm comparable, we did not discard sample of previous phases in RASMAB.}
Here, the empirical nonreplication probability $\hat{\rho}$ is
obtained by bootstrapping. Namely, assume that the algorithm results in $S$ different sequences of draws, where the corresponding number of occurrences for each sequence are $N_{(1)},N_{(2)},N_{(3)},\dots,N_{(S)}$. By definition, $\sum_s N_{(s)} = \numrun$. Then, $\hat{\rho} := 1 - \sum_s (N_{(s)}/\numrun)^2$.
\begin{figure}[t!]
    \centering
    \begin{minipage}[t]{0.45\textwidth}
         \centering
         \includegraphics[width=\textwidth]{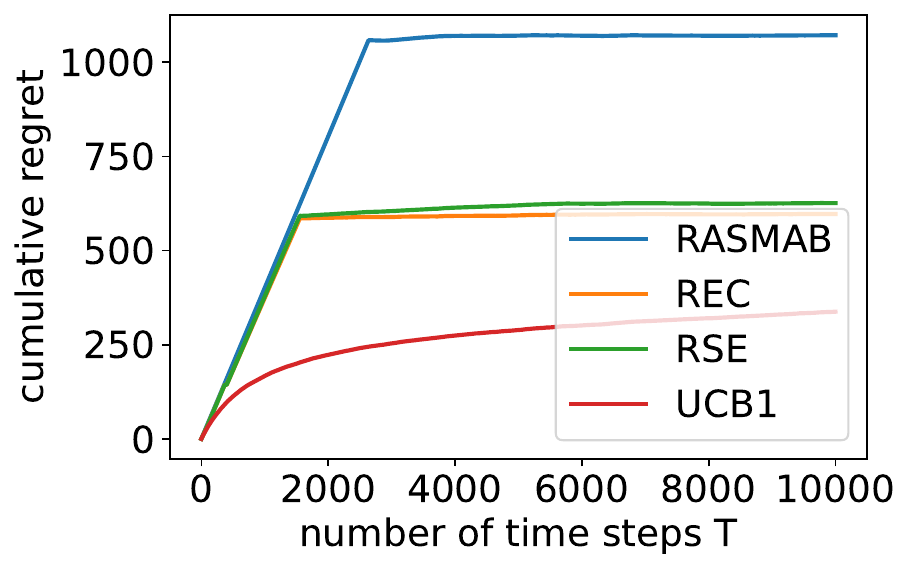}
        Model 1
    \end{minipage}
    \hspace{0.02\textwidth}
    \begin{minipage}[t]{0.45\textwidth}
         \centering
         \includegraphics[width=\textwidth]{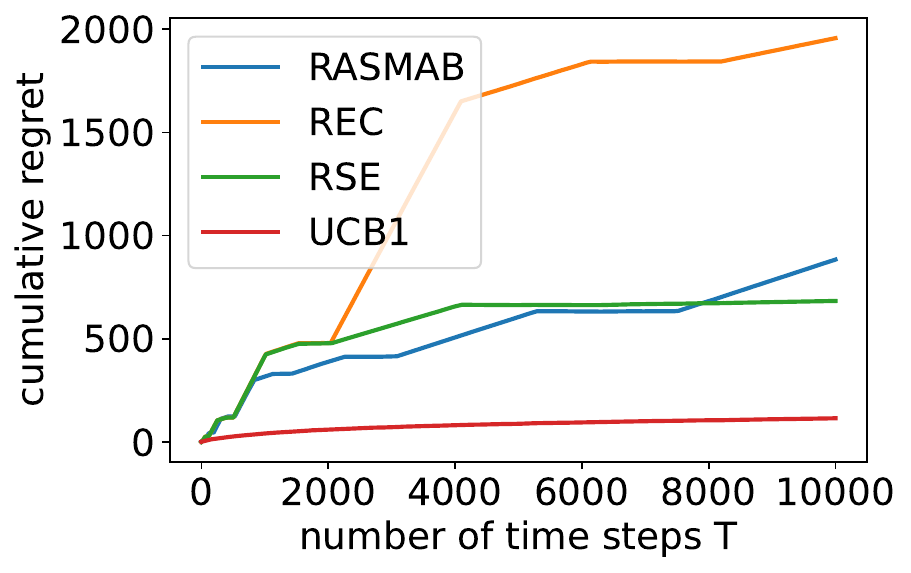}
        Model 2
    \end{minipage}
    \caption{Regret of algorithms. The horizontal axis indicates the number of rounds $t$ from $1$ to $T$, whereas the vertical axis indicates $\Regret(t)$. Results of REC and RSE in Model 1 are very similar.}
    \label{fig:regret_optimized_small}
\end{figure}
\begin{table}[]
\caption{Regret of the algorithms at $T=10,000$ with two-sigma confidence bounds with a plug-in variance.}
\centering
\begin{tabular}{@{}lrr@{}}
\toprule
\textbf{}                 & \multicolumn{1}{c}{\textbf{Model 1}}                                     & \multicolumn{1}{c}{\textbf{Model 2}}                                       \\ \midrule
\textbf{RASMAB}           & \textbf{$1071.3 \pm 11.0$} &  \textbf{$883.7 \pm 6.2$}    \\
\textbf{REC (this paper)} & \textbf{$597.2 \pm 21.1$} & \textbf{$1956.9 \pm 54.2$} \\
\textbf{RSE (this paper)} & \textbf{$625.8 \pm 28.4$} &  \textbf{$683.1 \pm 13.9$}   \\
\textbf{UCB1}             &  \textbf{$337.7 \pm 6.4$}  & \textbf{$114.4 \pm 6.1$}    \\ \bottomrule
\end{tabular}
\label{tbl_regret_optimized_small}
\end{table}
We set the mean parameters as follows: $\bmu = (0.7, 0.3, 0.3, 0.3, 0.3, 0.3, 0.3, 0.3, 0.3, 0.3)$ for Model 1
and $\bmu = (0.9, 0.8, 0.3)$ for Model 2.
The amount of regret is depicted in Figure \ref{fig:regret_optimized_small}. A lower regret signifies superior performance. Being a nonreplicable algorithm, UCB1 naturally outperforms all other replicable algorithms for both cases.
\subsection{
Discussion on the Results}
While the regret analysis of RASMAB is based on a larger confidence bound than that of RSE, the empirical performance of RASMAB under the optimized hyperparameters is similar to RSE where the commitment (i.e., elimination of all arms simultaneously) is never performed.
In other words, REC performs simultaneous elimination of all arms (commitment), whereas RASMAB eliminates each arm independently. RSE integrates both elimination mechanisms.
In Model 1, commitment is more efficient than individual elimination, and thus REC and RSE outperform RASMAB. In Model 2, individual elimination is more efficient than commitment. As a result, RSE and RASMAB outperform REC.
Table \ref{tbl_regret_optimized_small} shows the corresponding values of regret at the final round, along with the two-sigma confidence bounds. This implies that all distinctive values in the figure are statistically significant.
\section{Related Work}\label{subsec_related}
Replicability was introduced by \cite{impagliazzo2022} and they designed replicable algorithms for answering statistical queries, identifying heavy hitters, finding median, and learning halfspaces.
Since then, replicable algorithms have been studied for bandit problems~\citep{esfandiari2023replicable}, \add{reinforcement learning~\citep{DBLP:journals/corr/abs-2305-15284,karbasi2023replicability},} and clustering~\citep{esfandiari2023clustering}.
The equivalence of various stability notions, including replicability and differential privacy~\citep{dwork2014algorithmic} was shown for a broad class of statistical problems~\citep{bun_stability2023}.
However, the equivalence therein does not necessarily guarantee an efficient conversion.
\add{\cite{Kalavasis2023} considered a relaxed notion of replicability.}
\add{Note also that there are several relevant works \cite{dixon2023list,chase2023replicability} that study a different notion of replicability.}
\paragraph{Stability in Sequential Learning}
Stability has also been explored in the context of sequential learning. For example, robustness against corrupted distributions has been examined in the multi-armed bandit problem \citep{DBLP:journals/mansci/KimL16,DBLP:conf/alt/GajaneUK18,10.1007/s10994-018-5758-5,DBLP:journals/corr/abs-2203-03186}. \add{Differential privacy has also been considered in this context \citep{DBLP:conf/nips/ShariffS18,basu2019_dpbandit,DBLP:conf/uai/Hu022}. Differential privacy considers the change of decision against the change of a single data point, whereas in the replicable bandits, we have more than one change of data points between two datasets that are generated from the identical data-generating process.}
Recent work~\citep{dong2023general} showed that an algorithm with a low average sensitivity~\citep{varma2021average} can be transformed to an online learning algorithm with low regret and inconsistency in the random-order setting, and hence in the stochastic setting.
\paragraph{Batched Bandit Problem}
Prior to the introduction of the replicable bandit algorithm, the batched bandit problem was considered \citep{auer2002,AuerO10,Cesa-BianchiDS13,KomiyamaSN13,perchet2015,DBLP:conf/nips/GarivierLK16,10.1287/moor.2017.0928,GaoHRZ19,pmlr-v139-jin21c,EsfandiariKMM21}. In this problem, the algorithm needs to determine the sequence of draws at the beginning of each batch. Existing replicable bandit algorithms in \cite{esfandiari2023replicable}, as well as our algorithms, adopt phased approaches, and one can find similarities in the algorithmic design.
\cite{auer2002} proposed UCB2, which combines UCB with geometric size of the batches.
Regarding the use of explore-then-commit strategy, \cite{perchet2015} considered the two-armed batched bandit problem. They utilized the fact that the termination of the exploration phases in the explore-then-commit algorithm only occurs in a fixed number of rounds, a concept that we also utilize in the proof of our algorithms. However, their algorithm does not guarantee $\rho$-replicability for $\rho < 1/2$. Our REC extends their results by introducing a randomized confidence level to guarantee a further level of replicability. Furthermore, our RSE generalizes both explore-then-commit and successive elimination \citep{GaoHRZ19,esfandiari2023replicable} in a replicable way.
Given $K,\rho$, for sufficiently large $T$, our algorithm's performance is essentially similar to these batched algorithms \cite{perchet2015,GaoHRZ19}.
Note that, \cite{esfandiari2023replicable} also briefly remarked the possibility of the use of the explore-then-commit strategy in replicable bandits.
Regarding the optimized performance of explore-then-commit, a slightly adaptive variant can achieve asymptotically optimal regret bounds \cite{DBLP:conf/colt/Jin0XG21} in standard unknown-gap setting \cite{Lairobbins1985} as well as in the known-gap \cite{DBLP:conf/nips/GarivierLK16} setting.
\section{Discussion}\label{sec_conc}
We have examined $\rho$-replicable multi-armed bandit algorithms.
We demonstrated that the regret of explore-then-commit and elimination algorithms can be expressed as a sum of the standard bandit algorithm's regret and the additional replicability-related regret. For a sufficiently large value of $T$, the latter is negligible. This represents a significant improvement over existing algorithms. For our analysis, we have developed a framework based on decision variables, which can be used for bounding the probability of nonreplication in larger classes of sequential replicable learning problems, such as reinforcement learning problems.
The notion of replicability in this paper is defined to be the probability of two independent runs of an algorithm resulting in the same sequence of decisions. The definition makes sense when we consider the adaptive clinical trials as well as educational applications, where we want to ensure the treatment assignment of the same patient or the same student is the same across different runs.
However, this is a strong notion of replicability, and it is an interesting future direction to consider a relaxed notion of replicability.
 For example, we can consider a notion of replicability where two independent runs of an algorithm result in the same number of draws with high probability.
 A more relaxed notion of replicability can be defined as the one that only guarantees the same number of draws of the top arm.
 Such relaxed replicability is meaningful in conducting statistical testing problems.
Even though we have shown that the leading $\log T$ term is free of non-replicability parameter $\rho$, the $(\Delta \rho)^{-2}$ term, which is larger than $\Delta^{-2}$ of standard bandit algorithms by $\rho^{-2}$, still appears in the sublogarithmic term and it is necessary to have such a term as we show in the lower bound proof. An interesting future direction is to consider whether we can remove the $(\Delta \rho)^{-2}$ term in a relaxed setting. The term is derived from the necessity of estimating the gap $\Delta_i$
with precision of $\rho \times \Delta_i$, which is required for guaranteeing $\rho$-replicability. However, if we assume the gap $\Delta_i$ is known \citep{DBLP:conf/nips/GarivierLK16}, we hypothesize that it is possible to remove the $(\Delta \rho)^{-2}$ term.
In this case, the double explore-then-commit algorithm \citep{DBLP:conf/colt/Jin0XG21} might be adapted to the replicable setting, and it can achieve the optimal regret bound without the $(\Delta \rho)^{-2}$ term.
\subsection{Summary for Practitioners}
Replicability guarantees that the decison of algorithm depends on the initial random seed but is independent of the randomness of the data.
Popular sequential learning algorithms, such as UCB and Thompson sampling, are non-replicable. Explore-then-commit algorithms and batched elimination algorithms are replicable by choosing an appropriate threshold.
This paper showed that the cost of replicability is not multiplicative in terms of $T$ but is additive. That is, for sufficiently large $T$, we can design batched algorithms that are replicable and have the same leading term of regret as non-replicable algorithms, up to a constant factor. Roughly speaking, if
\[
\log T \ge \frac{D^2 \log(K/\rho)}{\rho^2},
\]
then the replicability cost is negligible, where $D$ is the number of decisions in the algorithm, which is $1$ for explore-then-commit algorithms and $K$ for elimination algorithms.
\section*{Acknowledgement}
S.I.~is supported by JSPS KAKENHI Grant Number JP25K03184 and by JST PRESTO, Japan, Grant Number JPMJPR2511.
Y.Y.~is supported by JSPS KAKENHI Grant Number JP24K02903.
\bibliography{references}
\appendix
\begin{table}[h]
\vspace{-1em}
\begin{center}
\small
\caption{Notation in this paper.}
\vspace{-0.5em}
\label{tbl_not}
\renewcommand{\arraystretch}{1.1}
\footnotesize
\begin{tabular}{lll}
symbol & definition
\\ \hline
$K$ & number of the arms \\
$T$ & number of the time steps \\
$\mu_i$ & mean reward of arm $i \in [K]$, which we assume $\mu_1 > \mu_2 > \dots > \mu_K$ \\
$\rho$ & level of non-replication \\
$\Delta_i$ & suboptimality gap ($=\mu_1 - \mu_i$)\\
$\Delta$ & minimum suboptimality gap ($=\mu_1 - \mu_2$)\\
$I_t$ & the arm drawn at time step $t$ \\
$r_t$ & reward at time step $t$ \\
$\sigma$ & subgaussian radius \\
$\eta_t$ & an i.i.d. noise on the reward at time step $t$, which is $\sigma$-subgaussian
\\
$N_i(t)$ & number of draws on arm $i$ for the first $t$ rounds \\
$\Regret(T)$ & regret (performance measure, \eqref{def_regret}) \\
$\rndm$ & randomness. Given the history, an algorithm's behavior is determined by $\rndm$.\\
$\Ind[\mX]$ & $= 1$ if $\mX$ holds or $= 0$ otherwise\\
$P$ & number of phases for a phased algorithm (Definition \ref{def_phases}). \\
$U_p$ & randomness for phase $p$ for a phased algorithm (Definition \ref{def_randomness}). \\
$d_{p,i}$ & $i$-th decision variable at phase $p$ (Definition \ref{def_decision})\\
$\mG_p$ & good event at phase $p$ (Definition \ref{def_decision}) \\
$\rho^G$ & nonreplication probability due to global bad event $\mG^c$
(Definition \ref{def_good})
\\
$\rho^{(p,i)}$ & nonreplication probability due to $d_{p,i}$ (Definition \ref{def_nonrepr})\\
$a$ & algorithmic parameter $(\ge 2)$ (Section \ref{subsec_algone})\\
$N_p$ & $= \lceil
8 a^{2p} \sigma^2
\rceil$ (Section \ref{subsec_algone})
\\
$\hatmu_i(p), \hat{\Delta}(p), \hat{\Delta}_i(p)$ & corresponding empirical analogues at phase $p$ (Sections \ref{subsec_algone} and \ref{sec_algtwo})\\
$\Cmult$ & algorithmic parameter $(\ge 9/4)$ (Section \ref{subsec_algone})\\
$\eps_p$ & $= a^{-p}$ (Section \ref{subsec_algone})\\
$\Confrepr{p}$ & confidence bound used by REC (Section \ref{subsec_algone})\\
$\Confreg{p}$ & confidence bound used by all algorithms (Section \ref{subsec_algone})\\
$\Confmax{p}$ & $= \max(\Cmult \Confrepr{p}, \Confreg{p})$ (Section \ref{subsec_algone})\\
$\mA_p \subset [K]$ & remaining arms at phase $p$ (Section \ref{sec_algtwo}) \\
$U_{p,i}$ & randomness at phase $p$, where $U_p = (U_{p,0}, U_{p,1}, \dots, U_{p,K})$ (Section \ref{sec_algtwo}) \\
$\Confrepra{p}$ & confidence bound used by RSE and RLSE (Section \ref{sec_algtwo})\\
$\Confrepre{p}$ & confidence bound used by RSE and RLSE (Section \ref{sec_algtwo})\\
$\Confmaxa{p}$ & $= \max(\Cmult \Confrepra{p}, \Confreg{p})$\\
$\Confmaxe{p}$ & $= \max(\Cmult \Confrepre{p}, \Confreg{p})$\\
$\rho_a, \rho_e > 0$ & algorithmic parameters for RSE and RLSE (Section \ref{sec_algtwo})\\
$\bx_i \in \Real^d$ & feature vector for arm $i$ (RLSE, Section \ref{sec_linear})\\
$\Nlin_i(p)$ & number of draws of arm $i$ at phase $p$ (RLSE, Section \ref{sec_linear})\\
$p_s$ & characterizing phase for REC (\eqref{ineq_pstop})\\
$\Delta_{ij}$ & $= \mu_i - \mu_j$\\
$\mGreg$ & an event used for regret analysis (\eqref{mainevent_karmed_regret})\\
$p_{s,i}$ & characterizing phases for RSE and RLSE (\eqref{ineq_pstopzero}, \eqref{ineq_pstopi})\\
$\nu$ & model parameter such that $\mu_2 = 1/2 + \nu$ (Section \ref{sec_prooflower})\\
$N_C(\rndm) \in \mathbb{N}$ & integer as a function of randomness $\rndm$ (Section \ref{sec_prooflower})\\
$\mathbb{P}_{\nu, \rndm}, \mathbb{E}_{\nu, \rndm}$ & prob. and expect. on model $(1/2, 1/2+\nu)$ with randomness $\rndm$ (Section \ref{sec_prooflower})\\
$\mathbb{P}_{\nu}, \mathbb{E}_{\nu}$ & prob. and expect. on model $(1/2, 1/2+\nu)$, marginalized over $\rndm$ (Section \ref{sec_prooflower})\\
\end{tabular}
\end{center}
\end{table}
 \clearpage
\vspace{-0.5em}
\section{Notation}
\vspace{-0.5em}
Table \ref{tbl_not} provides a summary of the notation we use.
\subsection{Simulation environment}
The simulation took less than one hour on a Jupyterhub server on a standard desktop machine.
\section{Necessity of Randomization for a Replicable Algorithm}
A trivial deterministic algorithm, such as the round-robin algorithm, is indeed replicable. Therefore, let us focus our interest on algorithms that exhibit small regret. Indeed, any no-regret algorithm is non-replicable without randomization.
\begin{thm}\label{thm_deterministic}
A deterministic no-regret algorithm is non-replicable for any $\rho < 1/2$.
\end{thm}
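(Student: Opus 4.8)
The plan is to exhibit, for any deterministic no-regret algorithm, two data-generating processes (models) and a coupling of their datasets under which the algorithm is forced to behave differently with probability at least $1/2$. The key observation is that a deterministic algorithm has no internal randomness $U$, so $\rho$-replicability collapses to the requirement that, for a single fixed model $\{\mu_i\}$, two independent datasets $\mD^{(1)}, \mD^{(2)}$ yield identical action sequences with probability at least $1-\rho$. First I would reduce to the two-armed case: embed the hard instance in arms $1,2$ and make all other arms obviously bad (or simply ignore them). Then I would pick a model where the two arms have means $\mu_1 = \mu_2$, or more carefully two nearby models, so that the empirical ordering of the arms after the first few samples is essentially a fair coin flip.

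The core argument runs as follows. Consider a model with $\mu_1$ very close to $\mu_2$ (gap $\Delta$ small, to be chosen, or even take the degenerate limit $\Delta = 0$ and then perturb). A no-regret algorithm must, with probability tending to $1$ as $T\to\infty$, eventually commit (in the sense of sampling predominantly) to the arm it believes is best; otherwise it incurs linear regret on some instance. But whether the algorithm's early observations point toward arm $1$ or arm $2$ being better is governed by the sign of a difference of empirical means of i.i.d. subgaussian (e.g.\ Bernoulli) samples, which — when $\Delta$ is small relative to $1/\sqrt{(\text{samples in the relevant phase})}$ — is close to symmetric: it is $+$ with probability close to $1/2$ and $-$ with probability close to $1/2$, independently across the two datasets $\mD^{(1)}$ and $\mD^{(2)}$. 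Hence the event that $\mD^{(1)}$ leads the algorithm to favor arm $1$ while $\mD^{(2)}$ leads it to favor arm $2$ has probability close to $1/2 \cdot 1/2 + 1/2 \cdot 1/2 = 1/2$; on this event the action sequences diverge. This would give nonreplicability for every $\rho < 1/2$.

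The main obstacle, and where care is needed, is making precise the phrase "must eventually favor the apparently-better arm" for an arbitrary deterministic no-regret algorithm, since such an algorithm need not be explore-then-commit and could in principle hedge. The clean way around this is a change-of-measure / two-model argument in the style of standard bandit lower bounds (as invoked for Theorem~\ref{thm_lower_twoarmed}): take models $\bmu^{(a)} = (1/2+\Delta, 1/2)$ and $\bmu^{(b)} = (1/2, 1/2+\Delta)$. A no-regret (uniformly good) algorithm must, on model $\bmu^{(a)}$, play arm $1$ for all but $o(T)$ rounds with probability $\to 1$, and symmetrically play arm $2$ on model $\bmu^{(b)}$. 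Now by Pinsker / the Bretagnolle–Huber inequality, if $\Delta$ is small enough that the KL-divergence between the two models accumulated over the rounds the algorithm actually samples is $o(1)$ — which can be arranged by taking $\Delta$ shrinking with $T$ — then the algorithm cannot distinguish the two models, so its behavior on a dataset drawn from $\bmu^{(a)}$ is statistically close to its behavior on a dataset from $\bmu^{(b)}$. One then couples: run the algorithm on $\mD^{(1)} \sim \bmu^{(a)}$ and $\mD^{(2)} \sim \bmu^{(b)}$ (both legitimately "the same model" up to the vanishing perturbation, or invoke a continuity/limiting argument at $\Delta=0$); with probability bounded below by something approaching $1/2$ the two runs end up committed to different arms, contradicting $\rho$-replicability for $\rho < 1/2$.

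To wrap up I would state the quantitative version: fix any $\rho < 1/2$, choose $\Delta = \Delta(T)$ small enough (it suffices that the number of samples any no-regret algorithm takes before its behavior stabilizes satisfies $n \Delta^2 = o(1)$, which holds for e.g.\ $\Delta = T^{-1/4}$ against the $O(\log T / \Delta)$ regret guarantee), and conclude that the replication probability is at most $1/2 + o(1) < 1 - \rho$ for $T$ large. Since the definition of $\rho$-replicability must hold for all $T$ (or at least all sufficiently large $T$), the deterministic algorithm fails to be $\rho$-replicable. The one point to be careful about in the write-up is that Definition~\ref{rem_rhorepr} fixes a single data-generating process for $\mD^{(1)}$ and $\mD^{(2)}$; so the cleanest rigorous route is to work with the single degenerate model $\mu_1 = \mu_2 = 1/2$ (where "no-regret" still forces the algorithm's long-run play to be determined by the — now genuinely symmetric — empirical fluctuations), or to note that replicability at model $\bmu^{(a)}$ plus indistinguishability of $\bmu^{(a)}$ from $\bmu^{(b)}$ forces a contradiction with the no-regret property on $\bmu^{(b)}$ — and I would present whichever of these the authors' proof ends up using.
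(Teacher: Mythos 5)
Your proposal correctly identifies the target --- a single model at which the algorithm's long-run commitment is an (approximately) fair coin, so that two independent datasets disagree with probability about $1/2$ --- but neither of the two mechanisms you offer for locating that model actually works, and the paper's key device, the intermediate value theorem, is absent. Your first route (work at the degenerate model $\mu_1=\mu_2=1/2$) fails because at that model every action sequence has zero regret, so the no-regret hypothesis imposes no constraint whatsoever; a deterministic algorithm whose tie-breaking is biased toward arm $1$ can have $\Prob[N_1(T)\ge T/2]$ arbitrarily close to $1$ there. The ``genuinely symmetric empirical fluctuations'' do not help, because the algorithm itself need not be symmetric in the arms. Your second route (two nearby models $(1/2+\Delta,1/2)$ and $(1/2,1/2+\Delta)$ with $\Delta=\Delta(T)\to 0$ plus a KL/Pinsker indistinguishability argument) has two problems: (a) ``no-regret'' is an instance-wise guarantee, so it does not control the regret, nor the number of pulls of the inferior arm, at a $T$-dependent instance --- your bound $n\Delta^2=o(1)$ assumes an $O(\log T/\Delta)$ regret rate that the hypothesis does not supply (a no-regret algorithm may pull the inferior arm $T^{0.9}$ times, making the accumulated KL diverge for $\Delta=T^{-1/4}$); and (b) replicability is defined for two datasets from a \emph{single} model, so closeness of behavior \emph{across} two models needs extra work to convert into disagreement of two independent runs at one model.

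The paper's proof sidesteps all of this with a fixed $\Delta>0$. Let $f(\mD)=\Ind[N_1(T)\ge T/2]$, which is well defined because the algorithm is deterministic. The map $(\mu_1,\mu_2)\mapsto \Prob[f(\mD)=1]$ is continuous. For $T$ large, no-regret at the two \emph{fixed} instances $(1/2+\Delta,1/2)$ and $(1/2-\Delta,1/2)$ forces $\Prob[f(\mD)=1]>2/3$ at the first and $<1/3$ at the second; the intermediate value theorem then yields a model $(\mu_1,1/2)$ with $\Prob[f(\mD)=1]=1/2$ exactly. At that model two independent datasets satisfy $f(\mD^{(1)})\ne f(\mD^{(2)})$ with probability $2\cdot\frac12\cdot\frac12=\frac12$, and on that event the draw sequences differ, so the algorithm is not $\rho$-replicable for any $\rho<1/2$. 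You gesture at ``a continuity/limiting argument at $\Delta=0$'' in passing, but the continuity that matters is in the mean parameter over the whole interval, used to \emph{hit} probability $1/2$ exactly rather than to approximate it; without that step your argument does not close.
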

\begin{proof}[Proof of Theorem \ref{thm_deterministic}]
Consider the two-armed bandit problem. By assumption, the draw sequence is deterministic given the dataset \( \mathcal{D} = (X_{i,t})_{i \in [K], t \in [T]} \). Since the algorithm is deterministic, there exists a 0,1-valued function \( f \) over datasets such that \( f(\mathcal{D}) = 1 \) implies \( N_1(T) \geq T/2 \). Note that \( \mathbb{P}[f(\mathcal{D}) = 1] \) is a continuous function of \( (\mu_1, \mu_2) \), as the probability of each realization of the sequence is continuous with respect to \( (\mu_1, \mu_2) \). Consider an arbitrary \( \Delta > 0 \). Using the fact that the algorithm has small regret, for sufficiently large \( T \), we have \( \mathbb{P}[f(D) = 1] > 2/3 \) for the model \( (1/2 + \Delta, 1/2) \) and \( \mathbb{P}[f(D) = 1] < 1/3 \) for the model \( (1/2 - \Delta, 1/2) \). By the continuity of \( \mathbb{P}[f(D) = 1] \), there exists a model \( (\mu_1, 1/2) \) such that \( \mathbb{P}[f(D) = 1] = 1/2 \). For this model, the algorithm is \textbf{non}-\( \rho \)-replicable for any \( \rho < 1/2 \).
\end{proof}
Note that the randomness introduced in \cite{perchet2015} is to shuffle the allocation of arms (e.g., pull arms $1, 1, 2, 2$ or $1, 2, 1, 2$ for the exploration phase of $t = 1, 2, 3, 4$). Under the i.i.d. rewards, shuffling does not change the distribution of rewards, and thus, the results above still apply.
\section{Replicable Linear Successive Elimination}
\label{sec_rlsr}
The Replicable Linear Successive Elimination (RLSE) algorithm is described in Algorithm \ref{alg_rlsr}.
\begin{algorithm}[t]
 Initialize the candidate set $\mA_1 = [K]$.\\
 \While{$p=1,2,\dots,P$}{
  Draw shared random variables $U_{p,i} \sim \Unif(0, 1)$ for each $i=0,1,2,\dots,K$.\\
  Draw each arm $i$ for $\lceil N_p \piapprox_i \rceil$ times. \Comment{Approximated G-optimal exploration}\\
  $\mA_{p+1} \gets \mA_p$.\\
  \If{$\hat{\Delta}(p) \ge \add{(2 + U_{p,0}) \Confmaxa{p}}$}{ \label{line_allelim_linear}
    $\mA_{p+1} = \{\argmax_i \hatmu_i(p)\}$. \Comment{Eliminate all arms except for one.}
  }
  \For{$i \in \mA_{p+1}$}{
      \If{$\hat{\Delta}_i(p) \ge \add{(2 + U_{p,i})\Confmaxe{p}}$}{ \label{line_eachelim_linear}
        $\mA_{p+1} \gets \mA_{p+1} \setminus \{i\}$.
          \Comment{Eliminate arm $i$.}
      }
   }
 }
 \caption{Replicable Linear Successive Elimination (RLSE)}
\label{alg_rlsr}
\end{algorithm}
\section{Proofs on General Bound}
\begin{proof}[Proof of Theorem \ref{thm_repr_compose}]
Let $\mG = \bigcap_p \mG_p$.
We have
\begin{align}
\lefteqn{
\rho :=
\Prob_{U, \mathcal{D}^{(1)}, \mathcal{D}^{(2)}}\left[
(I^{(1)}_1, I^{(1)}_2,\dots,I^{(1)}_T) \ne
(I^{(2)}_1, I^{(2)}_2,\dots,I^{(2)}_T)
\right]
}\\
&\le
\Prob[(I^{(1)}_1, I^{(1)}_2,\dots,I^{(1)}_T) \ne
(I^{(2)}_1, I^{(2)}_2,\dots,I^{(2)}_T)
,\mG^{(1)},\mG^{(2)}] + \Prob_{\mathcal{D}^{(1)}}\left[\mG^c\right] + \Prob_{\mathcal{D}^{(2)}}\left[\mG^c\right]\\
&=
\Prob[(I^{(1)}_1, I^{(1)}_2,\dots,I^{(1)}_T) \ne
(I^{(2)}_1, I^{(2)}_2,\dots,I^{(2)}_T)
,\mG^{(1)},\mG^{(2)}] + 2 \rho^G\\
&\text{\ \ \ \ (by Definition \ref{def_good})}\\
&\le
\Prob\left[(\dec_p^{(1)})_{p=1}^P \ne (\dec_p^{(2)})_{p=1}^P
,\mG^{(1)},\mG^{(2)}\right] + 2 \rho^G\\
&\text{\ \ \ \ (by definition of decision variables)}\\
&\le
\sum_p \Prob\left[\dec_p^{(1)} \ne \dec_p^{(2)},
\cap_{p'=1}^{p-1}\left\{\dec_{p'}^{(1)} = \dec_{p'}^{(2)}\right\}
,\mG^{(1)},\mG^{(2)}\right] + 2 \rho^G\\
&\le
\sum_p \Prob\left[\dec_p^{(1)} \ne \dec_p^{(2)},
\cap_{p'=1}^{p-1}\left\{\dec_{p'}^{(1)} = \dec_{p'}^{(2)},\mG_{p'}^{(1)},\mG_{p'}^{(2)}\right\}
,\mG_p^{(1)},\mG_p^{(2)}\right] + 2 \rho^G\\
&\le
\sum_p \Prob\left[\dec_p^{(1)} \ne \dec_p^{(2)}\,\biggl|\,
\cap_{p'=1}^{p-1}\left\{\dec_{p'}^{(1)} = \dec_{p'}^{(2)},\mG_{p'}^{(1)},\mG_{p'}^{(2)}\right\}
,\mG_p^{(1)},\mG_p^{(2)}\right] + 2 \rho^G\\
&\le
\sum_{p,i} \Prob\left[\dec_{p,i}^{(1)} \ne \dec_{p,i}^{(2)}\,\biggl|\,
\cap_{p'=1}^{p-1}\left\{\dec_{p'}^{(1)} = \dec_{p'}^{(2)},\mG_{p'}^{(1)},\mG_{p'}^{(2)}\right\}
,\mG_p^{(1)},\mG_p^{(2)}\right] + 2 \rho^G\\
&\text{\ \ \ \ (by union bound)}\\
&=
\sum_{p,i} \rho^{(p,i)} + 2 \rho^G\\
&\text{\ \ \ \ (by Definition \ref{def_nonrepr})}.
\label{ineq_sumnonrepr}
\end{align}
\end{proof}
\section{Proofs on Algorithm \ref{alg_retc}}
\subsection{Stopping time}\label{subsec_stop}
\add{Let
\begin{equation}\label{ineq_pstop}
\pstop = \min_{p}\left\{p \in [P]:
\Confmax{p} \le \frac{10\Delta}{17}
\right\}.
\end{equation}
In the proof, we show that the algorithm is likely to break the loop at phase $p_s$ or $p_{s+1}$.
Since $\Confmax{p} = \eps_p \max\left(
\sqrt{ \log(KTP) }, \Cmult \sqrt{ \frac{\log(18K^2P/\rho)}{\rho^2} }
\right)$,
we can see that
\begin{equation}\label{ineq_npstop}
N_{p_s} \le 8 a^2 \sigma^2 \left( \frac{17}{10 \Delta} \right)^2 \left(
\log(KTP) + (\Cmult)^2 \frac{\log(18K^2P/\rho)}{\rho^2}
\right).
\end{equation}
}
\subsection{Replicability of Algorithm \ref{alg_retc}}\label{sec_retc_replic}
This section bounds the probability of nonreplication of Theorem \ref{thm_retc}. To do so, we use the general bound of Section \ref{subsec_genbound}.
Let the good event (Definition \ref{def_good}) be
\begin{align}\label{mainevent_karmed}
\mGrepr &= \bigcap_{p\in[P]} \mGrepr_p\\
\mGrepr_p &=
\bigcap_{i,j\in[K]} \left\{
|\Delta_{ij} - \hat{\Delta}_{ij}(p)|
\le \add{\rho\Confrepr{p}}
\right\}.
\end{align}
Event $\mGrepr$ states that all estimators lie in a region that is $\rho$ times smaller than $\Confrepr{p}$.
\begin{lem}\label{lem_good_etc}
Event $\mGrepr$ holds with probability at least $1- \rho/18$.
\end{lem}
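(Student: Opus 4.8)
The plan is to apply the concentration inequality from Lemma~\ref{lem_subg_concentration} to each pair $(i,j)$ at each phase $p$, and then combine the failure probabilities with a union bound. First I would observe that for a fixed phase $p$, each arm $i$ has been drawn exactly $N_p = \lceil 8 a^{2p}\sigma^2 \rceil$ times, so the empirical mean $\hatmu_i(p)$ is the average of $N_p$ independent $\sigma$-subgaussian samples with mean $\mu_i$. Hence $\hatmu_i(p) - \hatmu_j(p) - (\mu_i - \mu_j)$ is an average involving $N_p$ independent samples; a standard argument (e.g.\ viewing the difference as a single centered average, or applying the bound to each arm separately and adding) shows that $\hat{\Delta}_{ij}(p) - \Delta_{ij}$ is sub-gaussian with an effective sample count proportional to $N_p$. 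Concretely, I would show
\[
\Prob\left[ |\Delta_{ij} - \hat{\Delta}_{ij}(p)| \ge \rho \Confrepr{p} \right] \le 2\exp\left( \frac{-(\rho \Confrepr{p})^2 N_p}{4\sigma^2} \right),
\]
where the factor $4$ (rather than $2$) absorbs the fact that two empirical means are involved.

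Next I would plug in the definitions $\eps_p = a^{-p}$, $\Confrepr{p} = \eps_p \sqrt{\log(18K^2P/\rho)/\rho^2}$, and $N_p \ge 8 a^{2p}\sigma^2$. Then $(\rho\Confrepr{p})^2 N_p / (4\sigma^2) \ge (\rho^2 \cdot a^{-2p} \cdot \log(18K^2P/\rho)/\rho^2) \cdot 8 a^{2p}\sigma^2 / (4\sigma^2) = 2\log(18K^2P/\rho)$, so the per-pair, per-phase failure probability is at most $2\exp(-2\log(18K^2P/\rho)) = 2/(18K^2P/\rho)^2 \le \rho/(18K^2P)$ (using $18K^2P/\rho \ge 2$, which holds comfortably). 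Summing over the at most $K^2$ pairs $(i,j)$ and the $P$ phases gives a total failure probability of at most $K^2 \cdot P \cdot \rho/(18 K^2 P) = \rho/18$, so $\Prob[\mGrepr] \ge 1 - \rho/18$, as claimed.

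The main thing to be careful about is the bookkeeping of constants: whether the difference of two empirical means costs a factor of $2$ inside the exponent or a factor of $2$ in $\eps$, and whether to bound each $|\hatmu_i(p) - \mu_i|$ by $\rho\Confrepr{p}/2$ separately (the cleanest route, giving the triangle-inequality bound $|\Delta_{ij}-\hat\Delta_{ij}(p)| \le \rho\Confrepr{p}$ on the intersection of the two single-arm events). I expect no real obstacle here — this is a routine concentration-plus-union-bound argument — but the numerology must line up so that the constant $8$ in $N_p$, the constant $18$ in the log, and the target $\rho/18$ are mutually consistent. I would also note that the bound is uniform over the two datasets in the sense that the same argument applies to $\mD^{(1)}$ and to $\mD^{(2)}$ separately; Lemma~\ref{lem_good_etc} as stated concerns a single run, and the doubling to $2\sum_p \rho_p^G$ is handled later in Theorem~\ref{thm_repr_compose}.
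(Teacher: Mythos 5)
Your proposal is correct and follows essentially the same route as the paper: apply the subgaussian concentration bound (Lemma~\ref{lem_subg_concentration}) to each gap estimate $\hat{\Delta}_{ij}(p)-\Delta_{ij}$ at deviation level $\rho\,\Confrepr{p}$, note that the choice $N_p=\lceil 8a^{2p}\sigma^2\rceil$ makes the exponent at least $\log(18K^2P/\rho)$, and union bound over pairs and phases to get $\rho/18$. The only difference is bookkeeping: you use the variance-proxy $4\sigma^2$ for the difference of two independent empirical means (giving exponent $2\log(18K^2P/\rho)$, which you then relax), whereas the paper uses the looser $8\sigma^2$ from treating the difference as $2\sigma$-subgaussian; both close the same union bound.
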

It is easy to see that the only decision variable at phase $p$ is
\[
\dec_{(p,0)} := \Ind\left[\hat{\Delta}(p) \ge \add{(2 + U_p) \Confmax{p} } \right].
\]
\begin{lem}\label{lem_decision_etc}
Under $\mG$, the probability of nonreplication is $\rho^{(p,0)} = 0$ for any $p \ne \pstop$.
\end{lem}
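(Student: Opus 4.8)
\textbf{Proof plan for Lemma \ref{lem_decision_etc}.}

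The plan is to argue that, conditioned on the good events $\mGrepr^{(1)}, \mGrepr^{(2)}$ and on all prior decision variables agreeing (here there are none before phase $p$ that matter, since the loop has not broken), the decision variable $\dec_{(p,0)}$ is essentially deterministic for every phase $p$ except the critical phase $\pstop$. The key quantities are the true gap $\Delta$ and the empirical gap $\hat{\Delta}(p)$, which under $\mGrepr$ satisfies $|\hat{\Delta}(p) - \Delta| \le \rho \Confrepr{p}$ (this follows from the definition of $\mGrepr_p$ applied to the top two arms, since $\hat\Delta(p)$ is a difference of two empirical means each of which concentrates; one needs to be a little careful that $\hat\Delta(p)$ is the gap between the \emph{empirically} top two arms, but under $\mGrepr$ this coincides with the true ordering for the relevant arms or at worst inflates the error by a constant, which can be absorbed). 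I would then split into two cases based on whether $p < \pstop$ or $p > \pstop$.

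For $p < \pstop$: by minimality of $\pstop$ in \eqref{ineq_pstop} we have $\Confmax{p} > \tfrac{10\Delta}{17}$, equivalently $\Delta < \tfrac{17}{10}\Confmax{p}$. Since the threshold in the decision variable is $(2+U_p)\Confmax{p} \ge 2\Confmax{p}$, and since $\hat\Delta(p) \le \Delta + \rho\Confrepr{p} \le \Delta + \Confmax{p} < \tfrac{17}{10}\Confmax{p} + \Confmax{p} < 2\Confmax{p}$ (using $\rho \le 1$ and $\Confrepr{p} \le \Confmax{p}$, and the explicit bound $\tfrac{17}{10}+1 = 2.7$ — here I'd need $\Cmult \ge 9/4$ and the exact constants to make $2.7\Confmax{p}$ vs. the randomized threshold work; more precisely one wants $\hat\Delta(p) < (2+U_p)\Confmax{p}$ to hold surely, so the clean route is to show $\hat\Delta(p)$ cannot reach even $2\Confmax{p}$ when $p$ is a few phases before $\pstop$, and handle $p$ immediately below $\pstop$ by the same constant bookkeeping that forces the break to happen at $\pstop$ or $\pstop+1$). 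The point is that for $p < \pstop$ the indicator is forced to $0$ on both datasets, hence they agree and $\rho^{(p,0)} = 0$.

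For $p > \pstop$: now $\Confmax{p} \le \Confmax{\pstop} \le \tfrac{10\Delta}{17}$ and in fact $\Confmax{p}$ keeps shrinking geometrically (factor $1/a \le 1/2$ per phase), so $\hat\Delta(p) \ge \Delta - \rho\Confrepr{p} \ge \Delta - \Confmax{p}$ becomes comfortably larger than $(2+U_p)\Confmax{p} \le 3\Confmax{p}$: indeed $\Delta - \Confmax{p} \ge \tfrac{17}{10}\Confmax{\pstop} - \Confmax{p}$, and since $\Confmax{p}\le \Confmax{\pstop}/a$ this is at least $(\tfrac{17}{10} a - 1)\Confmax{p} \ge 3\Confmax{p}$ for $a \ge 2$. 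Hence the indicator is forced to $1$ on both datasets — but of course once it fires the loop breaks, so "$p > \pstop$" only ever occurs if the loop failed to break at $\pstop$, and in that event it is forced to break at $\pstop+1$; either way, for every realized phase other than $\pstop$ itself the two runs make the identical (forced) decision, giving $\rho^{(p,0)} = 0$. The main obstacle is purely the constant-chasing: pinning down exactly why the randomized slack $U_p \in (0,1)$ together with the factor-$a$ geometric decay of $\Confmax{p}$ guarantees that $\pstop$ is the \emph{only} phase where $\hat\Delta(p)$ can straddle the randomized threshold $(2+U_p)\Confmax{p}$, so that every other phase contributes zero nonreplication probability. I would organize this as: (i) establish $|\hat\Delta(p)-\Delta|\le\Confmax{p}$ under $\mGrepr$; (ii) show $p<\pstop \Rightarrow \hat\Delta(p) < 2\Confmax{p} \le$ threshold; (iii) show $p\ge \pstop+1 \Rightarrow \hat\Delta(p) \ge 3\Confmax{p} \ge$ threshold; (iv) conclude the decision is deterministic, hence replicated, off of $\pstop$.
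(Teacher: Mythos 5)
Your plan is essentially the paper's own proof: under $\mGrepr$ bound $|\hat\Delta(p)-\Delta|$, show the indicator is forced to $0$ for $p<\pstop$ and forced to $1$ for $p\ge\pstop+1$ (so the loop breaks by $\pstop+1$ at the latest), and conclude that only $p=\pstop$ can contribute nonreplication. The one thing you leave open is exactly where your written arithmetic fails, and the paper closes it the way you guessed: since $\Confmax{p}=\max(\Cmult\Confrepr{p},\Confreg{p})\ge\Cmult\Confrepr{p}$ with $\Cmult\ge 9/4$ and $\rho\le 1/2$, the deviation under $\mGrepr$ is $\rho\Confrepr{p}\le\tfrac{\rho}{\Cmult}\Confmax{p}\le\tfrac{2}{9}\Confmax{p}$, not the loose $\Confmax{p}$ you substituted. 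That replaces your $\tfrac{17}{10}+1=2.7$ with $\tfrac{17}{10}+\tfrac{2}{9}<2$ in the $p<\pstop$ case, and fixes the $p\ge\pstop+1$ case, where your claimed $\bigl(\tfrac{17}{10}a-1\bigr)\Confmax{p}\ge 3\Confmax{p}$ is false at $a=2$ (it gives only $2.4\,\Confmax{p}$): with the sharper bound one gets $\hat\Delta(p)\ge\Delta-\tfrac{2}{9}\Confmax{p}\ge\bigl(\tfrac{17}{5}-\tfrac{2}{9}\bigr)\Confmax{p}>3\Confmax{p}\ge(2+U_p)\Confmax{p}$, using $\Delta\ge\tfrac{17}{10}\Confmax{\pstop}\ge\tfrac{17}{10}a\Confmax{p}$ from the geometric decay. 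With that substitution both of your numerical obstacles disappear and the argument is complete; the remaining caveat you raise about $\hat\Delta(p)$ being the gap between the \emph{empirically} top two arms is glossed over in the paper as well.
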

Lemma \ref{lem_decision_etc} states that the only effective decision variable is that of phase $\pstop$.
\begin{lem}\label{lem_decision_etc_prob}
Under $\mG$, the probability of nonreplication at each decision point is at most $\rho^{(p,0)} = 8\rho/9$ for $p = \pstop$.
\end{lem}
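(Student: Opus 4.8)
The plan is to isolate the single decision variable $\dec_{(\pstop,0)} = \Ind[\hat{\Delta}(\pstop) \ge (2+U_{\pstop})\Confmax{\pstop}]$ and exploit the fact that $U_{\pstop}$ is \emph{shared} across the two runs: a nonreplication $\dec_{(\pstop,0)}^{(1)} \ne \dec_{(\pstop,0)}^{(2)}$ can occur only when the common threshold $(2+U_{\pstop})\Confmax{\pstop}$ lands in the interval spanned by the two gap estimators $\hat{\Delta}^{(1)}(\pstop)$ and $\hat{\Delta}^{(2)}(\pstop)$. Conditioned on the event in Definition \ref{def_nonrepr}, the good events $\mGrepr_{\pstop}^{(1)},\mGrepr_{\pstop}^{(2)}$ hold and will pin both estimators to a short interval around the true minimum gap $\Delta$; meanwhile $U_{\pstop}$ is drawn afresh at phase $\pstop$ and is not referenced by any statistic in the conditioning event (which depends on the datasets and on $U_1,\dots,U_{\pstop}$ only through quantities not involving $U_{\pstop}$), so it remains $\Unif(0,1)$ conditionally, i.e.\ the threshold is spread uniformly over an interval of width $\Confmax{\pstop}\ge \Cmult\Confrepr{\pstop}$. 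Then ``a uniform point hits a short interval with small probability'' closes the argument.

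The one substantive step is to show that under $\mGrepr_{\pstop}$ arm $1$ is the strict empirical leader and $|\hat{\Delta}(\pstop) - \Delta| \le \rho\Confrepr{\pstop}$. Using the definition $\pstop = \min\{p : \Confmax{p} \le 10\Delta/17\}$ together with $\Confrepr{p} \le \Confmax{p}/\Cmult$, $\Cmult \ge 9/4$, and $\rho\le 1/2$, one gets $\rho\Confrepr{\pstop} \le (10\Delta/17)/(9/4) < \Delta$, so under $\mGrepr_{\pstop}$ we have $\hatmu_1(\pstop) - \hatmu_j(\pstop) = \hat{\Delta}_{1j}(\pstop) \ge \Delta_j - \rho\Confrepr{\pstop} > 0$ for every $j\ne 1$. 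Writing $\hat{\Delta}(\pstop) = \hatmu_1(\pstop) - \hatmu_{j^*}(\pstop)$ with $j^* = \argmax_{j\ne1}\hatmu_j(\pstop)$, one bound is $\hat{\Delta}(\pstop) \le \hat{\Delta}_{12}(\pstop) \le \Delta + \rho\Confrepr{\pstop}$ and the other is $\hat{\Delta}(\pstop) = \hat{\Delta}_{1j^*}(\pstop) \ge \Delta_{1j^*} - \rho\Confrepr{\pstop} \ge \Delta - \rho\Confrepr{\pstop}$. Applying this in each run yields $|\hat{\Delta}^{(1)}(\pstop) - \hat{\Delta}^{(2)}(\pstop)| \le 2\rho\Confrepr{\pstop}$, so the interval the threshold must hit has length at most $2\rho\Confrepr{\pstop}$; since $(2+U_{\pstop})\Confmax{\pstop}$ is uniform on an interval of length $\Confmax{\pstop}$,
\[
\rho^{(\pstop,0)} \le \frac{2\rho\Confrepr{\pstop}}{\Confmax{\pstop}} \le \frac{2\rho\Confrepr{\pstop}}{\Cmult\Confrepr{\pstop}} = \frac{2\rho}{\Cmult} \le \frac{8\rho}{9}.
\]

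The main obstacle is exactly the bookkeeping in the middle step: one must invoke the precise definition of $\pstop$ (and the relation $\Confmax{p} \ge \Cmult\Confrepr{p}$) to guarantee that the empirical leader is arm $1$ in both runs \emph{at this particular phase} — which is why the lemma is stated for $p = \pstop$ rather than earlier phases — and one must confirm that conditioning on the agreement of earlier decision variables and on the good events leaves the fresh randomness $U_{\pstop}$ uniform. Once those two points are in place, the probability estimate is the single line of arithmetic displayed above, and no further calculation is needed.
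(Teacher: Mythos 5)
Your proposal is correct and follows essentially the same route as the paper's proof: the shared uniform threshold $(2+U_{\pstop})\Confmax{\pstop}$ is spread over an interval of width $\Confmax{\pstop}$, the good event confines the two runs' gap estimators to within $2\rho\Confrepr{\pstop}$ of each other, and the ratio $2\rho\Confrepr{\pstop}/\Confmax{\pstop} \le 2\rho/\Cmult \le 8\rho/9$ closes the argument. Your extra step establishing that arm $1$ is the strict empirical leader (so that $|\hat{\Delta}(\pstop)-\Delta|\le\rho\Confrepr{\pstop}$ despite the argmax in the definition of $\hat{\Delta}$) is a detail the paper's proof glosses over, but it does not change the approach.
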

\begin{proof}[Proof of nonreplicability part of Theorem \ref{thm_retc}]
Theorem \ref{thm_repr_compose} and Lemmas \ref{lem_good_etc}--\ref{lem_decision_etc_prob} imply that the probability of misidentification is at most $2 \times \rho/18 + 8\rho/9 = \rho$, which completes the proof.
\end{proof}
In the following, we derive Lemmas \ref{lem_good_etc}--\ref{lem_decision_etc_prob}.
\begin{proof}[Proof of Lemma \ref{lem_good_etc}]
Since $\Delta_{ij} = \mu_i - \mu_j$, $\hat{\Delta}_{ij} - \Delta_{ij}$ is estimating the sum of two $\Rsubg$-subgaussian random variables, which is a $2\Rsubg$-subgaussian random variable.
By using Lemma \ref{lem_subg_concentration} and taking a union bound over all possible $K(K-1)/2$ pairs of $ij$ and phases $1,\dots,P$,
\add{
Event $\mG$ holds with high probability:
\begin{align}\label{ineq_mahighprob}
\Prob[
\mG
]
&\ge 1 - K(K-1)P
\exp\left(
- \frac{ (\rho \Confrepr{p})^2 N_p}{8 \sigma^2}
\right)
\\
&\ge 1 - K(K-1)P
\exp\left(
- \log(18 K^2 P/\rho)
\right)
\text{\ \ \ \ (by definition of $N_p, \Confrepr{p}$)}
\\
&\ge 1 - K(K-1)P \times \frac{\rho}{18 K^2 P} \ge 1 - \frac{\rho}{18},
\end{align}
which completes the proof.
}
\end{proof}
\begin{proof}[Proof of Lemma \ref{lem_decision_etc}]
First, we show that there are at most $2$ phases where the break from the loop (i.e., $\min_p \{p: \dec_{(p,1)} = 1\}$) occurs.
We first show that a break never occurs if $p < p_s$, which implies
\[
\add{\Delta < \frac{17}{10} \Confmax{p},}
\]
\add{
and thus
\begin{align}
\hat{\Delta}(p)
&\le \Delta + \rho \Confrepr{p} \text{\ \ \ \ (by $\mGrepr$)}\\
&\le \Delta + \frac{\rho}{\Cmult} \Confmax{p} \\
&\le \Delta + \frac{4\rho}{9} \Confmax{p} \\
&\colt{<} (\frac{17}{10}  + \frac{4\rho}{9}) \Confmax{p} \\
&\le 2 \Confmax{p} \text{\ \ \ \ (by $\rho \le 1/2$)}\\
&\le (2 + U_p) \Confmax{p}.
\end{align}
}
We next consider the case where $p = p_s + 1$.
In this case, we have
\[
\add{\Delta \ge \frac{17}{5} \Confmax{p}.}
\]
This implies a break, because
\add{
\begin{align}
\hat{\Delta}(p)
&\ge \Delta - \rho \Confrepr{p} \text{\ \ \ (by $\mG$)} \\
&\ge \Delta - \frac{\rho}{\Cmult} \Confmax{p} \\
&\ge \frac{17}{5} \Confmax{p} -  \frac{4\rho}{9} \Confmax{p} \\
&\ge 3 \Confmax{p} \text{\ \ \ (by $\rho \le 1/2$)}\\
&\ge (2 + U_p) \Confmax{p}.
\end{align}
}
The above results, combined with the fact that $\eps_p$ is halved at each phase, imply that the only decision points where the decision variable can take both values $\{0,1\}$ are $\pstop, \pstop+1$. Therefore, if the decision variable at phase $\pstop$ matches, the decision variables at $\pstop+1$ and subsequent phases match.
\end{proof}
\begin{proof}[Proof of Lemma \ref{lem_decision_etc_prob}]
For phase $p_s$, we bound the probability of nonreplication.
At the end of phase $p = p_s$, it utilizes the randomness $U_p \sim \Unif(0, 1)$. The random variable $(2 + U_p) \Confmax{p}$ is uniformly distributed on a region of size $\Confmax{p}$.
Meanwhile, event \add{$\mGrepr$} implies
\add{
\begin{align}
\left|
\left(\hat{\Delta}^{(1)}(p) - (2 + U_p) \Confmax{p}\right) -
\left(\hat{\Delta}^{(2)}(p) - (2 + U_p) \Confmax{p}\right)
\right|
\le 2\rho \Confrepr{p},
\end{align}
}
\colt{where $\hat{\Delta}^{(1)}(p), \hat{\Delta}^{(2)}(p)$ are the corresponding quantities on the two different runs.}
This implication suggests that within a region of at most width
\add{$2\rho \Confrepr{p}$}, the expressions $\left(\hat{\Delta}^{(1)}(p) - (2 + U_p) \Confmax{p}\right)$ and $\left(\hat{\Delta}^{(2)}(p) - (2 + U_p) \Confmax{p}\right)$ can have different signs.
Therefore, the probability of nonreplication is at most
\[
\frac{2\rho \Confrepr{p}}{\Confmax{p}} \le 8\rho/9,
\]
where the last inequality follows from the assumption $\Cmult \ge 9/4$.
\end{proof}
\subsection{Regret bound of Algorithm \ref{alg_retc}}
This section derives the regret bound in Theorem \ref{thm_retc}.
We prepare the following events:
\begin{align}\label{mainevent_karmed_regret}
\mGreg &= \bigcap_{p\in[P]} \mGreg_p\\
\mGreg_p &=
\bigcap_{i,j\in[K]} \left\{
|\Delta_{ij} - \hat{\Delta}_{ij}(p)| \le
\add{\Confreg{p}}
\right\}.
\end{align}
Event $\mGreg$ states that all estimators lie in the confidence region.
The following theorem bounds the probability such that event $\mGreg$ occurs.
\begin{lem}\label{lem_good_reg_etc}
\[
\Prob[\mGreg] \ge 1 - \frac{K}{T}.
\]
\end{lem}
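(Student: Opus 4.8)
The plan is to copy the proof of Lemma~\ref{lem_good_etc} almost verbatim, with the replicability threshold $\rho\Confrepr{p}$ replaced by the regret threshold $\Confreg{p}=\eps_p\sqrt{\log(KTP)}$ and the target failure probability $\rho/18$ replaced by $K/T$. First I would fix a phase $p\in[P]$ and indices $i\neq j$ (the case $i=j$ being trivial since both sides vanish), and observe via Definition~\ref{def_dataset} that $\hat\Delta_{ij}(p)-\Delta_{ij}=\frac{1}{N_p}\sum_{n=1}^{N_p}(\eta_{i,n}-\eta_{j,n})$ is the empirical mean of $N_p$ i.i.d.\ zero-mean random variables, each a sum of two independent $\sigma$-subgaussians and hence $2\sigma$-subgaussian (exactly as in the proof of Lemma~\ref{lem_good_etc}). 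Reading these estimates off the fixed reward matrix of Definition~\ref{def_dataset} makes the statement hold regardless of whether the algorithm has already broken out of the exploration loop.

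Next I would apply Lemma~\ref{lem_subg_concentration} with subgaussian parameter $2\sigma$, sample size $N_p$, and $\eps=\Confreg{p}$, obtaining $\Prob[\,|\hat\Delta_{ij}(p)-\Delta_{ij}|\ge\Confreg{p}\,]\le 2\exp\!\big(-\Confreg{p}^2 N_p/(8\sigma^2)\big)$. The one arithmetic step is that $N_p=\lceil 8a^{2p}\sigma^2\rceil\ge 8\sigma^2/\eps_p^2$ while $\Confreg{p}^2=\eps_p^2\log(KTP)$, so the exponent is at least $\log(KTP)$ and the right-hand side is at most $2/(KTP)$ --- this is precisely what the constant $8$ in $N_p$ and the factor $\log(KTP)$ inside $\Confreg{p}$ are calibrated for. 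A union bound over the $\binom{K}{2}$ unordered pairs and the $P$ phases then gives $\Prob[(\mGreg)^c]\le \binom{K}{2}\cdot P\cdot \frac{2}{KTP}=\frac{K-1}{T}\le\frac{K}{T}$, which is the claim.

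I do not expect a genuine obstacle here: Lemma~\ref{lem_good_reg_etc} is the ``regret-side'' twin of Lemma~\ref{lem_good_etc}, differing only in the radius of the confidence region and the bookkeeping constants. The only points requiring care are (i) using $2\sigma$ rather than $\sigma$ as the subgaussian parameter of the gap estimator, (ii) the cumulative reading of $N_p$ (total pulls per arm by the end of phase $p$) from the fixed matrix, and (iii) arranging the union-bound count so that the two-sided factor of $2$ in the concentration inequality is absorbed against the $\binom{K}{2}=K(K-1)/2$ pairs, leaving the clean bound $(K-1)/T$.
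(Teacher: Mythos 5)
Your proposal is correct and follows essentially the same route as the paper: apply Lemma~\ref{lem_subg_concentration} to the $2\sigma$-subgaussian gap estimator with $\eps=\Confreg{p}$, note that the calibration of $N_p$ and $\Confreg{p}$ makes the exponent at least $\log(KTP)$, and union-bound over pairs and phases to get $(K-1)/T\le K/T$. The paper's own proof is exactly this computation, stated as a mirror of Lemma~\ref{lem_good_etc} with $\rho\Confrepr{p}$ replaced by $\Confreg{p}$.
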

\begin{proof}[Proof of Lemma \ref{lem_good_reg_etc}]
The proof follows similar steps as Lemma \ref{lem_good_etc}.
By using Lemma \ref{lem_subg_concentration} and taking a union bound over all possible $K(K-1)/2$ pairs of $ij$ and phases $1,\dots,P$,
Event $\mGreg$ holds with high probability:
\begin{align}\label{ineq_mahighprob_reg}
\Prob[
\mGreg
]
&\ge 1 - K(K-1)P
\exp\left(
- \frac{ (\Confreg{p})^2 N_p}{8 \sigma^2}
\right)
\\
&\ge 1 - K(K-1)P
\exp\left(
- \log(K P T)
\right)
\text{\ \ \ \ (by definition of $N_p$, $\Confreg{p}$)}
\\
&\ge 1 - K(K-1)P \times \frac{1}{KPT} \ge 1 - \frac{K}{T},
\end{align}
which completes the proof.
\end{proof}
\add{
Assume that $\mGreg$ holds. The following shows that the break occurs by the end of phase $\pstop+2$.
\begin{align}
\hat{\Delta}(\pstop+2)
&\ge \Delta - 2\Confreg{\pstop+2} \text{\ \ \ \ (by $\mGreg$)}\\
&\ge \Delta - 2\Confmax{\pstop+2} \\
&\ge \frac{34}{5} \Confmax{\pstop+2} - 2\Confmax{\pstop+2} \text{\ \ \ \ (by definition of $\pstop$)}\\
&\ge 3 \Confmax{p} \ge (2 + U_p) \Confmax{p}.
\end{align}
}
The regret up to this phase is at most
\[
\sum_i \Delta_i \times N_{\pstop+2}.
\]
\add{
We have
\begin{align}
N_{\pstop+2}
&= O\left( \log T \times a^{2(\pstop+2)} \right)\\
&= O\left( \log T \times a^{2 \pstop} \right)\\
&=
O\left(
\left( \frac{1}{\Delta} \right)^2 \max\left(
\log(KTP) + \frac{\log(KP/\rho)}{\rho^2}
\right)
\right).
\text{\ \ \ \ (by \eqref{ineq_npstop})}
\end{align}
}
Moreover, assume that $\hatmu_i(p) \ge \hatmu_1(p)$ when a break occurs at phase $p$. Then,
\begin{align}
\hatmu_i(p) - \hatmu_1(p)
&\ge (2 + U_p) \Confmax{p} \text{\ \ \ \ (by the fact that break occurs)}\\
&> 2 \Confmax{p}\\
&> 2 \Confreg{p}.\label{ineq_breaksuff}
\end{align}
Meanwhile, $\mGreg$ implies for all phase $p$
\begin{equation}
\hatmu_i(p) - \hatmu_1(p) \le 2 \Confreg{p},
\end{equation}
which implies that \eqref{ineq_breaksuff} never occurs.
By proof of contradiction. $\hatmu_1(p) > \hatmu_i(p)$.
Therefore, the empirically best arm is always the true best arm, and we have zero regret during the exploitation period under $\mGreg$.
In summary,
\begin{align}
\Ep[\Regret(T)]
&\le \Ep[\Ind[\mGreg] \cdot \Regret(T)] + O(1) \\
&\text{\ \ \ \ (\eqref{ineq_mahighprob_reg} implies $\Pr[(\mGreg)^c]$ is $K/T = O(1/T)$)}\\
&\le \underbrace{
O\left( \sum_i \Delta_i N_{p_s+2}\right)
}_{\text{Regret during exploration}}
+ \underbrace{0}_{\text{Regret during exploitation}} + \underbrace{O(1)}_{\text{Regret in the case of $(\mGreg)^c$}}\\
&\le
\add{
O\left( \sum_i \frac{\Delta_i}{\Delta^2}
\max\left(
\log(KTP) + \frac{\log(KP/\rho)}{\rho^2}
\right)
\right),
}
\end{align}
which, combined with $P = O(\log T)$, completes the proof.
\section{Proofs on Algorithm \ref{alg_rse}}
\subsection{Replicability of Algorithm \ref{alg_rse}}\label{sec_rse_repr}
Similarly to that of Theorem \ref{thm_retc}, we define the good event.
Let the good event (Definition \ref{def_good}) be
\begin{align}\label{mainevent_karmed_repr}
\mGrepr &= \bigcap_{p\in[P]} (\mGrepr_{p,0} \cap \bigcap_{i\in[K]} \mGrepr_{p,i})\\
\mGrepr_{a,0} &=
\bigcap_{i,j\in[K]} \left\{
|\Delta_{ij} - \hat{\Delta}_{ij}(p)|
\le \add{\rho_a \Confrepra{p}}
\right\}\\
\mGrepr_{a,i} &=
\bigcap_{i,j\in[K]} \left\{
|\Delta_{ij} - \hat{\Delta}_{ij}(p)|
\le \add{\rho_e \Confrepre{p}}
\right\}.
\end{align}
\begin{lem}\label{lem_good_rse}
Event $\mGrepr$ holds with probability at least $1- \rho/18$.
Moreover, under $\mGrepr$, arm $1$ (= best arm) is never eliminated (i.e., $1 \in \mA_p$ for all $p$).
\end{lem}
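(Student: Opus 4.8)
The plan is to prove Lemma~\ref{lem_good_rse} in two parts, mirroring the structure of Lemma~\ref{lem_good_etc} for the high-probability claim and then adding a short deterministic argument for the ``arm $1$ is never eliminated'' claim. For the probability bound, I would first observe that $\hat{\Delta}_{ij}(p) - \Delta_{ij}$ is the difference of two empirical means built from $N_p$ samples each, hence a $2\sigma$-subgaussian random variable, so Lemma~\ref{lem_subg_concentration} applies. Then I would bound $\Prob[(\mGrepr_{p,0})^c]$ and $\Prob[(\mGrepr_{p,i})^c]$ separately: plugging in $\rho_a \Confrepra{p} = \rho_a \eps_p \sqrt{\log(18K^2P/\rho_a)/\rho_a^2} = \eps_p \sqrt{\log(18K^2P/\rho_a)}$ and $N_p \ge 8 a^{2p}\sigma^2 = 8\eps_p^{-2}\sigma^2$ into \eqref{ineq_subg_concentration} gives a per-pair, per-phase failure probability of at most $2\exp(-(\rho_a\Confrepra{p})^2 N_p/(8\sigma^2)) \le 2\exp(-\log(18K^2P/\rho_a)) = \rho_a/(9K^2P)$, and likewise $\rho_e/(9K^2P)$ for the single-arm events. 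A union bound over the $\le K^2$ pairs $i,j$, the $P$ phases, and the two families of events yields total failure probability at most $K^2 P \cdot \rho_a/(9K^2P) + K^2 P \cdot (K-1) \cdot \rho_e/(9K^2 P)$ — wait, I need to be careful about how many $\mGrepr_{p,i}$ events there are; since there is one for each $i \in [K]$ and each already ranges over all pairs $i,j$, the count is $K$ per phase, giving $K^3 P \cdot \rho_e/(9K^2P) = K\rho_e/9 = K \cdot \rho/(2(K-1))/9 \le \rho/9$ for $K \ge 2$, and $\rho_a/9 = \rho/18$ from the first family; combined this is at most $\rho/18 + \rho/9$, which slightly exceeds $\rho/18$. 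I expect the paper's actual accounting absorbs the single-arm events into the good event of the single-arm \emph{decision variables} rather than into $\mGrepr$ as written, or uses a tighter constant, so I would recheck the exact definition of which events are lumped into $\mGrepr$ versus handled elsewhere in Theorem~\ref{thm_repr_compose}; the cleanest route is to show each family contributes $\le \rho/36$ by using $18$ replaced with $36$ in the log, but since the stated constants are $18$, I would instead split the overall $\rho/18$ budget and verify the log arguments are chosen to make it work. This bookkeeping is the main obstacle — not conceptually hard, but one must track constants carefully.

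For the second claim, that arm $1$ is never eliminated under $\mGrepr$, I would argue by contradiction along the lines of the regret-analysis arguments already used for REC. Arm $1$ can only be removed from $\mA_{p+1}$ in one of two ways: (i) via the ``eliminate all but one'' step on line~\ref{line_allelim}, which sets $\mA_{p+1} = \{\argmax_i \hat\mu_i(p)\}$, so arm $1$ survives iff it is the empirical maximizer; or (ii) via the single-arm elimination on line~\ref{line_eachelim}, which removes arm $1$ iff $\hat{\Delta}_1(p) \ge (2+U_{p,1})\Confmaxe{p} > 2\Confmaxe{p} \ge 2\rho_e\Confrepre{p}$, where $\hat{\Delta}_1(p) = \max_j \hat\mu_j(p) - \hat\mu_1(p)$. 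Under $\mGrepr$ we have $|\Delta_{1j} - \hat{\Delta}_{1j}(p)| \le \rho_e\Confrepre{p}$ for all $j$; since $\Delta_{1j} = \mu_1 - \mu_j \le 0$ for $j \ne 1$ (because $\mu_1$ is the largest), this gives $\hat\mu_1(p) - \hat\mu_j(p) = \Delta_{1j} - (\hat{\Delta}_{1j}(p) - \Delta_{1j}) \cdot(\text{sign})\ge -|\hat{\Delta}_{1j} - \Delta_{1j}|\cdot(\ldots)$ — more precisely $\hat{\Delta}_1(p) = \max_j(\hat\mu_j(p) - \hat\mu_1(p)) \le \max_j(\Delta_{j1} + \rho_e\Confrepre{p}) = \rho_e\Confrepre{p}$ since all $\Delta_{j1} \le 0$. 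Hence $\hat{\Delta}_1(p) \le \rho_e\Confrepre{p} < 2\rho_e\Confrepre{p}$, so case (ii) cannot fire; and for case (i), the same computation shows arm $1$ is the empirical maximizer (all $\hat\mu_j(p) - \hat\mu_1(p) \le \rho_e\Confrepre{p}$, but I actually need strict dominance or the $\argmax$ tie-breaking to favor arm $1$ — here I would note that the $\Confmaxa{p}$ used on line~\ref{line_allelim} controls this, or invoke the $\mGrepr_{p,0}$ event with $\rho_a$ to get the same conclusion with $\rho_a\Confrepra{p}$). Either way, under $\mGrepr$ arm $1$ always remains in $\mA_{p+1}$, completing the induction on $p$.

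The main obstacle, as flagged above, is getting the constant in the high-probability bound to land exactly at $1 - \rho/18$ given the split between the all-arms event (budget driven by $\rho_a$) and the $K$ single-arm events (budget driven by $\rho_e = \rho/(2(K-1))$); I would resolve this by carefully choosing how the $\log(18K^2P/\cdot)$ terms interact with the union-bound cardinalities, possibly noting that the single-arm events for a fixed $i$ only need to control pairs involving $i$ (so $\le K$ pairs, not $K^2$) rather than all $K^2$ pairs, which tightens the count by a factor of $K$ and makes the arithmetic close. A secondary subtlety is the $\argmax$ tie-breaking in the ``eliminate all but one'' branch, which I would handle by observing that under the good event the empirical gap of arm $1$ to any other arm is controlled, so arm $1$ is (weakly) the empirical best and the algorithm's fixed tie-breaking rule can be assumed to select it, or more robustly by noting this branch only triggers when $\hat\Delta(p)$ exceeds a positive threshold, which under $\mGrepr$ forces a genuine (non-tie) separation favoring arm $1$.
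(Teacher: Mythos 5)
Your overall strategy is the same as the paper's: concentration plus a union bound for the probability claim, and a contradiction argument under $\mGrepr$ for the survival of arm $1$. The second half of your argument is essentially the paper's proof. For the single-arm elimination you correctly show $\hat{\Delta}_1(p) \le \rho_e\Confrepre{p} < 2\Confmaxe{p} \le (2+U_{p,1})\Confmaxe{p}$, so line~\ref{line_eachelim} cannot remove arm $1$. For the all-arms branch, your closing observation is the right one and you do not need to worry about tie-breaking: if line~\ref{line_allelim} fires \emph{and} the empirical maximizer is some $i\ne 1$, then $\hatmu_i(p)-\hatmu_1(p)\ge \hat{\Delta}(p)\ge 2\Confmaxa{p}$ (since $\hatmu_1(p)$ is at most the second-largest empirical mean), while $\mGrepr$ forces $\hatmu_i(p)-\hatmu_1(p)\le \mu_i-\mu_1+\rho_a\Confrepra{p}< 2\Confmaxa{p}$, a contradiction. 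This is exactly the paper's argument, so no hedging about ``weak'' dominance is needed.

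The one genuine loose end is the union-bound accounting you flagged, which as you set it up gives roughly $\rho/36 + \rho/18 > \rho/18$. The resolution is that you overcount the single-arm family by a factor of $K$. As written in \eqref{mainevent_karmed_repr}, the threshold $\rho_e\Confrepre{p}$ in $\mGrepr_{p,i}$ does not depend on $i$ and the intersection already ranges over all pairs, so $\bigcap_{i\in[K]}\mGrepr_{p,i}$ is a single event over $O(K^2)$ pairs per phase, not $K$ distinct ones; equivalently (your own suggested fix), each $\mGrepr_{p,i}$ only needs to control the $K-1$ pairs involving arm $i$. Either way the per-phase failure probability of the $\rho_e$-family is at most $K(K-1)P\cdot\rho_e/(18K^2P)\le \rho_e/18 = \rho/(36(K-1)) \le \rho/36$, and the $\rho_a$-family contributes $\rho_a/18=\rho/36$, for a total of $\rho/18$. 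This is consistent with how the composition is used later in the proof of Theorem~\ref{thm_rse}, where the good-event cost enters as $(\rho_a+(K-1)\rho_e)\cdot\tfrac{2}{18}$. So your proof is correct once this counting is fixed; the fix is the one you half-identified yourself.
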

There are $K+1$ binary decision variables at phase $p$.
The first decision variable is
\begin{align}\label{ineq_decision_all}
\dec_{(p,0)} &:= \Ind\left[\hat{\Delta}(p) > \add{(2 + U_{p,0}) \Confmaxa{p}}\right].
\end{align}
The other $K$ decision variables are
\begin{align}\label{ineq_decision_each}
\dec_{(p,i)} &:= \Ind\left[
\hat{\Delta}_i(p) >
\add{
(2 + U_{p,i}) \Confmaxe{p}
}
\right]
\end{align}
for each $i = 1, 2, \dots, K$. If all decision variables are identical between two runs then the sequence of draws is identical between them.
\begin{lem}\label{lem_decision_rse}
Let
\begin{equation}\label{ineq_pstopzero}
\pstopi{0} = \min_{p}\left\{p \in [P]:
\add{\Confmaxa{p} \le \frac{10\Delta}{17}}
\right\}.
\end{equation}
Under $\mGrepr$, we have
$\rho^{(p,0)} = 0$ for any $p \ne \pstopi{0}$.
\end{lem}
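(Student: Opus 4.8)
The statement to prove is Lemma \ref{lem_decision_rse}, which mirrors Lemma \ref{lem_decision_etc} for the REC algorithm but now for the all-but-one elimination decision variable $\dec_{(p,0)}$ in RSE.

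The plan is to replicate the argument of Lemma \ref{lem_decision_etc} essentially verbatim, since the decision variable $\dec_{(p,0)}$ and the relevant confidence bound $\Confmaxa{p}$ play exactly the same roles here that $\dec_{(p,0)}$ and $\Confmax{p}$ played in REC, with $\rho$ replaced by $\rho_a$ and $\mGrepr$ now guaranteeing $|\Delta_{ij} - \hat\Delta_{ij}(p)| \le \rho_a \Confrepra{p}$. The core observation is that $\Confmaxa{p} = \eps_p \max(\sqrt{\log(KTP)}, \Cmult \sqrt{\log(18K^2P/\rho_a)/\rho_a^2})$ is geometrically decreasing in $p$ (each step multiplies by $1/a \le 1/2$), so the threshold $\frac{10\Delta}{17}$ is crossed at a single phase $\pstopi{0}$, and before/after that phase the decision is forced.

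Concretely, the key steps in order are: (1) For $p < \pstopi{0}$, by definition of $\pstopi{0}$ we have $\Delta < \frac{17}{10}\Confmaxa{p}$; then under $\mGrepr$, $\hat\Delta(p) \le \Delta + \rho_a \Confrepra{p} \le \Delta + \frac{\rho_a}{\Cmult}\Confmaxa{p} \le \Delta + \frac{4\rho_a}{9}\Confmaxa{p} < (\frac{17}{10} + \frac{4\rho_a}{9})\Confmaxa{p} \le 2\Confmaxa{p} \le (2+U_{p,0})\Confmaxa{p}$, using $\rho_a = \rho/2 \le 1/4 \le 1/2$, so $\dec_{(p,0)} = 0$ deterministically on both runs. (2) For $p = \pstopi{0} + 1$, the geometric decrease gives $\Confmaxa{\pstopi{0}} \le a \Confmaxa{\pstopi{0}+1} \le \frac{10\Delta}{17}$, hence actually $\Delta \ge \frac{17}{5}\Confmaxa{p}$ when $a\ge 2$ (since $\Confmaxa{p}$ halves at least); then under $\mGrepr$, $\hat\Delta(p) \ge \Delta - \rho_a \Confrepra{p} \ge \Delta - \frac{4\rho_a}{9}\Confmaxa{p} \ge \frac{17}{5}\Confmaxa{p} - \frac{4\rho_a}{9}\Confmaxa{p} \ge 3\Confmaxa{p} \ge (2+U_{p,0})\Confmaxa{p}$, so $\dec_{(p,0)} = 1$ deterministically. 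Since $\Confmaxa{p}$ keeps shrinking, $\dec_{(p,0)} = 1$ is forced for all $p \ge \pstopi{0}+1$ as well. (3) Conclude: the only phase at which $\dec_{(p,0)}$ can take either value depending on the dataset is $p = \pstopi{0}$; at every other phase the value is a deterministic function of $\Delta$ and the phase index (given $\mGrepr$), hence identical across the two runs, so $\rho^{(p,0)} = 0$ for $p \ne \pstopi{0}$.

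I do not anticipate a genuine obstacle — this is a mechanical adaptation of the already-proved Lemma \ref{lem_decision_etc}. The only point requiring mild care is bookkeeping the constants: verifying $\frac{\rho_a}{\Cmult} \le \frac{4\rho_a}{9}$ from $\Cmult \ge 9/4$, checking the $a \ge 2$ factor converts "one phase later" into the factor-$2$ gap needed to push $\frac{17}{10}$ up to $\frac{17}{5}$, and confirming $\rho_a = \rho/2 \le 1/4$ suffices everywhere $\rho \le 1/2$ was used in the REC proof. One should also note explicitly that $\mGrepr$ as defined in \eqref{mainevent_karmed_repr} contains the event $\mGrepr_{a,0}$ controlling $|\Delta_{ij} - \hat\Delta_{ij}(p)|$ at the scale $\rho_a \Confrepra{p}$, which is exactly what the chain of inequalities consumes; the single-arm events $\mGrepr_{a,i}$ are not needed for this particular decision variable.
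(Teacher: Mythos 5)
Your proposal reproduces the REC argument correctly, but it misses the one ingredient that is genuinely new in the RSE setting, and which the paper's own (terse) proof explicitly flags: the interaction with the single-arm elimination decisions. In RSE the statistic $\hat{\Delta}(p)$ is the gap between the top two \emph{empirical means among the surviving arms} $\mA_p$, and $\mA_p$ can shrink through the decisions $\dec_{(p,i)}$, $i \ge 1$. Your step (1) invokes $\hat{\Delta}(p) \le \Delta + \rho_a \Confrepra{p}$, but this bound is only valid if arm $2$ is still in $\mA_p$: if arm $2$ had been eliminated individually at an earlier phase (in both runs, which is consistent with the conditioning in Definition \ref{def_nonrepr}), the second-best surviving arm would have a gap $\Delta' > \Delta$, the upper bound on $\hat{\Delta}(p)$ would be $\Delta' + \rho_a\Confrepra{p}$ instead, and $\dec_{(p,0)}$ could land in its randomized window at some phase $p < \pstopi{0}$, giving $\rho^{(p,0)} > 0$ there. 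Your closing remark that "the single-arm events are not needed for this particular decision variable" suggests this interaction was not considered; the paper's proof, by contrast, opens precisely with the claim that the minimum gap among $\mA_{\pstopi{0}+1}$ equals $\Delta = \mu_1 - \mu_2$ before deferring the rest to Lemma \ref{lem_decision_etc}.

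The gap is fixable by a short computation that you should add. Under $\mGrepr$, arm $1$ is never eliminated (Lemma \ref{lem_good_rse}). For arm $2$ to be eliminated individually at phase $p$ one needs $\hat{\Delta}_2(p) \ge (2+U_{p,2})\Confmaxe{p} \ge 2\Confmaxe{p}$, while $\mGrepr$ gives $\hat{\Delta}_2(p) \le \Delta + \rho_e\Confrepre{p} \le \Delta + \tfrac{4\rho_e}{9}\Confmaxe{p}$, so elimination requires $\Delta \ge \tfrac{16}{9}\Confmaxe{p} \ge \tfrac{16}{9}\Confmaxa{p}$ (using $\rho_e \le \rho_a$, hence $\Confmaxe{p} \ge \Confmaxa{p}$). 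For $p < \pstopi{0}$ the definition of $\pstopi{0}$ gives $\Delta < \tfrac{17}{10}\Confmaxa{p} < \tfrac{16}{9}\Confmaxa{p}$, so arms $1$ and $2$ both survive through phase $\pstopi{0}$ and your chain of inequalities in step (1) is then legitimate. Steps (2) and (3) are unaffected, since for the lower-bound direction a larger surviving gap only helps, and once $\dec_{(p,0)}=1$ fires the candidate set is a singleton and all later decisions are vacuous.
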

\begin{lem}\label{lem_decision_rse_two}
Let
\begin{equation}\label{ineq_pstopi}
\pstopi{i} = \min_{p}\left\{p \in [P]:
\add{
\Confmaxe{p}
\le
\frac{10\Delta_i}{17}
}
\right\},
\end{equation}
for each $i \in [K]$.
Then, $\rho^{(p,i)} = 0$ for all $p \ne \pstopi{i}$.
\end{lem}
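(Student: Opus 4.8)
The plan is to replay the argument of Lemma~\ref{lem_decision_etc}, specialised in Lemma~\ref{lem_decision_rse} to the ``eliminate all but one'' variable, now for the per-arm elimination variable $\dec_{(p,i)}$ of \eqref{ineq_decision_each}. I will show that, under the good event $\mGrepr$ and the conditioning built into Definition~\ref{def_nonrepr}, the estimate $\hat{\Delta}_i(p)$ is pinned down so tightly relative to the randomized threshold $(2+U_{p,i})\Confmaxe{p}$ that $\dec_{(p,i)}$ is forced to a deterministic constant --- $0$ for all phases $p < \pstopi{i}$ and $1$ for all phases $p > \pstopi{i}$ --- hence takes the same value on the two runs and contributes nothing, giving $\rho^{(p,i)} = 0$ whenever $p \ne \pstopi{i}$.

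Concretely, I would first extract from $\mGrepr$ the two-sided bound $|\hat{\Delta}_i(p) - \Delta_i| \le \rho_e \Confrepre{p}$. Writing $\hat{\Delta}_i(p) = \max_{j \in \mA_p}\hat{\mu}_j(p) - \hat{\mu}_i(p) = \hat{\Delta}_{j^\star i}(p)$ for the maximizing index $j^\star$, and using that $1 \in \mA_p$ under $\mGrepr$ (Lemma~\ref{lem_good_rse}) together with $\mu_{j^\star} \le \mu_1$, the pairwise control in $\mGrepr$ gives $\hat{\Delta}_i(p) \le \Delta_{j^\star i} + \rho_e\Confrepre{p} \le \Delta_i + \rho_e\Confrepre{p}$ on one side and $\hat{\Delta}_i(p) \ge \hat{\Delta}_{1i}(p) \ge \Delta_i - \rho_e\Confrepre{p}$ on the other. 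Since $\Confrepre{p} \le \Confmaxe{p}/\Cmult$ with $\Cmult \ge 9/4$ and $\rho_e \le \rho \le 1/2$, this slack is at most $\tfrac{4\rho_e}{9}\Confmaxe{p} < \tfrac14\Confmaxe{p}$. Splitting on $p$ exactly as in Lemma~\ref{lem_decision_etc}: for $p < \pstopi{i}$, the definition \eqref{ineq_pstopi} gives $\Delta_i < \tfrac{17}{10}\Confmaxe{p}$, so $\hat{\Delta}_i(p) < (\tfrac{17}{10}+\tfrac14)\Confmaxe{p} < 2\Confmaxe{p} \le (2+U_{p,i})\Confmaxe{p}$ and $\dec_{(p,i)} = 0$; for $p > \pstopi{i}$, using that $\Confmaxe{\cdot}$ is proportional to $a^{-p}$ with $a \ge 2$ (so $\Confmaxe{p} \le \tfrac12\Confmaxe{\pstopi{i}}$) together with \eqref{ineq_pstopi} gives $\Delta_i \ge \tfrac{17}{5}\Confmaxe{p}$, so $\hat{\Delta}_i(p) > (\tfrac{17}{5}-\tfrac14)\Confmaxe{p} > 3\Confmaxe{p} \ge (2+U_{p,i})\Confmaxe{p}$ and $\dec_{(p,i)} = 1$. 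In both regimes $\dec_{(p,i)}$ is a constant, so $\rho^{(p,i)} = 0$.

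The step that needs the most care is the bookkeeping about which arms are \emph{live}: the bound $|\hat{\Delta}_i(p) - \Delta_i| \le \rho_e\Confrepre{p}$ presupposes that arm $i$ actually received its $N_p$ fresh pulls at phase $p$ (i.e.\ $i \in \mA_p$) and that arm $1$ is still present to anchor the maximum. Both are supplied by the conditioning in Definition~\ref{def_nonrepr}: whether $i \in \mA_p$ --- and whether $i$ survives the all-but-one step of line~\ref{line_allelim}, which is governed by $\dec_{(p,0)}$ --- is a function only of earlier decision variables, which are assumed equal across the two runs, so either $\dec_{(p,i)}$ is evaluated with a genuine estimate on both runs or arm $i$ has already been removed on both and $\dec_{(p,i)}$ is vacuously identical; and $1 \in \mA_p$ holds throughout under $\mGrepr$ by Lemma~\ref{lem_good_rse}. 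Everything else is the same two-line estimate as in the REC analysis, run with the $\rho_e$-scaled radius $\Confrepre{p}$ and threshold $\Confmaxe{p}$ in place of $\rho\Confrepr{p}$ and $\Confmax{p}$.
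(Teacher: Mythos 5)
Your proposal is correct and follows exactly the route the paper intends: the paper omits this proof with a pointer to Lemma~\ref{lem_decision_etc}, and your argument is precisely that adaptation, with the $\rho_e$-scaled radius $\Confrepre{p}$, the threshold $\Confmaxe{p}$, and the split $p<\pstopi{i}$ versus $p>\pstopi{i}$ using $a\ge 2$. Your extra care about which arms are live and the anchoring of the maximum by arm $1$ (via Lemma~\ref{lem_good_rse} and the conditioning in Definition~\ref{def_nonrepr}) is exactly the bookkeeping the paper leaves implicit.
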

\begin{lem}\label{lem_decision_rse_prob}
Under $\mGrepr$, the probability of nonreplication at each decision point is at most $\rho^{(p,0)} = 8\rho_a/9$ or $\rho^{(p,i)} = 8\rho_e/9$ for $i \in \{2,\dots,K\}$.
\end{lem}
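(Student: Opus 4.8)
The plan is to mimic the proof of Lemma \ref{lem_decision_etc_prob}, now carrying two families of decision variables. First I would use Lemmas \ref{lem_decision_rse} and \ref{lem_decision_rse_two} to reduce the claim to the single ``active'' decision point of each decision variable: $\dec_{(p,0)}$ can branch only at $p=\pstopi{0}$ and $\dec_{(p,i)}$ only at $p=\pstopi{i}$, so $\rho^{(p,0)}=\rho^{(p,i)}=0$ at every other phase. I would also record that, under $\mGrepr$, arm $1$ is never eliminated (Lemma \ref{lem_good_rse}): indeed $\hat\Delta_1(p)\le\rho_e\Confrepre{p}$ on the good event while the threshold $(2+U_{p,1})\Confmaxe{p}\ge 2\Cmult\Confrepre{p}>\rho_e\Confrepre{p}$, so $\dec_{(p,1)}\equiv 0$ and $\rho^{(p,1)}=0$; this is why the statement only ranges over $i\in\{2,\dots,K\}$. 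It then remains to bound $\rho^{(\pstopi{0},0)}\le 8\rho_a/9$ and $\rho^{(\pstopi{i},i)}\le 8\rho_e/9$.

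Next, I would fix one such phase $p$ and unpack the conditioning in Definition \ref{def_nonrepr}: the two runs agree on all decision variables of phases $1,\dots,p-1$, and the good events hold on both runs. The two consequences I need are (a) the candidate sets coincide, $\mA_p^{(1)}=\mA_p^{(2)}$, so $\hat\Delta(p)$ and $\hat\Delta_i(p)$ are computed over the same index set on each run; and (b) writing $\hat\Delta(p)=\max_i\min_{j\ne i}\hat\Delta_{ij}(p)$ and $\hat\Delta_i(p)=\max_j\hat\Delta_{ji}(p)$, and using that $\max$ and $\min$ are $1$-Lipschitz together with the good-event margins $|\hat\Delta_{ij}(p)-\Delta_{ij}|\le\rho_a\Confrepra{p}$ (resp.\ $\le\rho_e\Confrepre{p}$) on each run, one obtains $|\hat\Delta^{(1)}(p)-\hat\Delta^{(2)}(p)|\le 2\rho_a\Confrepra{p}$ and $|\hat\Delta_i^{(1)}(p)-\hat\Delta_i^{(2)}(p)|\le 2\rho_e\Confrepre{p}$.

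Then the randomized-threshold estimate runs exactly as in Lemma \ref{lem_decision_etc_prob}: $\dec_{(p,0)}$ compares $\hat\Delta(p)$ to $(2+U_{p,0})\Confmaxa{p}$, a random variable uniform on an interval of length $\Confmaxa{p}$ and independent of the rewards (hence of $\hat\Delta^{(1)}(p),\hat\Delta^{(2)}(p)$); the two runs disagree only if this threshold falls strictly between $\hat\Delta^{(1)}(p)$ and $\hat\Delta^{(2)}(p)$, an event of probability at most $2\rho_a\Confrepra{p}/\Confmaxa{p}\le 2\rho_a/\Cmult\le 8\rho_a/9$ by $\Confmaxa{p}\ge\Cmult\Confrepra{p}$ and $\Cmult\ge 9/4$. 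The identical computation with $U_{p,i}$, $\Confrepre{p}$, $\Confmaxe{p}$ yields $\rho^{(p,i)}\le 8\rho_e/9$, which finishes the argument.

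The step I would be most careful about is the conditioning bookkeeping: once the earlier history is replicated and all good events hold, I must check that $\hat\Delta(p)$ (resp.\ $\hat\Delta_i(p)$) is a function of data common to the two runs up to the good-event margin, and that the shared randomness $U_{p,0}$ (resp.\ $U_{p,i}$), drawn at the start of phase $p$, is independent of those statistics so the uniform-density estimate is legitimate; both follow from the phase structure and Definition \ref{def_randomness}. The Lipschitz bound on $\hat\Delta(p)$ — which was not needed in the two-armed REC analysis, where $\hat\Delta(p)$ is a single pairwise gap — is the one genuinely new ingredient, but it is a one-line observation.
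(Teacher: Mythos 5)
Your proposal is correct and follows essentially the same route as the paper, which simply ports the randomized-threshold argument of Lemma \ref{lem_decision_etc_prob} to each of the $K+1$ decision variables: under the good events the two runs' statistics differ by at most $2\rho_a\Confrepra{p}$ (resp.\ $2\rho_e\Confrepre{p}$), and the uniform threshold of width $\Confmaxa{p}\ge\Cmult\Confrepra{p}$ (resp.\ $\Confmaxe{p}\ge\Cmult\Confrepre{p}$) lands in that window with probability at most $2\rho_a/\Cmult\le 8\rho_a/9$ (resp.\ $8\rho_e/9$). Your explicit handling of the $1$-Lipschitzness of the max/min in $\hat{\Delta}(p)$ and of the exclusion of $i=1$ is a welcome bit of extra care that the paper leaves implicit.
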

\begin{proof}[Proof of nonreplicability part of Theorem \ref{thm_rse}]
Theorem \ref{thm_repr_compose} and Lemmas \ref{lem_good_rse}--\ref{lem_decision_rse_prob} imply that the probability of misidentification is at most $(\rho_a + (K-1)\rho_e) \times (2 \times \frac{1}{18} + \frac{8}{9}) = \rho$, which completes the proof.
\end{proof}
\begin{proof}[Proof of Lemma \ref{lem_good_rse}]
\add{
By using essentially the same discussion as Lemma \ref{lem_good_etc} yields the fact that event $\mGrepr$ holds with probability at least $1- \rho/18$.
}
In the following, we show that arm $1$ is never eliminated under $\mGrepr$.
Elimination of arm $1$ at phase $p$ due to $\rho_a$ implies that there exists a suboptimal arm $i \ne 1$ such that
\begin{equation}\label{ineq_break_repa}
\hatmu_i(p) - \mu_1(p)
\ge (2 + U_{p,0}) \Confmaxa{p}
\ge 2 \Confmaxa{p},
\end{equation}
which never occurs under $\mGrepr$, since $\mGrepr$ implies
\[
|\mu_1 - \hatmu_1(p)|, |\mu_i - \hatmu_i(p)| < \Confrepra{p} \le \Confmaxa{p}
\]
and thus
\begin{align}
\hatmu_i(p) - \mu_1(p)
< 2 \Confmaxa{p} + \mu_i - \mu_1
\le 2 \Confmaxa{p},
\end{align}
which contradicts \eqref{ineq_break_repa}.
The same discussion goes for the elimination due to $\rho_e$.
\end{proof}
\begin{proof}[Proof of Lemma \ref{lem_decision_rse}]
The proof proceeds similarly to that of Lemma \ref{lem_decision_etc}, but for the decision variable $\dec_{(p,0)}$.
We first show that for any $p < \pstopi{0}$, a break (i.e., $\dec_{(p,0)} = 1$) never occurs. By the definition of $\pstopi{0}$, for $p < \pstopi{0}$, we have $\Confmaxa{p} > \frac{10\Delta}{17}$. Under $\mGrepr$, we have
\[
\hat{\Delta}(p) \le \Delta + \rho_a \Confrepra{p} \le \Delta + \frac{\rho_a}{C_{\mathrm{mul}}} \Confmaxa{p} \le \Delta + \frac{4\rho_a}{9} \Confmaxa{p}.
\]
Since $\Delta < \frac{17}{10} \Confmaxa{p}$, we get
\[
\hat{\Delta}(p) < \left(\frac{17}{10} + \frac{4\rho_a}{9}\right) \Confmaxa{p} \le 2 \Confmaxa{p} \le (2 + U_{p,0}) \Confmaxa{p},
\]
where the last inequality uses $\rho_a \le 1/2$ and $U_{p,0} \sim \Unif(0,1)$. Thus, $\dec_{(p,0)} = 0$ for $p < \pstopi{0}$.
We next consider the case where $p = \pstopi{0} + 1$.
In this case, we have
\[
\add{\Delta \ge \frac{17}{5} \Confmaxa{p}.}
\]
This implies a break, because
\begin{align}
\hat{\Delta}(p)
&\ge \Delta - \rho \Confrepra{p} \text{\ \ \ (by $\mG$)} \\
&\ge \Delta - \frac{\rho}{\Cmult} \Confmaxa{p} \\
&\ge \frac{17}{5} \Confmaxa{p} -  \frac{4\rho}{9} \Confmaxa{p} \\
&\ge 3 \Confmaxa{p} \text{\ \ \ (by $\rho \le 1/2$)}\\
&\ge (2 + U_{p,0}) \Confmaxa{p}.
\end{align}
In summary, the break occurs at $p = \pstopi{0}$ or $p = \pstopi{0}+1$. Therefore, if the decision variable at phase $\pstopi{0}$ matches, the decision variables at $\pstopi{0}+1$ and subsequent phases match. Thus, $\rho^{(p,0)} = 0$ for all $p \ne \pstopi{0}$, as claimed.
\end{proof}
\begin{proof}[Proof of Lemma \ref{lem_decision_rse_two}]
We proceed similarly to the proof of Lemma \ref{lem_decision_rse}, but for the decision variable $\dec_{(p,i)}$ for each $i \in [K]$.
For $p < \pstopi{i}$, by the definition of $\pstopi{i}$, $\Confmaxe{p} > \frac{10\Delta_i}{17}$. Under $\mGrepr$, we have
\[
\hat{\Delta}_i(p) \le \Delta_i + \rho_e \Confrepre{p} \le \Delta_i + \frac{\rho_e}{C_{\mathrm{mul}}} \Confmaxe{p} \le \Delta_i + \frac{4\rho_e}{9} \Confmaxe{p}.
\]
Since $\Delta_i < \frac{17}{10} \Confmaxe{p}$, we get
\[
\hat{\Delta}_i(p) < \left(\frac{17}{10} + \frac{4\rho_e}{9}\right) \Confmaxe{p} \le 2 \Confmaxe{p} \le (2 + U_{p,i}) \Confmaxe{p},
\]
where the last inequality uses $\rho_e \le 1/2$ and $U_{p,i} \sim \Unif(0,1)$. Thus, $\dec_{(p,i)} = 0$ for $p < \pstopi{i}$.
We next consider the case where $p = \pstopi{i} + 1$.
In this case, we have
\[
\add{\Delta_i \ge \frac{17}{5} \Confmaxe{p}.}
\]
This implies a break, because
\begin{align}
\hat{\Delta}_i(p)
&\ge \Delta_i - \rho_e \Confrepre{p} \text{\ \ \ (by $\mG$)} \\
&\ge \Delta_i - \frac{\rho_e}{\Cmult} \Confmaxe{p} \\
&\ge \frac{17}{5} \Confmaxe{p} -  \frac{4\rho_e}{9} \Confmaxe{p} \\
&\ge 3 \Confmaxe{p} \text{\ \ \ (by $\rho_e \le 1/2$)}\\
&\ge (2 + U_{p,i}) \Confmaxe{p}.
\end{align}
In summary, the break occurs at $p = \pstopi{i}$ or $p = \pstopi{i}+1$. Therefore, if the decision variable at phase $\pstopi{i}$ matches, the decision variables at $\pstopi{i}+1$ and subsequent phases match. Thus, $\rho^{(p,i)} = 0$ for all $p \ne \pstopi{i}$, as claimed.
\end{proof}
\begin{proof}[Proof of Lemma \ref{lem_decision_rse_prob}]
We bound the probability that the decision variable $\dec_{(p,0)}$ (or $\dec_{(p,i)}$) differs between two runs under $\mGrepr$. The proof is similar to that of Lemma \ref{lem_decision_etc_prob}, but we need to consider the decision variables $\dec_{(p,0)}$ as well as $\dec_{(p,i)}$ for each $i \in [K]$.
We first consider the decision variable $\dec_{(p,0)}$.
For phase $\pstopi{0}$, we bound the probability of nonreplication.
At the end of phase $p = \pstopi{0}$, it utilizes the randomness $U_{p,0} \sim \Unif(0, 1)$. The random variable $(2 + U_{p,0}) \Confmaxa{p}$ is uniformly distributed on a region of size $\Confmaxa{p}$.
Meanwhile, event \add{$\mGrepr$} implies
\begin{align}
\left|
\left(\hat{\Delta}^{(1)}(p) - (2 + U_{p,0}) \Confmaxa{p}\right) -
\left(\hat{\Delta}^{(2)}(p) - (2 + U_{p,0}) \Confmaxa{p}\right)
\right|
\le 2\rho_a \Confrepra{p},
\end{align}
where $\hat{\Delta}^{(1)}(p), \hat{\Delta}^{(2)}(p)$ are the corresponding quantities on the two different runs.
This implication suggests that within a region of at most width
\add{$2\rho_a \Confrepra{p}$}, the expressions $\left(\hat{\Delta}^{(1)}(p) - (2 + U_{p,0}) \Confmaxa{p}\right)$ and $\left(\hat{\Delta}^{(2)}(p) - (2 + U_{p,0}) \Confmaxa{p}\right)$ can have different signs.
Therefore, the probability of nonreplication is at most
\[
\frac{2\rho_a \Confrepra{p}}{\Confmaxa{p}} \le 8\rho_a/9,
\]
where the last inequality follows from the assumption $\Cmult \ge 9/4$.
We next consider the decision variable $\dec_{(p,i)}$ for $i=1,\dots,K$.
For phase $\pstopi{i}$, we bound the probability of nonreplication.
At the end of phase $p = \pstopi{i}$, it utilizes the randomness $U_{p,i} \sim \Unif(0, 1)$. The random variable $(2 + U_{p,i}) \Confmaxe{p}$ is uniformly distributed on a region of size $\Confmaxe{p}$.
Meanwhile, event \add{$\mGrepr$} implies
\begin{align}
\left|
\left(\hat{\Delta}^{(1)}_i(p) - (2 + U_{p,i}) \Confmaxe{p}\right) -
\left(\hat{\Delta}^{(2)}_i(p) - (2 + U_{p,i}) \Confmaxe{p}\right)
\right|
\le 2\rho_e \Confrepre{p},
\end{align}
where $\hat{\Delta}^{(1)}_i(p), \hat{\Delta}^{(2)}_i(p)$ are the corresponding quantities on the two different runs.
This implication suggests that within a region of at most width
\add{$2\rho_e \Confrepre{p}$}, the expressions $\left(\hat{\Delta}^{(1)}_i(p) - (2 + U_{p,i}) \Confmaxe{p}\right)$ and $\left(\hat{\Delta}^{(2)}_i(p) - (2 + U_{p,i}) \Confmaxe{p}\right)$ can have different signs.
Therefore, the probability of nonreplication is at most
\[
\frac{2\rho_e \Confrepre{p}}{\Confmaxe{p}} \le 8\rho_e/9,
\]
where the last inequality follows from the assumption $\Cmult \ge 9/4$.
\end{proof}
\subsection{Regret bound of Algorithm \ref{alg_rse}}\label{subsec_rse_reg}
\add{
We prepare the following events:
\begin{align}\label{mainevent_karmed_regret_rse}
\mGreg &= \bigcap_{p\in[P]} \mGreg_p\\
\mGreg_p &=
\bigcap_{i,j\in[K]} \left\{
|\Delta_{ij} - \hat{\Delta}_{ij}(p)| \le
\add{\Confreg{p}}
\right\}.
\end{align}
Event $\mGreg$ states that all estimators lie in the confidence region.
}
\noindent\textbf{(A) The $O(K)$ regret bound:}
This part is very identical to that of Algorithm \ref{alg_retc} because, under $\mGreg$, all but arm $1$ is eliminated by phase $\pstopi{0}+2$. We omit the proof to avoid repetition.
\noindent\textbf{(B) The other two regret bounds:}
We first derive the distribution-dependent bound.
We show that under $\mGreg$, each arm $i$ is eliminated by phase $\pstopi{i}+2$.
\add{
Assume that $\mGreg$ holds. The following shows that the break occurs by the end of phase $\pstopi{i}+2$.
\begin{align}
\hat{\mu}_i(\pstopi{i}+2) - \hat{\mu}_1(\pstopi{i}+2)
&\ge \Delta_i - 2\Confreg{\pstopi{i}+2} \text{\ \ \ \ (by $\mGreg$)}\\
&\ge \Delta_i - 2\Confmaxe{\pstopi{i}+2} \\
&\ge \frac{34}{5} \Confmaxe{\pstopi{i}+2} - 2\Confmaxe{\pstopi{i}+2} \text{\ \ \ \ (by definition of $\pstopi{i}$)}\\
&\ge 3 \Confmaxe{p} \ge (2 + U_p) \Confmax{p}.
\end{align}
}
Therefore, each arm $i$ is drawn at most
\begin{align}
N_{i, \mathrm{max}}
&:= O\left(4^{\pstopi{i}+2} \log T \right)\\
&= O\left( \frac{1}{\Delta_i^2}
\left(
\log(KTP) + \frac{\log(KP/\rho_e)}{(\rho_e)^2}
\right)
\right)\\
&= O\left( \frac{1}{\Delta_i^2}
\left(
\log(KTP) + \frac{K^2 \log(K P/\rho)}{\rho^2}
\right)
\right)\label{ineq_npfi_bound}
\end{align}
times.
Moreover, assume that $\hatmu_i(p) \ge \hatmu_1(p)$ when a break occurs at phase $p$.
Let $U_p = \min_{i\in \{0\} \cup [K]} U_{p,i}$.
Then,
\begin{align}
\hatmu_i(p) - \hatmu_1(p)
&> 2 \Confreg{p}.\label{ineq_breaksuff_rse}
\end{align}
Meanwhile, $\mGreg$ implies for all phase $p$
\begin{equation}
\hatmu_i(p) - \hatmu_1(p) \le 2 \Confreg{p},
\end{equation}
which implies that \eqref{ineq_breaksuff_rse} never occurs.
The regret is bounded as
\begin{align}
\Ep[\Regret(T)]
&\le \Ep[\Ind[\mGreg] \cdot \Regret(T)] + O(1) \\
&\le \underbrace{
\sum_i \Delta_i N_{i, \mathrm{max}}
}_{\text{Regret during exploration}}
+ \underbrace{0}_{\text{Regret during exploitation}} + \underbrace{O(1)}_{\text{Regret in the case of $(\mGreg)^c$}}\\
&\le
O\left(
\add{
\sum_i \frac{1}{\Delta_i}
\left(
\log(KTP) + \frac{K^2 \log(K P/\rho)}{\rho^2}
\right)
}
\right)\\
&\le
O\left(
\add{
\sum_i \frac{1}{\Delta_i}
\left(
(\log T) + \frac{K^2 \log(K (\log T)/\rho)}{\rho^2}
\right)
}
\right),
\end{align}
which is the second regret bound of Theorem \ref{thm_rse}.
We finally derive the distribution-independent regret bound.
Letting $N_i(T)$ be the number of draws of arm $i$ in the $T$ rounds, we have
\begin{align}
\Regret(T) \Ind[\mGreg]
&\le \sum_i \Delta_i N_i(T) + O(1)\\
&= \sum_i \Delta_i \sqrt{N_{i, \mathrm{max}}} \sqrt{N_i(T)} + O(1),
\end{align}
and
\begin{align}
\sum_i \Delta_i \sqrt{N_{i, \mathrm{max}}} \sqrt{N_i(T)}
&\le O(1) \times \sum_i \Delta_i
\sqrt{\frac{1}{\Delta_i^2}\left(
(\log T) + \frac{K^2 \log(K (\log T)/\rho)
}{\rho}
\right)}
\sqrt{N_i(T)}\\
&\text{\ \ \ \ \ \ (by \eqref{ineq_npfi_bound})}\\
&\le O(1) \times \sum_i
\sqrt{
(\log T) + \frac{K^2 \log(K (\log T)/\rho)
}{\rho}
}
\sqrt{N_i(T)} \\
&\le O(1) \times
\sqrt{
(\log T) + \frac{K^2 \log(K (\log T)/\rho)
}{\rho}
}
\sqrt{KT} \\
&\text{\ \ \ \ (by Cauchy-Schwarz and $\sum_i N_i(T) = T$)},\label{ineq_minimax_cs}
\end{align}
and thus
\begin{align}
\Ep[\Regret(T)]
&\le \Ep[\Regret(T) \Ind[\mGreg]] + O(1)\\
&= O\left(
\sqrt{K T
\left(
(\log T) + \frac{K^2 \log(K (\log T)/\rho)
}{\rho}
\right)
}
\right),
\end{align}
which is the third regret bound of Theorem \ref{thm_rse}.
\section{Proofs on the Lower Bound}
\label{sec_prooflower}
In the following, we derive Theorem \ref{thm_lower_twoarmed}.
The proof is inspired by Theorem~7.2 of~\citet{impagliazzo2022} but is significantly more challenging due to the adaptiveness of sampling. In particular, Lemma \ref{lem_ll} utilizes the change-of-measure argument and works even if the number of draws $N_2(T)$ on arm $2$ is a random variable.
\begin{proof}[Proof of Theorem \ref{thm_lower_twoarmed}]
The goal of the proof here is to derive the inequality:
\begin{equation}\label{ineq_lower_twoarmed_main}
\Ep[\Regret(T)]
=
\Omega\left(
\min\left(
\frac{1}{\rho^2 \Delta \log((\rho \Delta)^{-1})},
\Delta T
\right)
\right).
\end{equation}
We consider the set of models $\mP$, where $\mu_1 = 1/2$ is fixed and $\mu_2 \in [1/2-\Delta, 1/2+\Delta]$.
With a slight abuse of notation, we specify a model in $\mP$ by $\mu_2-1/2$.
We also denote $\rndm$ to the internal randomness.
For example,
\[
\Prob_{-\Delta, \rndm}[\mX]
\]
be the probability that event $\mX$ occurs under the corresponding model $(\mu_1, \mu_2) = (1/2, 1/2-\Delta)$ and randomness $\rndm$. Moreover, let
\[
\Prob_{-\Delta}[\mX]
= \int \Prob_{-\Delta, \rndm}[\mX]
dP(\rndm)
\]
be the probability marginalized over randomness $\rndm$.
\colt{
Let
\begin{equation}
p_c = \min\left(
\Prob_{-\Delta}\left[
\Ep_{-\Delta,\rndm}[\Regret(T)] \le \frac{T\Delta}{8} \right],
\Prob_{\Delta}\left[
\Ep_{\Delta,\rndm}[\Regret(T)] \le \frac{T\Delta}{8}
\right] \right).
\label{ineq_rndmchoice}
\end{equation}
Namely, $p_c$ represents the ratio of randomness where the expected regret is less than $T\Delta/8$ for both the $-\Delta$ and $\Delta$ models.
}
\colt{
If $p_c \le 3/4$, then at least one of the models $(1/2, 1/2-\Delta)$ or $(1/2, 1/2+\Delta)$, for $1/4$ of the randomness $\rndm$, the regret is larger than $\frac{T\Delta}{8}$. Therefore, the regret averaged over the randomness is larger than $\frac{T\Delta}{32}$ for that model, which completes the proof.
}
\colt{
For the rest of the proof, we exclusively consider the case of $p_c > 3/4$. This implies that the probability of getting $\rndm$ such that
\begin{equation}\label{ineq_tzero}
\Ep_{-\Delta,\rndm}[\Regret(T)],\Ep_{\Delta,\rndm}[\Regret(T)] \le \frac{T\Delta}{8}
\end{equation}
is at least half ($=1-2 \times (1 - 3/4)$).
}
In parts (A) and (B), we fix $\rndm$ such that \eqref{ineq_tzero} holds. Part (C) marginalizes it over $\rndm$.
\noindent\textbf{(A) Fix the randomness $\rndm$ and consider behavior of algorithm for different models:}
\add{
Let $N_C(\rndm)$ be the smallest among the integer $n$ such that
\begin{align}\label{ineq_def_nc}
\Prob_{-\Delta,\rndm}\left[N_2(T) \ge n\right] &\le \frac{1}{4} \text{\ and}\\
\label{ineq_def_nctwo}
\Prob_{\Delta,\rndm}\left[N_2(T) \ge n\right] &\ge \frac{3}{4}.
\end{align}
At least one such an integer exists; $n = T/2$ satisfies this condition because otherwise
\begin{align}
\lefteqn{
\max\{\Ep_{-\Delta, \rndm}[\Regret(T)], \Ep_{\Delta, \rndm}[\Regret(T)]\}
}\\
&\ge
\max\left\{\Prob_{-\Delta, \rndm}\left[N_2(T) \ge \frac{T}{2}\right]\frac{T\Delta}{2}, \left(1-\Prob_{\Delta, \rndm}\left[N_2(T) \ge \frac{T}{2}\right]\right)\frac{T\Delta}{2}\right\}\\
&>
\frac{T\Delta}{8}\\
&\text{\ \ \ \ \ \ (by $\Prob_{-\Delta,\rndm}\left[N_2(T) \ge \frac{T}{2}\right] > \frac{1}{4}$ or $\Prob_{\Delta,\rndm}\left[N_2(T) \ge \frac{T}{2}\right] < \frac{3}{4}$)},
\end{align}
which violates \eqref{ineq_tzero}.
}
By continuity, there exists $\nu=\nu(\rndm) \in (-\Delta, +\Delta)$ such that $\Prob_{\nu,\rndm}[N_2(T) \ge N_C(\rndm)] = 1/2$. By Lemma \ref{lem_ll}, there exists $C_I = \Theta(1)$ such that, for any $\nu' \in [\nu-C_I/\sqrt{\log(N_C(\rndm))N_C(\rndm)}, \nu+C_I/\sqrt{\log(N_C(\rndm))N_C(\rndm)}] =: \mI(\rndm)$ we have $\Prob_{\nu',\rndm}[N_2(T) \ge N_C(\rndm)] \in (1/3, 2/3)$.
\vspace{1em}
\noindent\textbf{(B) Marginalize it over models:}
Assume that we first draw a model uniformly random from $[-\Delta, \Delta]$, and then run the algorithm.
Conditioned on the shared randomness $\rndm$, with probability at least $C_I/(\sqrt{\log(N_C(\rndm))N_C(\rndm)}\Delta)$, we draw model in $\mI(\rndm)$. For a model in $\mI(\rndm)$, there is $2 \times (1/3) \times (2/3) = 4/9$ probability of nonreplicability.
\vspace{1em}
\noindent\textbf{(C) Marginalize it over shared random variable $\rndm$:}
\colt{
The fact that $p_c > 3/4$ implies that the probability of getting $\rndm$ such that discussions (A) and (B) hold is at least $1/2$.
}
\add{We let the set of such randomness as $\mU_{half}$. Let $n_c$
be the expected value of $N_C(\rndm)$ marginalized over\footnote{The discussion in the next display, which utilizes the convexity of the function, is also useful in fixing a minor error of Lemma 7.2 in \cite{impagliazzo2022}, which implicitly assumes that the sample complexity ($m$ therein) is independent of the randomness ($r$ therein).} the distribution on $\mU_{half}$.
}
We have
\add{
\begin{align}
\lefteqn{
\frac{1}{2} \times \frac{4}{9} \frac{C_I}{\sqrt{\log(n_c)n_c} \Delta}
}\\
&\le
\int_{\rndm \in \mU_{half}} \frac{4}{9} \frac{C_I}{\sqrt{\log(n_c)n_c} \Delta}
dP(\rndm)\\
&\text{\ \ \ \ (by at least half of $\rndm$ are in $\mU_{half}$)}\\
&\le
\int_{\rndm \in \mU_{half}} \frac{4}{9} \frac{C_I}{\sqrt{\log(N_C(\rndm))N_C(\rndm)} \Delta}
dP(\rndm),
\end{align}
\colt{
where the last transformation applied Jensen's inequality
$f(\Ep[x]) \le \Ep[f(x)]$.
Here, we used a convex function $f(x) = 1/\sqrt{x\log x}$, $x = N_C(\rndm)$ and the expectation $n_c = \Ep[x]$ was taken on the randomness of $\rndm$ over $\mU_{half}$.
}
Moreover, by part (B) and $\rho$-replicability, we obtain
\begin{align}
\int_{\rndm \in \mU_{half}} \frac{4}{9} \frac{C_I}{\sqrt{\log(N_C(\rndm))N_C(\rndm)} \Delta}
dP(\rndm)
\le
\rho,
\end{align}
}
which implies
\begin{equation}
n_c
= \Omega\left(\frac{1}{(\rho \Delta)^2 \log((\rho \Delta)^{-1})}\right).
\label{ineq_ncbound}
\end{equation}
The regret is lower-bounded as
\begin{align}\label{ineq_lower_final}
\add{
\lefteqn{
\max\{\Ep_{-\Delta}[\Regret(T)], \Ep_{\Delta}[\Regret(T)]\}
}
}\\
&\ge
\add{
\max\left\{
\Prob_{-\Delta}[N_2(T) \ge n_c-1] \Delta (n_c-1),
\left(1 - \Prob_{\Delta}[N_2(T) \ge n_c-1]\right) \Delta (T-n_c+1)
\right\}
}
\\
&\ge
\add{
\max\left\{
\Prob_{-\Delta}[N_2(T) \ge n_c-1] \Delta (n_c-1),
\left(1 - \Prob_{\Delta}[N_2(T) \ge n_c-1]\right) \Delta \left(\frac{T}{2}+1\right)
\right\}
}\\
&\add{
\text{\ \ \ \ \ \ (by $n_c \le T/2$)}
}
\\
&\ge
\add{
\frac{1}{4} \times \Delta (n_c-1)
}\\
&\add{
\text{\ \ \ \ \ \ (by $N_C(\rndm)-1$ violates \eqref{ineq_def_nc} or \eqref{ineq_def_nctwo}, and $n_c \le T/2$)}
}\\
&=\Omega\left(\frac{1}{\rho^2 \Delta \log((\rho \Delta)^{-1})}\right).
\text{\ \ \ \ (by \eqref{ineq_ncbound})}
\end{align}
\end{proof}
\begin{remark}{\rm (Extension for $K$-armed Lower Bound)}
Here, \eqref{ineq_lower_final} bounds at least one of two quantities $\Ep_{-\Delta}[\Regret(T)]$ or $\Ep_{\Delta}[\Regret(T)]$.
If we can directly bound $\Ep_{-\Delta}[\Regret(T)]$, then we should be able to extend the results here to the $K$-armed case.
\end{remark}
\subsection{Lemmas for regret lower bound}
The following lemma is used to bound the gradient of the probability of occurrences. This lemma corresponds to the derivative of the acceptance function\footnote{Namely, $\mathrm{ACC}(p)$ therein.} in Lemma 7.2 of \cite{impagliazzo2022}, but more technical due to the fact that $N_2(T)$ is a random variable.
\begin{lem}{\rm (Likelihood ratio)}\label{lem_ll}
Let
\begin{equation}
\mE = \{N_2(T) \le N_C\}
\end{equation}
and $\nu$ be such that $\Prob_{\nu,\rndm}[\mE] = 1/2$.
There exists a value $C_I = \Theta(1)$ that does not depend on the shared random variable $\rndm$ such that, for any model
$\nu' \in [\nu-C_I/\sqrt{\log(N_C)N_C}, \nu+C_I/\sqrt{\log(N_C)N_C}]$, we have
\begin{equation}\label{ineq_methirds}
\Prob_{\nu',\rndm}[\mE] \in \left(\frac{1}{3}, \frac{2}{3}\right).
\end{equation}
\end{lem}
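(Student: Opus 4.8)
The plan is a change-of-measure argument carried out conditionally on the shared randomness $\rndm$. The one real subtlety — the point flagged just before the lemma — is that the number of arm-$2$ pulls $N_2(T)$ is itself a data-dependent random variable, so a fixed-sample Pinsker/KL bound does not apply; instead one must control the arm-$2$ likelihood ratio along a \emph{stopping time}. Concretely, fix $\rndm$ and note that, because the reward matrix is drawn in advance (Definition~\ref{def_dataset}) and arm~$1$'s rewards are $\Bernoulli(1/2)$ under every model in $\mP$, the trajectory $(I_1,r_1,\dots,I_T,r_T)$ — and hence $\mathbf 1[\mE]$, which is a deterministic function of the trajectory once $\rndm$ is fixed — depends on the model only through the consumed arm-$2$ rewards $Y_1,Y_2,\dots$, which are i.i.d.\ $\Bernoulli(1/2+\cdot)$ and independent of $\rndm$ and of arm~$1$. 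Writing $f_\nu$ for the $\Bernoulli(1/2+\nu)$ pmf, $\ell(y)=f_{\nu'}(y)/f_\nu(y)$, and $L_n=\prod_{j\le n}\ell(Y_j)$, one obtains the Wald-type identity $\Prob_{\nu',\rndm}[B]=\Ep_{\nu,\rndm}[\mathbf 1_{B}\,L_{N_2(T)}]$ for every event $B$ measurable with respect to the trajectory, where $N_2(T)\le T$ is a stopping time for the filtration generated by the consumed rewards (whether arm~$2$ is pulled a $j$-th time is decided by $\rndm$, arm~$1$'s rewards, and $Y_1,\dots,Y_{j-1}$). Making this identity rigorous for random $N_2(T)$, via optional stopping for the likelihood-ratio martingale $(L_n)$, is the technical heart of the lemma.

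Next I would establish concentration of $\log L_{N_2(T)}$ on $\mE$. On $\mE=\{N_2(T)\le N_C\}$ the product has at most $N_C$ factors. Since $\nu,\nu'$ stay in a bounded subinterval of $(-1/2,1/2)$ (we may assume the means stay bounded away from $0$ and $1$, e.g.\ $\Delta\le 1/4$), there are universal constants with $|\log\ell(y)|\le C'|\nu-\nu'|$ and $\mathrm{KL}(f_\nu\|f_{\nu'})\le C'(\nu-\nu')^2$. Hence $\widetilde S_n=\sum_{j\le n}\bigl(\log\ell(Y_j)+\mathrm{KL}(f_\nu\|f_{\nu'})\bigr)$ is a mean-zero martingale under $\Prob_{\nu,\rndm}$ with increments of size $O(|\nu-\nu'|)$, and Doob's maximal inequality combined with Azuma--Hoeffding (equivalently, a union bound over $n\le N_C$ with a Chernoff bound), after absorbing the $O(N_C(\nu-\nu')^2)$ drift, gives
\[
\Prob_{\nu,\rndm}\bigl[\mE\cap\{|\log L_{N_2(T)}|>\epsilon_0\}\bigr]\;\le\;2\exp\!\left(-\,\frac{c\,\epsilon_0^2}{N_C(\nu-\nu')^2}\right).
\]
The defining constraint $|\nu-\nu'|\le C_I/\sqrt{\log(N_C)N_C}\le C_I/\sqrt{N_C}$ forces $N_C(\nu-\nu')^2\le C_I^2$, a quantity free of $\rndm$, $\Delta$, and the algorithm; the same estimate holds under $\Prob_{\nu',\rndm}$ with the roles of the two models swapped (there the drift term has the opposite, harmless sign for the lower tail). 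Choosing the universal constant $C_I$ small enough relative to $\epsilon_0$, both tail probabilities are at most $1/12$.

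Then I would combine the two ingredients. Let $q=\Prob_{\nu,\rndm}[\mE\cap\{|\log L|>\epsilon_0\}]$ and $q'=\Prob_{\nu',\rndm}[\mE\cap\{|\log L|>\epsilon_0\}]$, both $\le 1/12$. Splitting $\Prob_{\nu',\rndm}[\mE]=\Ep_{\nu,\rndm}[L\,\mathbf 1_{\mE\cap\{|\log L|\le\epsilon_0\}}]+\Ep_{\nu,\rndm}[L\,\mathbf 1_{\mE\cap\{|\log L|>\epsilon_0\}}]$ and using $e^{-\epsilon_0}\le L\le e^{\epsilon_0}$ on the first event together with $\Prob_{\nu,\rndm}[\mE]=1/2$, the first term lies in $\bigl[e^{-\epsilon_0}(1/2-q),\,e^{\epsilon_0}/2\bigr]$, and the second term equals $q'\in[0,1/12]$. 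Taking, e.g., $\epsilon_0=1/20$ then yields $\Prob_{\nu',\rndm}[\mE]\le e^{1/20}/2+1/12<2/3$ and $\Prob_{\nu',\rndm}[\mE]\ge e^{-1/20}(1/2-1/12)>1/3$, which is \eqref{ineq_methirds}. All constants ($C_I,\epsilon_0,c,C'$) are universal; in particular $C_I$ does not depend on $\rndm$, as required.

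I expect the main obstacle to be the first two steps: rigorously justifying the likelihood identity and the maximal inequality for the \emph{random} horizon $N_2(T)$ (rather than a fixed sample size), and then verifying that every resulting constant — above all the bound $N_C(\nu-\nu')^2\le C_I^2$ — is genuinely universal and independent of $\rndm$, $\Delta$, and the algorithm. A minor loose end is small $N_C$ (the interval degenerates when $N_C\le 1$), which can be handled by restricting attention to $N_C$ above a universal constant, or equivalently by noting the claim is vacuous otherwise.
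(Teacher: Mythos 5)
Your proposal is correct and follows essentially the same route as the paper's proof: a change of measure on the arm-2 likelihood ratio, concentration of the log-likelihood ratio over the at most $N_C$ consumed arm-2 rewards (the paper uses Hoeffding plus a union bound over $N\le N_C$, which is where its $\sqrt{\log N_C}$ factor in the interval width enters), and then combining with $\Prob_{\nu,\rndm}[\mE]=1/2$ to trap $\Prob_{\nu',\rndm}[\mE]$ in $(1/3,2/3)$. Your treatment is in fact slightly more careful than the paper's on two points it glosses over --- the optional-stopping justification for the random horizon $N_2(T)$ and the contribution of the bad event, which you correctly convert to a probability under $\nu'$ rather than leaving $e^{-L_T}$ unbounded.
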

\begin{proof}[Proof of Lemma \ref{lem_ll}]
In the following, we bound $\Prob_{\nu',\rndm}[\mE]$ by using the change-of-measure argument.
Let the log-likelihood ratio between the models $\nu, \nu'$ be\footnote{On these models, $\mu_1=1/2$, $\mu_2 = 1/2+\nu$ or $1/2+\nu'$.}
\begin{align}
L_t
&= \sum_{n=1}^{N_2(t)} \log\left(
\frac{
r_{2,n}\left(\frac{1}{2}+\nu\right)
+
(1-r_{2,n})\left(\frac{1}{2}-\nu\right)
}{
r_{2,n}\left(\frac{1}{2}+\nu'\right)
+
(1-r_{2,n})\left(\frac{1}{2}-\nu'\right)
}
\right),
\end{align}
where $r_{2,n}$ is the $n$-th reward from arm $2$.
We have
\begin{align}
\Prob_{\nu',\rndm}[\mE]
&=
\Ep_{\nu,\rndm}\left[
\Ind[\mE]e^{-L_T}
\right]. \text{\ \ \ \ (change-of-measure)}
\end{align}
Note that, under $\nu$ the random variable
\[
\log\left(
\frac{
r_{2,n}\left(\frac{1}{2}+\nu\right)
+
(1-r_{2,n})\left(\frac{1}{2}-\nu\right)
}{
r_{2,n}\left(\frac{1}{2}+\nu'\right)
+
(1-r_{2,n})\left(\frac{1}{2}-\nu'\right)
}
\right)
\]
is mean $\dKL(\nu, \nu')$ and bounded by
\[
R =
\max\left(
\left|
\log\left(
\frac{2+\nu}{2+\nu'}
\right)
\right|,
\left|
\log\left(
\frac{2-\nu}{2-\nu'}
\right)
\right|
\right)
= O(|\nu-\nu'|) = O\left(\frac{C_I}{\sqrt{\log(N_C) N_C}}\right).
\]
\colt{Here, $\dKL(p,q)$ is the KL divergence between two Bernoulli distributions with parameters $p,q\in(0,1)$.}
Under $\mE$, $N_2(T) \le N_C$ and $L_T$ is bounded as the max of random variables
\[
L_T \le \max_{N \le N_C} \left(
\sum_{n \le N}
Z_n
\right),
\]
where
\[
Z_n :=
\log\left(
\frac{
r_{2,n}\left(\frac{1}{2}+\nu\right)
+
(1-r_{2,n})\left(\frac{1}{2}-\nu\right)
}{
r_{2,n}\left(\frac{1}{2}+\nu'\right)
+
(1-r_{2,n})\left(\frac{1}{2}-\nu'\right)
}
\right)
\]
is a random variable with its mean $\dKL(\nu, \nu')$ and radius $R$. Hoeffding inequality and union bound over $N=1,2,\dots,N_C$ implies that, with probability at least $1-1/12$ we have
\begin{align}
|L_T|
&\le N_C \dKL(1/2+\nu, 1/2+\nu') + R \sqrt{\log(2 \times 12 \times N_C) N_C / 2}\\
&= O\left(N_C \times \frac{C_I^2}{\log(N_C) N_C}\right) + O\left(\frac{C_I}{\sqrt{\log(N_C) N_C}} \times \sqrt{N_C \log(N_C)} \right) = O(C_I).
\end{align}
Setting an appropriate width $C_I= \Theta(1)$ guarantees that $e^{-L_T} \in [1-1/6, 1+1/6]$ with probability at least $1-1/12$, which, together with the change-of-measure implies \eqref{ineq_methirds}.
\end{proof}
\section{Proofs on Algorithm \ref{alg_rlsr}}
\subsection{Replicability of Algorithm \ref{alg_rlsr}}
\label{sec_rlsr_repr}
We omit the derivation of the nonreplicability bound of Algorithm \ref{alg_rlsr} because it is very similar to that of Algorithm \ref{alg_rse}. The only difference is that the amount of exploration is based on a G-optimal design, but its confidence bound of \eqref{ineq_delerror_lin} suffices to derive the good event that is identical to \eqref{mainevent_karmed}.
\subsection{Regret bound of Algorithm \ref{alg_rlsr}}
This section shows the regret bounds of Theorem \ref{thm_rlse}.
The main difference from Algorithm \ref{alg_rse} is that the number of samples for each arm is $\Nlin_i(p)$ that satisfies $\sum_i \Nlin_i(p) \le \Nlin(p) + K$.
\noindent\textbf{(A) The $O(d)$ regret bound:}
We derive the first regret bound in Theorem \ref{thm_rlse}.
We first derive the distribution-dependent bound.
Similar discussion to Lemma \ref{lem_decision_rse_prob} states that, under $\mGreg$, Line \ref{line_allelim_linear} in Algorithm \ref{alg_rlsr} eliminates all but the best arm by phase $\pstopi{0}+2$, where $\pstopi{0}$ is identical to that of \colt{Algorithm \ref{alg_rse}}.
The regret is bounded as
\begin{align}
\lefteqn{
\Ep[\Regret(T)]
}\\
&\le \Ep[\Ind[\mGreg]\Regret(T)] + O(1)\\
&\le \underbrace{
O\left(\sum_i \Nlin_i(\pstopi{0}+2)\right)
}_{\text{Regret during exploration}}
+ \underbrace{0}_{\text{Regret during exploitation}} + \underbrace{O(1)}_{\text{Regret in the case of $(\mGreg)^c$}}\\
&= O\left( \frac{\sigma^2 d }{ (\eps_{\pstopi{0}})^2} + KP \right) \\
&= O\left( \frac{\sigma^2 d}{\Delta^2} \left(
\log(KTP) + \frac{\log(KP/\rho)}{\rho^2}
\right) + KP \right) \\
&= O\left( \frac{\sigma^2 d}{\Delta^2} \left(\log T + \frac{\log(K (\log T)/\rho)}{\rho^2}\right)
\right),
\end{align}
which is the first regret bound of Theorem \ref{thm_rlse}.
\noindent\textbf{(B) The distribution-independent regret bound:}
Similar discussion as Algorithm \ref{alg_rse} states that, under $\mG$, arm $i$ is eliminated by $\pstopi{i}+2$.
The regret is bounded as
\begin{align}
\Regret(T) \Ind[\mG]
&\le \sum_i \Delta_i N_i(T) + O(1)\\
&\colt{\le} \sum_i \Delta_i \sqrt{\sum_{p \le \pstopi{i}+2} \Nlin_i(p)} \sqrt{N_i(T)} + O(1),
\end{align}
and
\begin{align}
\lefteqn{
\sum_i \Delta_i \sqrt{\sum_{p \le \pstopi{i}+2} \Nlin_i(p)} \sqrt{N_i(T)}
}\\
&\le O(1) \times \sum_i \Delta_i
\sqrt{\frac{d}{\Delta_i^2} \left(\log T + \frac{K^2 \log(K (\log T)/\rho)}{\rho^2}\right)}
\sqrt{N_i(T)}\\
&\text{\ \ \ \ (by  \eqref{ineq_npf_bound_linear})}\\
&\le O(1) \times \sum_i
\sqrt{d \left(\log T + \frac{K^2 \log(K (\log T)/\rho)}{\rho^2}\right)}
\sqrt{N_i(T)} \\
&\le O(1) \times
\add{
\sqrt{d \left(\log T + \frac{K^2 \log(K (\log T)/\rho)}{\rho^2}\right)}
\sqrt{T P d \log\log d}.
}\\
\label{ineq_minimax_cs_lin}
\end{align}
\add{
Here, in the last transformation, we used the standard Cauchy-Schwarz argument with $\sum_i N_i(T) = T$. The factor $P d \log\log d$ is from the number of non-zero elements among $\{N_i(T)\}_i$;
each phase uses an approximated G-optimal design with $O(d \log\log d)$ support, and thus there are $O(P d \log\log d)$ non-zero elements of $N_i(T)$.
}
Therefore,
\begin{align}
\Ep[\Regret(T)]
&\le \Ep[\Regret(T) \Ind[\mG]] + O(1)\\
&=O\left(
\sqrt{d \left(\log T + \frac{K^2 \log(K (\log T)/\rho)}{\rho^2}\right)}
\sqrt{T P d \log\log d}
\right)\\
&\text{\ \ \ \ (by \eqref{ineq_minimax_cs})}\\
&= O\left(
d \sqrt{T (\log T \log\log d) \left(\log T + \frac{K^2 \log(K (\log T)/\rho)}{\rho^2}\right)}
\right),\\
&\text{\ \ \ \ (by $P = O(\log T)$)}
\end{align}
which is the second regret bound of Theorem \ref{thm_rlse}.
\section{Additional Simulations}
\label{sec_additional_simulation}
\subsection{Tradeoff between regret and replicability}
To discuss the tradeoff between regret and replicability, we conduct simulations on the same environment as Section \ref{sec_sim} with different hyperparameters.
Generally, there is a tradeoff between regret and replicability. However, the relation is not necessarily monotone, and thus we report the results for a wide range of hyperparameters. Smaller hyperparameters lead to smaller exploration and thus smaller regret.
The non-monotonicity of the $\rho_{\mathrm{emp}}$ can be explained by the following example:
REC and RSE are phase-based algorithms. For example, assume that parameter $C=0.5$ leads to $90\%$, $10\%$ of runs to stop in phase 2 and 3, respectively. Assume $C=1$ leads to $50\%$, $50\%$ of runs to stop in phase 2 and 3, respectively.
Assume $C=2$ leads to $10\%$, $90\%$ of runs to stop in phase 2 and 3, respectively. In this case, the nonreplicability is largest at $C=1$ (nonreplicability of $0.5$) because it equally splits the runs into two phases, while the nonreplicability is smaller at $C=0.5$ and $C=2$ because most runs stop in one phase.
\begin{figure}[t!]
    \centering
    \begin{minipage}[t]{0.45\textwidth}
         \centering
         \includegraphics[width=\textwidth]{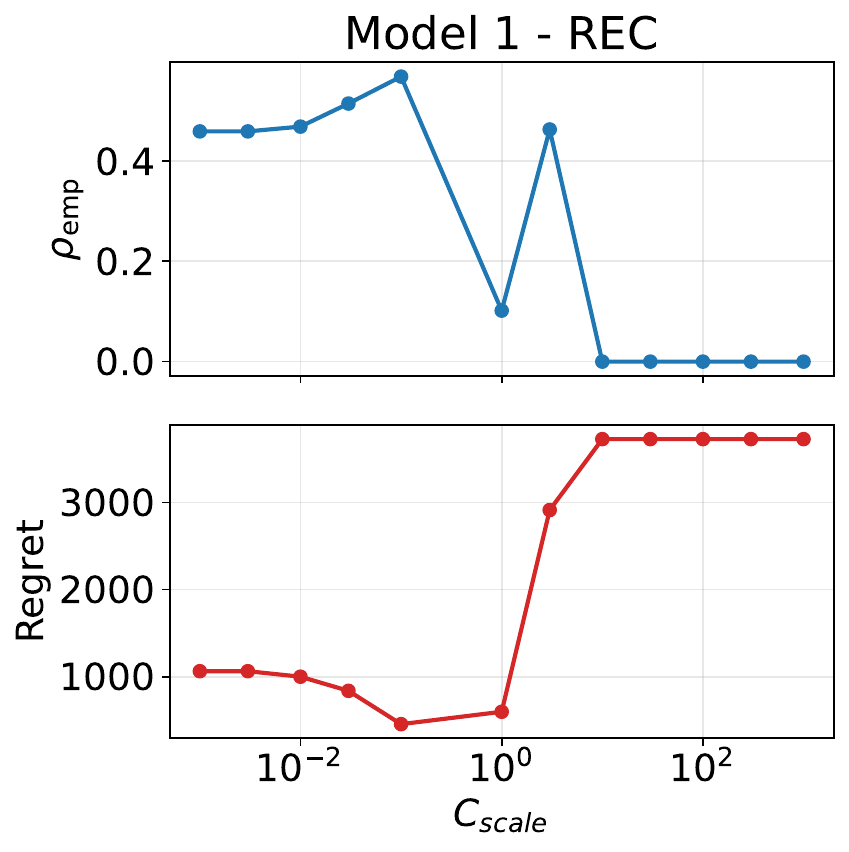}
    \end{minipage}
    \hspace{0.02\textwidth}
    \begin{minipage}[t]{0.45\textwidth}
         \centering
         \includegraphics[width=\textwidth]{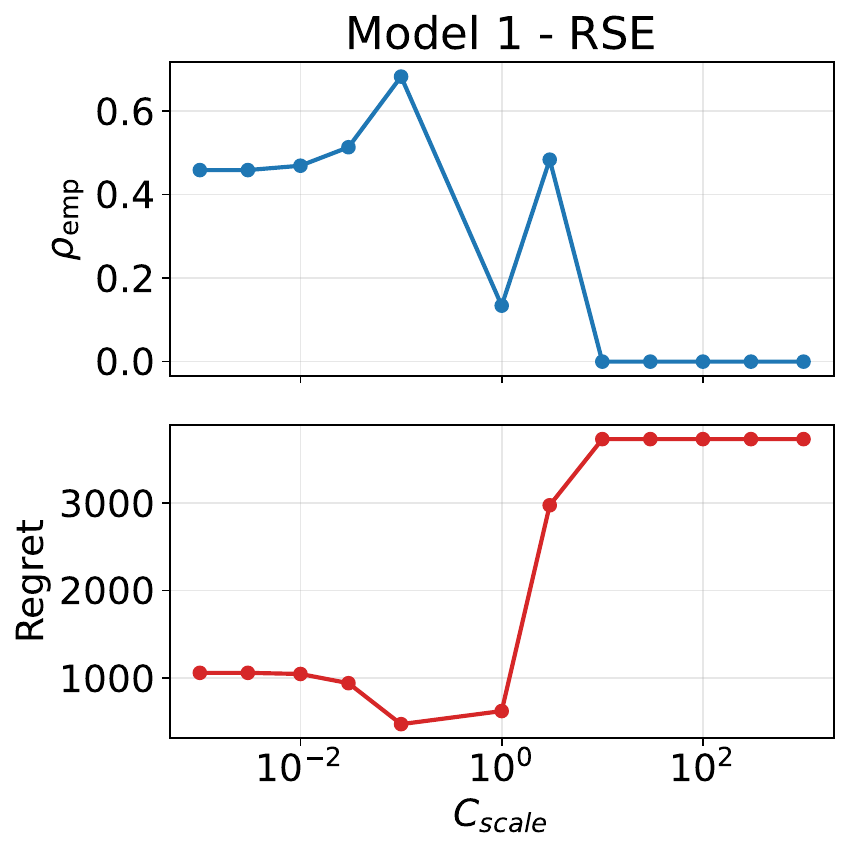}
    \end{minipage}
    \begin{minipage}[t]{0.45\textwidth}
         \centering
         \includegraphics[width=\textwidth]{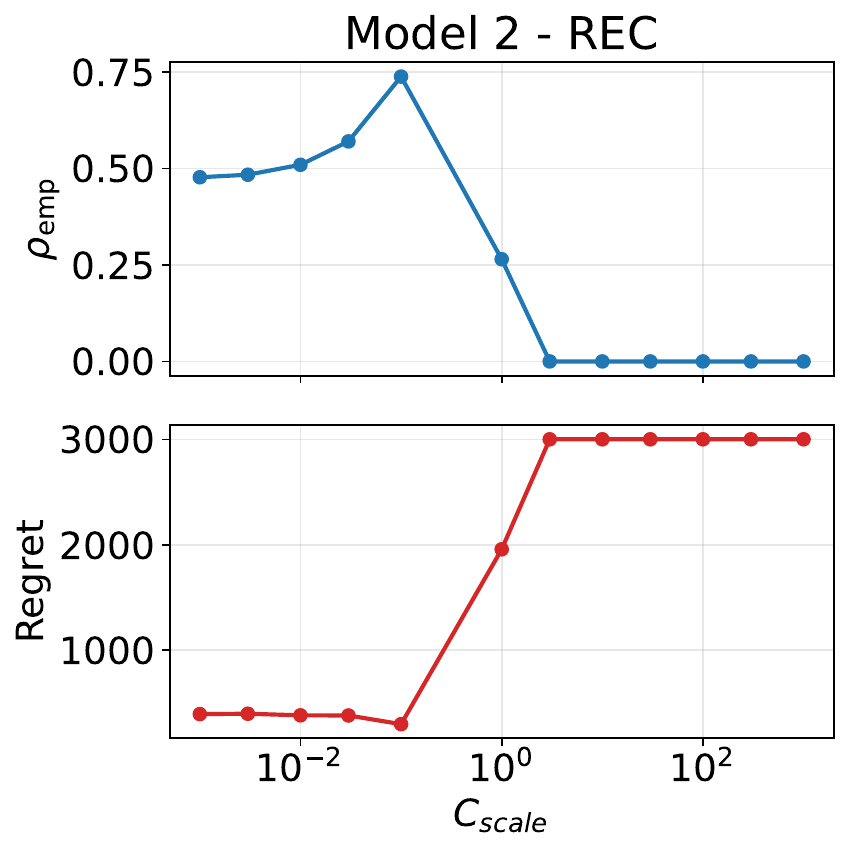}
    \end{minipage}
    \hspace{0.02\textwidth}
    \begin{minipage}[t]{0.45\textwidth}
         \centering
         \includegraphics[width=\textwidth]{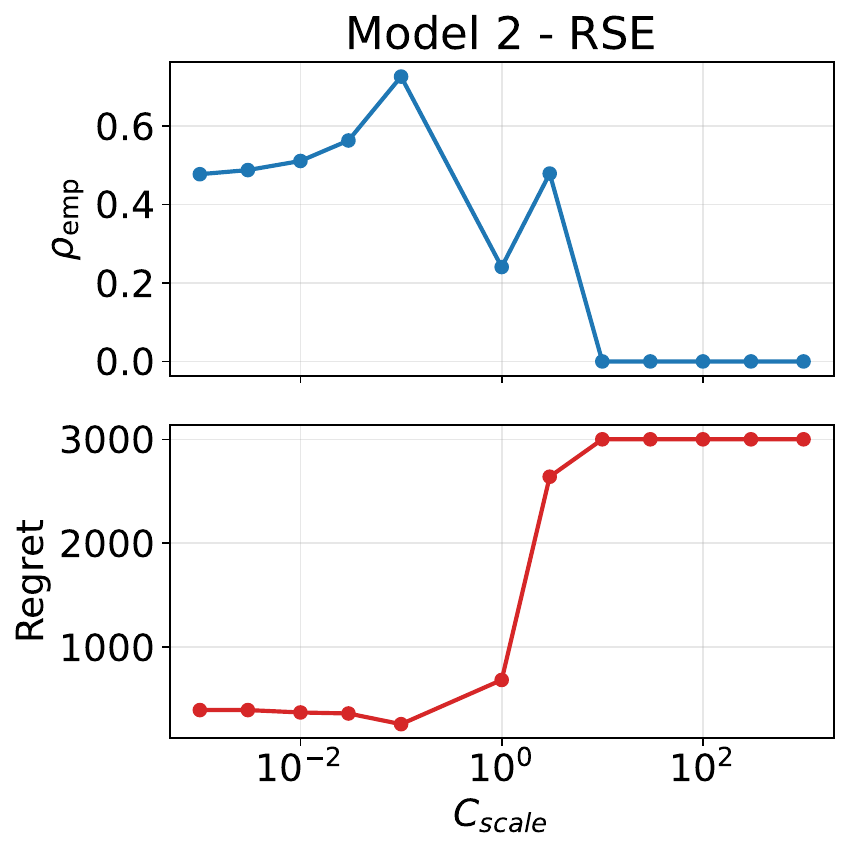}
    \end{minipage}
    \caption{Replicability and Regret with different hyperparameters. The first row is for Model 1 and the second row is for Model 2. The left figure is for REC and the right figure is for RSE. $C_{\mathrm{Scale}}=1$ corresponds to the chosen parameters in the main paper.}
    \label{fig:tuned_varying_rho}
\end{figure}
\subsection{Scalability of algorithms}
To discuss the scalability of algorithms, we conduct simulations on the same environment as Section \ref{sec_sim} with different time horizons $T$.
Table \ref{tab:tuned-varying-t} summarizes the cumulative regret for tuned policies under different horizons $T$. These results show that regret of REC and RSE scales reasonably well with increasing $T$.
\begin{table}
\begin{center}
\caption{Cumulative regret for tuned policies under different horizons $T$.}
\label{tab:tuned-varying-t}
\begin{tabular}{rrrrrr}
\toprule
Model No. & T & RASMAB & REC & RSE & UCB1 \\
\midrule
1 & 1000 & 361.11 & 371.51 & 377.11 & 167.25 \\
1 & 10000 & 1071.26 & 597.17 & 625.84 & 337.75 \\
1 & 100000 & 1132.05 & 611.68 & 626.87 & 482.73 \\
2 & 1000 & 119.47 & 399.15 & 160.17 & 40.62 \\
2 & 10000 & 883.67 & 1956.87 & 683.10 & 114.39 \\
2 & 100000 & 8858.16 & 2541.12 & 709.35 & 207.15 \\
\bottomrule
\end{tabular}
\end{center}
\end{table}
\subsection{Linear bandits}
To discuss the advantage of linear representation (RLSE, Algorithm \ref{alg_rlsr}) over the non-linear one (RSE), we conduct simulations on linear bandits.
We consider a bandit instance with $K=5$, $d=2$, and $\theta = (1, 1/2)$ where mean rewards $\mu_i = \bx_i^\top \bm{\theta}$ are $(1.0, 0.7, 0.7, 0.7, 0.7)$ for $i=1,2,3,4,5$.
We set $\rho = 0.3$ and $T=10^4$.
The noise is generated from the standard Gaussian distribution. The results are shown in Figure \ref{fig:regret_linear}.
RLSE exploits the linear structure and thus achieves a smaller regret than RSE, which does not utilize the linear structure.
\begin{figure}[t!]
    \centering
    \includegraphics[width=0.6\textwidth]{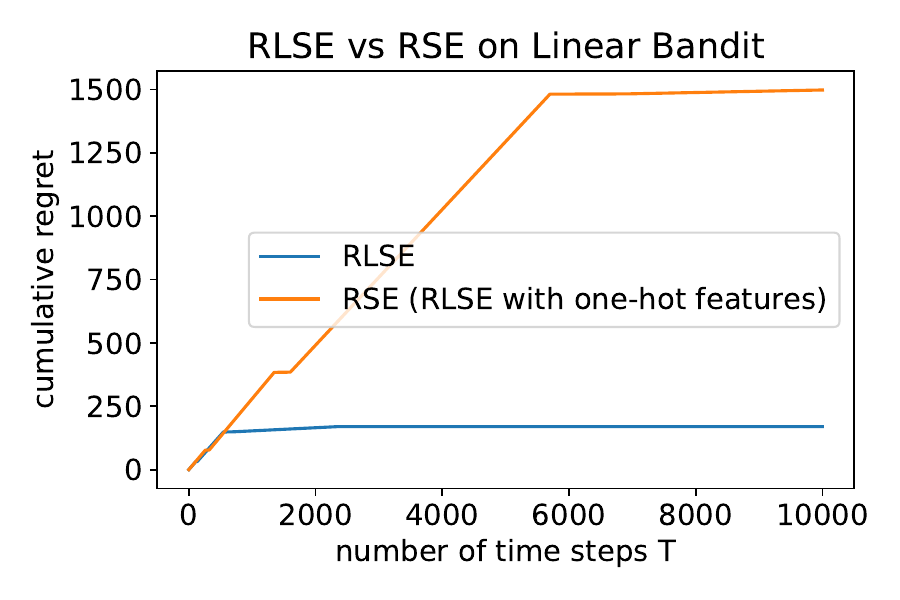}
    \caption{We compare the regret of RLSE and RSE in linear bandits. RLSE is Algorithm \ref{alg_rlsr}. RSE is equivalent to RLSE, except that it uses one-hot encoding. Because RSE does not leverage the linear structure, it suffers from higher regret than RLSE.}
    \label{fig:regret_linear}
\end{figure}
\end{document}